\definecolor{mycolor1}{rgb}{0, 1, 0}
\definecolor{mycolor2}{rgb}{0.157, 0.784, 0.392}
\definecolor{mycolor3}{rgb}{0.294, 0.706, 0.667}
\definecolor{mycolor4}{rgb}{0.392, 0.627, 0.981}
\definecolor{mycolor5}{rgb}{0.627, 0.392, 0.981}
\definecolor{mycolor6}{rgb}{0.706, 0.294, 0.667}
\definecolor{mycolor7}{rgb}{0.784, 0.157, 0.392}
\definecolor{mycolor8}{rgb}{1, 0, 0}
\newcommand{\R}{\mathbb{R}}
\newcommand{\N}{\mathbb{N}}
\newcommand{\Ell}{\mathcal{L}}
\newcommand{\Id}{\mathcal{I}}
\newcommand\scalemath[2]{\scalebox{#1}{\mbox{\ensuremath{\displaystyle #2}}}}
\DeclareMathOperator{\asd}{\scriptstyle{(\otimes)}}
\DeclareMathOperator{\tr}{tr}
\DeclareMathOperator{\sgn}{sgn}
\newtheorem{proposition}{Proposition}
\newtheorem{theorem}[proposition]{Theorem}
\theoremstyle{definition}
\newtheorem{definition}[proposition]{Definition}
\theoremstyle{remark}
\title{Feature Learning Beyond the Edge of Stability}
\author{%
  D\'avid Terj\'ek \\
  Alfr\'ed R\'enyi Institute of Mathematics \\
  Budapest, Hungary \\
  \texttt{dterjek@renyi.hu} \\
}
\begin{document}

\maketitle

\begin{abstract}
We propose a homogeneous multilayer perceptron parameterization with polynomial hidden layer width pattern and analyze its training dynamics under stochastic gradient descent with depthwise gradient scaling in a general supervised learning scenario. We obtain formulas for the first three Taylor coefficients of the minibatch loss during training that illuminate the connection between sharpness and feature learning, providing in particular a soft rank variant that quantifies the quality of learned hidden layer features. Based on our theory, we design a gradient scaling scheme that in tandem with a quadratic width pattern enables training beyond the edge of stability without loss explosions or numerical errors, resulting in improved feature learning and implicit sharpness regularization as demonstrated empirically.
\end{abstract}

\section{Introduction}\label{intro}
While gradient descent in quadratic optimization converges if and only if the learning rate $\eta$ is chosen such that it satisfies the strict inequality $\frac{1}{2} \eta S < 1$ in relation to the sharpness $S$ (i.e., the largest eigenvalue of the Hessian), it was shown in the seminal work of \citet{Cohenetal2021} that this is not the case for training neural networks, which was not the first time deep learning contradicted classical intuition. In particular, \citet{Cohenetal2021} demonstrated that when a neural network is trained with gradient descent, even if this requirement of stability is met at the beginning, there is progressive sharpening as the sharpness (which depends not only on the loss function, but on the neural network as well) rises until reaching the threshold $\frac{1}{2} \eta S = 1$ where it starts to oscillate, which is referred to as the Edge of Stability (EOS). Since then, many works contributed to the understanding of this phenomenon \citep{Aroraetal2022,Damianetal2023,Cohenetal2025} and its analogue for stochastic gradient descent (SGD) \citep{Wuetal2023,Leeetal2023,Andreyevetal2025}. A related line of research \citep{Lietal2019b,Lewkowyczetal2020,Mohtashamietal2022} concerns training with large learning rates that are unstable already at initialization, leading to better feature learning and generalization performance.

These developments led us to investigate the limits of how large we can push the learning rate to benefit from its implicit regularization effects without divergent training. To this end, we consider a multilayer perceptron (MLP) parameterization with a polynomial layer width pattern and a homogeneous activation trained by SGD with layerwise gradient scaling in a general supervised learning scenario. Without any assumptions that would deviate from practice, we express the first $3$ Taylor coefficients of the minibatch loss during training in terms of a few vectors, a matrix and a third order tensor, all of dimension $l+1$ where $l$ is the number of hidden layers in the MLP. The components of these tensors quantify the scale, the internal structure and the pairwise alignment of minibatches of certain high dimensional tensors, such as activations and gradients of the loss with respect to preactivations. Offering a glimpse into the rich interplay between feature learning and sharpness, our theory lets us design a gradient scaling scheme that is easy to compute and demonstrably leads to significant improvements in large learning rate training when used in tandem with the quadratic hidden layer width pattern of our MLP parameterization.

The organization of the rest of the paper is as follows. We conclude \S~\ref{intro} by discussing related works in \S~\ref{related} and listing our contributions in \S~\ref{contribs}. In \S~\ref{main}, we introduce our MLP parameterization, describe the supervised learning scenario with scaled SGD, present and discuss our main result concerning the Taylor coefficients and then propose the gradient scaling formula and demonstrate its empirical benefits. We conclude by discussing the limitations of our work in \S~\ref{conclusion} along with future directions.

\subsection{Related work}\label{related}
The maximum eigenvalue of the Hessian of the loss is referred to as the sharpness. It is well known that during gradient descent (GD), if the learning rate $\eta>0$ is chosen such that sharpness does not exceed $\frac{2}{\eta}$, then the loss decreases monotonically under mild assumptions. \citet{Cohenetal2021} showed that when neural networks are trained using gradient descent, the progressive sharpening phenomenon is present in the sense that sharpness grows until reaching the Edge of Stability (EOS) regime. At the EOS, sharpness hovers above $\frac{2}{\eta}$ while the loss decreases over time in a non-monotonic manner. \citet{Aroraetal2022} presented a theoretical analysis of two settings where GD provably reaches the EOS and then starts decreasing the sharpness by moving in a direction tangent to the manifold of minimizers. \citet{Damianetal2023} theoretically showed that in a simplified setting, gradient descent implicitly regularizes sharpness when it exceeds $\frac{2}{\eta}$, arguing that an analogous mechanism is behind the EOS. Generalizing these arguments, \citet{Cohenetal2025} studied training at the EOS by proposing central flows, which capture the long term behavior of the loss and are therefore resilient to oscillations.

In the kernel regime, \citet{Lewkowyczetal2020} proposed a theory showing that training with large learning rates such that sharpness exceeds $\frac{2}{\eta}$ already at initialization, a catapult phase can be observed instead of lazy training, inducing feature learning and reducing the sharpness below $\frac{2}{\eta}$, enabling the convergence of training after this initial phase. \citet{Zhuetal2023} showed that these catapults occur in the subspace spanned by the top eigenvectors of the neural tangent kernel and proposed an explanation of how this phenomenon leads to better generalization. \citet{Nocietal2024} found that spectral properties of the Hessian such as the sharpness depend heavily on width in the kernel regime but are largely independent of width in the rich regime, suggesting that the benefits of training at the EOS diminish for wide networks in the kernel regime but not in the rich regime.

\citet{Lietal2019b} proved that in a simplified setting, shallow neural networks generalize better when trained with large learning rates instead of small ones. \citet{Mohtashamietal2022} showed that in some nonconvex optimization problems, GD with a large stepsize follows a trajectory that is qualitatively different than with a small stepsize, escaping local minima and converging instead to a global one.

\citet{Wuetal2023} studied the dynamical stability of SGD focusing on the trace and the Frobenius norm of the Hessian, showing that the implicit regularization effect of SGD is stronger for larger learning rates. \citet{Leeetal2023} and \citet{Andreyevetal2025} showed that if a neural network is trained with SGD instead of full batch GD, the observations of \citet{Cohenetal2021} hold with a different notion of sharpness (referred to as the interaction aware sharpness by \citet{Leeetal2023} and as the batch sharpness by \citet{Andreyevetal2025}). \citet{Leeetal2023} studied the joint scaling of the learning rate and the batch size, while \citet{Andreyevetal2025} investigated the relationship between the usual notion of sharpness and the batch sharpness.

\subsection{Contributions}\label{contribs}
We propose
\begin{itemize}
\item a rich regime MLP parameterization with homogeneous activation functions that can exploit polynomial hidden layer width patterns,
\item an expression of the first three Taylor coefficients of the minibatch loss in terms of a few low dimensional tensors that sheds light on the relationship between sharpness and feature learning and
\item a gradient scaling scheme that is cheap to compute and enables training far beyond the EOS when used in tandem with a quadratic width pattern, demonstrably leading to better feature learning and implicit sharpness regularization.
\end{itemize}

\section{Feature learning beyond the Edge of Stability}\label{main}

Consider the following MLP parameterization. Let $l \in \N+1$ be the number of hidden layers, $\R^{m_0}$ the input space and $\Theta = \Theta_{1:l+1} = \prod_{k=1}^{l+1} \Theta_k$ the parameter space with parameter subspaces $\Theta_k = \R^{m_k \times m_{k-1}}$, input dimension $m_0 \in \N+1$, hidden layer dimensions $m_k = (l-k+1)^r m$ for $k \in [1:l]$ with width scale $m \in \N+1$ and width exponent $r \in \N$ and output dimension $m_{l+1} \in \N+1$. We denote parameters as $\theta = \theta_{1:l+1} = [\theta_k : k \in [1:l+1]] \in \Theta$ with layer matrices $\theta_k \in \Theta_k$. Finally, let $a,b \in \R$ and $\phi : \R \to \R$ be the $(a,b)$-ReLU.

\begin{definition}[$(a,b)$-ReLU]
Given $a,b \in \R$, define the $(a,b)$-ReLU $\phi : \R \to \R$ for all $s \in \R$ as $\phi(s) = as + b\vert s \vert$, so that $\phi'(s) = a + b \sgn(s)$ for all $s \in \R \setminus \{ 0 \}$.
\end{definition}
Unless $b=0$, $\phi$ is not differentiable at $s=0$, but any function $\psi : \R \to \R$ such that $\psi(s) = a + b \sgn(s)$ for all $s \in \R \setminus \{ 0 \}$ and $\psi(0) \in [a-b,a+b]$ can serve as its derivative in some suitable generalized sense. By abuse of notation, we define $\phi': \R \to \R$ as $\phi'(s) = a + b \sgn(s)$ for all $s \in \R$, so that $\phi'(0) = a$. Since the set of $s \in \R$ for which $\phi$ is not twice differentiable at $s$ is of Lebesgue measure $0$ and we initialize with a Gaussian distribution, we assume that not hitting such an $s$ during training has probability $1$ and gloss over this technical detail.

We initialize the initial parameter $\theta_0 = [\theta_{0,k} : k \in [1:l+1]] \in \Theta$ as $\theta_0 \sim \mathcal{N}( 0,\sigma^2 m^{-1} \Id_{\Theta} )$ with $\sigma = (a^2 + b^2)^{-\frac{1}{2}}$ at the Edge of Chaos (EOC) by \citet[Lemma~3]{Hayouetal2019}.

The hyperparameters above and the following forward pass defines the MLP $N : \R^{m_0} \times \Theta \to \R^{m_{l+1}}$.

\begin{definition}[Forward pass]
Given an input $x \in \R^{m_0}$ and a parameter $\theta \in \Theta$, define the forward vectors $f_{k,x,\theta} \in \R^{m_k}$ for $k \in [0:l]$ and the output $N_{x,\theta} = N(x,\theta) \in \R^{m_{l+1}}$ recursively as 
\[
f_{0,x} = x,
f_{k,x,\theta} = m_k^{-\frac{1}{2}} \phi\left( m^{\frac{1}{2}} \theta_k f_{k-1,x,\theta} \right) \text{ for } k \in [1:l] \text{ and }
N_{x,\theta} = \theta_{l+1} f_{l,x,\theta}.
\]
\end{definition}
Note that we have $f_{k,x,\theta} = m_k^{-\frac{1}{2}} D_{\phi'( m^{\frac{1}{2}} \theta_k f_{k-1,x,\theta} )} m^{\frac{1}{2}} \theta_k f_{k-1,x,\theta}$ for $k \in [1:l]$.

The naming below is going to be justified soon.
\begin{definition}[Normalized Update Parameterization ($\nu$P)]
Given $l \in \N+$, $m,m_0,m_{l+1} \in \N+1$, $r \in \N$ and $a,b \in \R$, we refer to the MLP parameterization detailed above as the Normalized Update Parameterization or $\nu$P.
\end{definition}
Note that for $r=0$, $\nu$P is equivalent to the celebrated $\mu$P of \citet{Yangetal2021} at the EOC.

Let $Y$ be a target space and $\ell : \R^{m_{l+1}} \times Y \to \R$ a loss function such that $\ell(\cdot,y) : \R^{m_{l+1}} \to \R$ is differentiable for all $y \in Y$. Denote the gradient of $\ell(z,y)$ with respect to $z \in \R^{m_{l+1}}$ as $\nabla \ell(z,y) \in \R^{m_{l+1}}$ for all $y \in Y$.

\begin{definition}[Backward pass]
Given an input $x \in \R^{m_0}$, an output $y \in Y$ and a parameter $\theta \in \Theta$, define the backward vectors $b_{k,x,y,\theta} \in \R^{m_k}$ for $k \in [1:l+1]$ recursively as 
\[
b_{l+1,x,y,\theta} = \nabla \ell(N_{x,\theta}, y) \text{ and }
b_{k,x,y,\theta} = m_k^{-\frac{1}{2}} D_{\phi'\left( m^{\frac{1}{2}} \theta_k f_{k-1,x,\theta} \right)} m^{\frac{1}{2}} \theta_{k+1}^* b_{k+1,x,y,\theta} \text{ for } k \in [l:1].
\]
\end{definition}
Note that we have $b_{k,x,y,\theta} = \nabla_{\theta_k f_{k-1,x,\theta}} \ell(N_{x,\theta}, y)$ for $k \in [1:l]$.

We are going to analyze the following supervised learning problem. Let $\{ (x_{i,t}, y_{i,t}) : i \in [1:n] \}$ for $t \in \N$ be a sequence of minibatches of size $n \in \N + 1$ drawn i.i.d. from a dataset $\mu \in \mathcal{P}(\R^{m_0} \times Y)$ and define the minibatch losses $\Ell_t : \Theta \to \R$ as 
\[
\Ell_t(\theta) = \frac{1}{n} \sum_{i=1}^n \ell(N_{x_i,\theta}, y_i) \text{ for } t \in \N \text{ and } \theta \in \Theta.
\]
For a learning rate $\eta > 0$ and gradient scales $\xi_{k,t} > 0$ for $k \in [1:l+1]$ and $t \in \N$, we perform SGD on an initial parameter $\theta_0 = [\theta_{k,0} : k \in [1:l+1]] \in \Theta$ with layerwise scaling by letting 
\[
\theta_{t+1} = \theta_t - \eta \left[ \xi_{k,t} \nabla_{\theta_{k,t}} \Ell_t(\theta_t) : k \in [1:l+1] \right] \text{ for } t \in \N.
\]
Denoting $\Xi_t = \bigoplus_{k=1}^{l+1} \xi_{k,t} \Id_{\Theta_k} \in L(\Theta, \Theta)$, note that we have $\theta_{t+1} = \theta_t - \eta \Xi_t \nabla_{\theta_t} \Ell_t(\theta_t)$, i.e., the above amounts to preconditioned SGD.

For all $t \in \N$ and $i \in [1:n]$, denote $f_{k,i,t} = f_{k,x_i,\theta_t}$ for $k \in [0:l]$, $N_{i,t} = N_{x_i,\theta_t}$, $\ell_{i,t} = \ell(N_{i,t}, y_i)$ and $b_{k,i,t} = b_{k,x_i,y_i,\theta_t}$ for $k \in [1:l+1]$. Denoting the preactivation coordinates $\mathbbm{f}_{k,i,t,j} = \langle m^{\frac{1}{2}} \theta_{k,t,j}, f_{k-1,i,t} \rangle$ and the prederivative coordinates $\mathbbm{b}_{k,i,t,j} = \langle m^{\frac{1}{2}} \theta_{k+1,t,j}^*, b_{k+1,i,t} \rangle$ for $k \in [1:l]$, $i \in [1:n]$, $t \in \N$ and $j \in [1:m_k]$, note that $m_k^{\frac{1}{2}} f_{k,i,t,j} = \phi'(\mathbbm{f}_{k,i,t,j}) \mathbbm{f}_{k,i,t,j}$ and $m_k^{\frac{1}{2}} b_{k,i,t,j} = \phi'(\mathbbm{f}_{k,i,t,j}) \mathbbm{b}_{k,i,t,j}$ for $k \in [1:l]$. Denoting the initial parameter rows $\bbtheta_{k,j} = m^{\frac{1}{2}} \theta_{k,0,j} \in \R^{m_{k-1}}$ and columns $\bbtheta_{k,j}^* = m^{\frac{1}{2}} \theta_{k+1,0,j}^* \in \R^{m_{k+1}}$ for $k \in [1:l]$ and $j \in [1:m_k]$ and the learned rows $\mathbb{\Delta}_{k,t,j} = m_k^{\frac{1}{2}} \Delta_{k,t,j} \in \R^{m_{k-1}}$ and columns $\mathbb{\Delta}_{k,t,j}^* = m_k^{\frac{1}{2}} \Delta_{k+1,t,j}^* \in \R^{m_{k+1}}$ for $k \in [1:l]$, $t \in \N$ and $j \in [1:m_k]$ where we denoted the cumulative updates $\Delta_{k,t} = \eta^{-1} (\theta_{k,t} - \theta_{k,0}) \in \Theta_k$ for $k \in [1:l+1]$ and $t \in \N$, we then have
\[
\scalemath{0.87}{
\mathbb{\Delta}_{k,t,j}
= \sum_{t'=0}^{t-1} \xi_{k,t'} \frac{1}{n} \sum_{i'=1}^n \phi'(\mathbbm{f}_{k,i',t',j}) \mathbbm{b}_{k,i',t',j} f_{k-1,i',t'},
\quad
\mathbb{\Delta}_{k,t,j}^*
= \sum_{t'=0}^{t-1} \xi_{k+1,t'} \frac{1}{n} \sum_{i'=1}^n \phi'(\mathbbm{f}_{k,i',t',j}) \mathbbm{f}_{k,i',t',j} b_{k+1,i',t'},
}
\]
\[
\scalemath{0.95}{
\mathbbm{f}_{k,i,t,j}
= \langle \bbtheta_{k,j}, f_{k-1,i,t} \rangle - (l-k+1)^{-\frac{r}{2}} \eta \langle \mathbb{\Delta}_{k,t,j}, f_{k-1,i,t} \rangle
\text{ and }
}
\]
\[
\scalemath{0.95}{
\mathbbm{b}_{k,i,t,j}
= \langle \bbtheta_{k,j}^*, b_{k+1,i,t} \rangle - (l-k+1)^{-\frac{r}{2}} \eta \langle \mathbb{\Delta}_{k,t,j}^*, b_{k+1,i,t} \rangle.
}
\]
This justifies the naming of $\nu$P as the contribution of the learned rows and columns to the evolution of preactivations and prederivatives is normalized according to depth, with the most dampening in the first hidden layer and no dampening in the final hidden layer.

For $i \in [1:n]$ and $t \in \N$, define the cross layer adjoint Jacobians 
\[
\scalemath{0.94}{
j_{k_1,k_2,i,t} = (\partial_{\theta_{k_1,t} f_{k_1-1,i,t}} f_{k_2,i,t})^* \in \R^{m_{k_1} \times m_{k_2}}
\text{ for } k_1 \leq k_2 \in [1:l],
}
\]
the cross layer Hessians
\[
\scalemath{0.94}{
h_{k_1,k_2,i,t} = \partial_{\theta_{k_2,t} f_{k_2-1,i,t}} \nabla_{\theta_{k_1,t} f_{k_1-1,i,t}} \ell_{i,t} \in \R^{m_{k_1} \times m_{k_2}} \text{ for } k_1,k_2 \in [1:l+1]
}
\]
and the cross layer Tressians 
\[
\scalemath{0.94}{
z_{k_1,k_2,k_3,i,t} = \partial_{\theta_{k_3,t} f_{k_2-1,i,t}} \partial_{\theta_{k_2,t} f_{k_2-1,i,t}} \nabla_{\theta_{k_1,t} f_{k_1-1,i,t}} \ell_{i,t} \in \R^{m_{k_1} \times m_{k_2} \times m_{k_3}}
\text{ for } k_1,k_2,k_3 \in [1:l+1].
}
\]
Denoting $\nabla_t = [\nabla_{\theta_{k,t}} \Ell_t(\theta_t) : k \in [1:l+1]] \in \Theta$, $\partial\nabla_t = [\partial_{\theta_{k_2,t}}\nabla_{\theta_{k_1,t}} \Ell_t(\theta_t) : k_1,k_2 \in [1:l+1]] \in \Theta^{\otimes 2}$ and $\partial\partial\nabla_t = [\partial_{\theta_{k_3,t}}\partial_{\theta_{k_2,t}}\nabla_{\theta_{k_1,t}} \Ell_t(\theta_t) : k_1,k_2,k_3 \in [1:l+1]] \in \Theta^{\otimes 3}$, by a third order Taylor approximation of $\Ell_t$ at $\theta_t$ we have $\Ell_t(\theta_{t+1}) - \Ell_t(\theta_t) \approx \dot{\Ell}_t$ with
\[
\scalemath{0.95}{
\dot{\Ell}_t
= - \eta \langle \Xi_t \nabla_t, \nabla_t \rangle + \frac{1}{2} \eta^2 \langle (\Xi_t \nabla_t)^\otimes 2, \partial\nabla_t \rangle - \frac{1}{6} \eta^3 \langle (\Xi_t \nabla_t)^\otimes 3, \partial\partial\nabla_t \rangle.
}
\]
We refer to $S_t = \frac{\langle (\Xi_t \nabla_t)^\otimes 2, \partial\nabla_t \rangle}{\langle \Xi_t \nabla_t, \nabla_t \rangle}$ as the effective sharpness and $\frac{1}{2} \eta S_t = 1$ as the effective EOS. For unscaled SGD, $S_t$ is referred to as the interaction aware sharpness by \citet{Leeetal2023} and as the batch sharpness by \cite{Andreyevetal2025}. Note that while $- \eta \langle \Xi_t \nabla_t, \nabla_t \rangle + \frac{1}{2} \eta^2 \langle (\Xi_t \nabla_t)^\otimes 2, \partial\nabla_t \rangle > 0$ if $\frac{1}{2} \eta S_t > 1$, the loss can still decrease due to higher order terms.

For all $t \in \N$, define the Gram matrices 
\[
\scalemath{0.9}{
F_{k,t} = [\langle f_{k,i_1,t}, f_{k,i_2,t} \rangle : i_1,i_2 \in [1:n]] \text{ for } k \in [0:l],
}
\]
\[
\scalemath{0.9}{
B_{k,t} = [\langle b_{k,i_1,t}, b_{k,i_2,t} \rangle : i_1,i_2 \in [1:n]] \text{ for } k \in [1:l+1],
}
\]
\[
\scalemath{0.9}{
H_{k_1,k_2,t} = [\langle h_{k_1,k_2,i_1,t}, h_{k_1,k_2,i_2,t} \rangle : i_1,i_2 \in [1:n]] \text{ for } k_1,k_2 \in [1:l+1],
}
\]
\[
\scalemath{0.9}{
J_{k_1,k_2,t} = [\langle j_{k_1,k_2,i_1,t}, j_{k_1,k_2,i_2,t} \rangle : i_1,i_2 \in [1:n]]\text{ for } k_1 \leq k_2 \in [1:l+1] \text{ and }
}
\]
\[
\scalemath{0.9}{
Z_{k_1,k_2,k_3,t} = [\langle z_{k_1,k_2,k_3,i_1,t}, z_{k_1,k_2,k_3,i_2,t} \rangle : i_1,i_2 \in [1:n]] \text{ for } k_1,k_2,k_3 \in [1:l+1].
}
\]
\begin{definition}[Soft rank]
Given a Gram matrix $X \in \R^{n \times n}$, let $R(X) = \frac{\tr(X)^2}{\tr(X X)} \in [1,n]$.
\end{definition}
In some sense, $R(X)$ quantifies the geometry of the set of tensors the inner products of which $X$ consists of. In particular, it is maximized if all vectors have the same norm and are orthogonal to each other. As an orthogonal dataset is the easiest to learn from, it is desirable for Gram matrices of hidden layer activations to have high soft rank, giving a measure of quality for learned hidden layer features.

Abusing notation for brevity, we define for all $t \in \N$ and the appropriate depthwise indices the tensors
\[
\scalemath{0.7}{
\binom{H_{k_1,k_2,t}}{B_{k_1,t}, B_{k_2,t}}
= \left[ \langle b_{k_1,i_1,t} \otimes b_{k_2,i_2,t}, h_{k_1,k_2,i_3,t} \rangle : i_1,i_2,i_3 \in [1:n] \right] \in \R^{n \times n \times n},
}
\]
\[
\scalemath{0.7}{
\binom{F_{k_1,t},F_{k_2,t}}{F_{k_1,t}, F_{k_2,t}}
= \left[ \langle f_{k_1,i_1,t} \otimes f_{k_2,i_2,t}, f_{k_1,i_3,t} \otimes f_{k_2,i_3,t} \rangle : i_1,i_2,i_3 \in [1:n] \right] \in \R^{n \times n \times n},
}
\]
\[
\scalemath{0.7}{
\binom{J_{k_1,k_2,t}}{B_{k_1,t}, F_{k_2,t}}
= \left[ \langle b_{k_1,i_1,t} \otimes f_{k_2,i_2,t}, j_{k_1,k_2,i_3,t} \rangle : i_1,i_2,i_3 \in [1:n] \right] \in \R^{n \times n \times n},
}
\]
\[
\scalemath{0.7}{
\binom{F_{k_1,t},B_{k_2,t}}{F_{k_1,t}, B_{k_2,t}}
= \left[ \langle f_{k_1,i_1,t} \otimes b_{k_2,i_2,t}, f_{k_1,i_3,t} \otimes b_{k_2,i_3,t} \rangle : i_1,i_2,i_3 \in [1:n] \right] \in \R^{n \times n \times n},
}
\]
\[
\scalemath{0.7}{
\binom{Z_{k_1,k_2,k_3,t}}{B_{k_1,t},B_{k_2,t},B_{k_3,t}} 
= \left[ \langle b_{k_1,i_1,t} \otimes b_{k_2,i_2,t} \otimes b_{k_3,i_3,t}, z_{k_1,k_2,k_3,i_4,t} \rangle : i_1,i_2,i_3,i_4 \in [1:n] \right] \in \R^{n \times n \times n \times n},
}
\]
\[
\scalemath{0.7}{
\binom{F_{k_1,t},F_{k_2,t},F_{k_3,t}}{F_{k_1,t}, F_{k_2,t}, F_{k_3,t}} 
= \left[ \langle f_{k_1,i_1,t} \otimes f_{k_2,i_2,t} \otimes f_{k_3,i_3,t}, f_{k_1,i_4,t} \otimes f_{k_2,i_4,t} \otimes f_{k_3,i_4,t} \rangle : i_1,i_2,i_3,i_4 \in [1:n] \right] \in \R^{n \times n \times n \times n},
}
\]
\[
\scalemath{0.7}{
\binom{F_{k_1,t},H_{k_2,k_3,t}}{F_{k_1,t}, B_{k_2,t}, B_{k_3,t}} 
= \left[ \langle f_{k_1,i_1,t} \otimes b_{k_2,i_2,t} \otimes b_{k_3,i_3,t}, f_{k_1,i_4,t} \otimes h_{k_2,k_3,i_4,t} \rangle : i_1,i_2,i_3,i_4 \in [1:n] \right] \in \R^{n \times n \times n \times n},
}
\]
\[
\scalemath{0.7}{
\binom{F_{k_1,t},J_{k_2,k_3,t}}{F_{k_1,t}, B_{k_2,t}, F_{k_3,t}} 
= \left[ \langle f_{k_1,i_1,t} \otimes b_{k_2,i_2,t} \otimes f_{k_3,i_3,t}, f_{k_1,i_4,t} \otimes j_{k_2,k_3,i_4,t} \rangle : i_1,i_2,i_3,i_4 \in [1:n] \right] \in \R^{n \times n \times n \times n} \text{ and }
}
\]
\[
\scalemath{0.7}{
\binom{B_{k_1,t},J_{k_2,k_3,t}}{B_{k_1,t}, B_{k_2,t}, F_{k_3,t}} 
= \left[ \langle b_{k_1,i_1,t} \otimes b_{k_2,i_2,t} \otimes f_{k_3,i_3,t}, b_{k_1,i_4,t} \otimes j_{k_2,k_3,i_4,t} \rangle : i_1,i_2,i_3,i_4 \in [1:n] \right] \in \R^{n \times n \times n \times n}.
}
\]

In the following, $\Vert X \Vert$ refers to the Euclidean norm of the tensor $X$. The cosine of a pair of tensors $X_1,X_2$ is defined as $\cos(X_1,X_2) = \Vert X_1 \Vert^{-1} \Vert X_2 \Vert^{-1} \langle X_1,X_2 \rangle \in [-1,1]$ where $\langle X_1,X_2 \rangle$ is the usual dot product. The cosine of a triplet of Gram matrices $X_1,X_2,X_3 \in \R^{n \times n}$ is going make an appearance, which is defined as $\cos(X_1,X_2,X_3) = \Vert X_1 \Vert_3^{-1} \Vert X_2 \Vert_3^{-1} \Vert X_3 \Vert_3^{-1} \tr(X_1 X_2 X_3) \in [0,1]$ with $\Vert X \Vert_3 = (\sum_{i_1=1}^n \sum_{i_2=1}^n \vert X_{i_1,i_2} \vert^3)^{\frac{1}{3}}$. We now state our main result, expressing the three Taylor coefficients above in terms of a few low dimensional tensors of dimension $l+1$. Its proof along with other results can be found in Appendix~\ref{proofs}.

\begin{theorem}[Taylor Coefficients]\label{thm:taylor_coeffs}
For all $t \in \N$, we have 
\[
\scalemath{0.7}{
\langle \Xi_t \nabla_t, \nabla_t \rangle = \langle D_{\xi_t} \tau_t, T^{(1)}_t \rangle,
\quad
\langle (\Xi_t \nabla_t)^\otimes 2, \partial\nabla_t \rangle = \langle (D_{\xi_t} \tau_t)^{\otimes 2}, T^{(2)}_t \rangle
\quad \text{ and } \quad
\langle (\Xi_t \nabla_t)^\otimes 3, \partial\partial\nabla_t \rangle = \langle (D_{\xi_t} \tau_t)^{\otimes 3}, T^{(3)}_t \rangle
}
\]
with $\xi_t, \tau_t, T^{(1)}_t \in \R^{l+1}$, $T^{(2)}_t \in \R^{(l+1) \times (l+1)}$ and $T^{(3)}_t \in \R^{(l+1) \times (l+1) \times (l+1)}$ defined as 
\[
\scalemath{0.7}{
\xi_t = [\xi_{k,t} : k \in [1:l+1]],
\quad
\tau_t = \left[ \frac{ \tr(\frac{1}{n} F_{k-1,t})^{\frac{1}{2}} \tr(\frac{1}{n} B_{k,t})^{\frac{1}{2}} }{ R(F_{k-1,t})^{\frac{1}{4}} R(B_{k,t})^{\frac{1}{4}} } : k \in [1:l+1] \right],
}
\]
\[
\scalemath{0.7}{
T^{(1)}_t = \left[ \frac{ \tr(\frac{1}{n} F_{k-1,t})^{\frac{1}{2}} \tr(\frac{1}{n} B_{k,t})^{\frac{1}{2}} }{ R(F_{k-1,t})^{\frac{1}{4}} R(B_{k,t})^{\frac{1}{4}} } \cos(F_{k-1,t}, B_{k,t}) : k \in [1:l+1] \right],
}
\]
\begin{multline*}
\scalemath{0.7}{
T^{(2)}_{t,k,k}
= \frac{ \tr(\frac{1}{n} F_{k-1,t}^{\circ 2})^{\frac{1}{2}} \tr(\frac{1}{n} H_{k,k,t})^{\frac{1}{2}} }{ R(F_{k-1,t}^{\circ 2})^{\frac{1}{4}} R(H_{k,k,t})^{\frac{1}{4}} } 
\cos\left( \binom{H_{k,k,t}}{B_{k,t},B_{k,t}}, \binom{F_{k-1,t},F_{k-1,t}}{F_{k-1,t},F_{k-1,t}} \right)
} \\ \scalemath{0.7}{
\cos\left( \frac{1}{n} \sum_{i=1}^n h_{k,k,i,t}^{\otimes 2}, \frac{1}{n^2} \sum_{i_1=1}^n \sum_{i_2=1}^n (b_{k,i_1,t} \otimes b_{k,i_2,t})^{\otimes 2} \right)^{\frac{1}{2}} 
\cos\left( \frac{1}{n} \sum_{i=1}^n f_{k-1,i,t}^{\otimes 4}, \frac{1}{n^2} \sum_{i_1=1}^n \sum_{i_2=1}^n (f_{k-1,i_1,t} \otimes f_{k-1,i_2,t})^{\otimes 2} \right)^{\frac{1}{2}} \text{ for } k \in [1:l+1],
}
\end{multline*}
\begin{multline*}
\scalemath{0.7}{
T^{(2)}_{t,k_1,k_2}
= \frac{ \tr(\frac{1}{n} F_{k_1-1,t}^{\circ 2})^{\frac{1}{4}} \tr(\frac{1}{n} F_{k_2-1,t}^{\circ 2})^{\frac{1}{4}} \tr(\frac{1}{n} H_{k_1,k_2,t})^{\frac{1}{2}} }{ R(F_{k_1-1,t}^{\circ 2})^{\frac{1}{8}} R(F_{k_2-1,t}^{\circ 2})^{\frac{1}{8}} R(H_{k_1,k_2,t})^{\frac{1}{4}} } 
\cos\left( \binom{H_{k_1,k_2,t}}{B_{k_1,t}, B_{k_2,t}}, \binom{F_{k_1-1,t},F_{k_2-1,t}}{F_{k_1-1,t},F_{k_2-1,t}} \right) 
\cos(F_{k_1-1,t}^{\circ 2}, F_{k_2-1,t}^{\circ 2})^{\frac{1}{4}}
} \\ \scalemath{0.7}{
\cos\left( \frac{1}{n} \sum_{i=1}^n h_{k_1,k_2,i,t}^{\otimes 2}, \frac{1}{n^2} \sum_{i_1=1}^n \sum_{i_2=1}^n (b_{k_1,i_1,t} \otimes b_{k_2,i_2,t})^{\otimes 2} \right)^{\frac{1}{2}} 
\cos\left( \frac{1}{n} \sum_{i=1}^n (f_{k_1-1,i,t} \otimes f_{k_2-1,i,t})^{\otimes 2}, \frac{1}{n^2} \sum_{i_1=1}^n \sum_{i_2=1}^n (f_{k_1-1,i_1,t} \otimes f_{k_2-1,i_2,t})^{\otimes 2} \right)^{\frac{1}{2}} 
}
\end{multline*}
\begin{multline*}
\scalemath{0.7}{
+ \frac{ \tr(\frac{1}{n} F_{k_1-1,t}^{\circ 2})^{\frac{1}{4}} \tr(\frac{1}{n} B_{k_2,t}^{\circ 2})^{\frac{1}{4}} \tr(\frac{1}{n} J_{k_1,k_2-1,t})^{\frac{1}{2}} }{ R(F_{k_1-1,t}^{\circ 2})^{\frac{1}{8}} R(B_{k_2,t}^{\circ 2})^{\frac{1}{8}} R(J_{k_1,k_2-1,t})^{\frac{1}{4}} } 
\cos\left( \binom{J_{k_1,k_2-1,t}}{B_{k_1,t},F_{k_2-1,t}}, \binom{F_{k_1-1,t},B_{k_2,t}}{F_{k_1-1,t},B_{k_2,t}} \right) 
\cos(F_{k_1-1,t}^{\circ 2}, B_{k_2,t}^{\circ 2})^{\frac{1}{4}}
} \\ \scalemath{0.7}{
\cos\left( \frac{1}{n} \sum_{i=1}^n j_{k_1,k_2-1,i,t}^{\otimes 2}, \frac{1}{n^2} \sum_{i_1=1}^n \sum_{i_2=1}^n (b_{k_1,i_1,t} \otimes f_{k_2-1,i_2,t})^{\otimes 2} \right)^{\frac{1}{2}} 
\cos\left( \frac{1}{n} \sum_{i=1}^n (f_{k_1-1,i,t} \otimes b_{k_2,i,t})^{\otimes 2}, \frac{1}{n^2} \sum_{i_1=1}^n \sum_{i_2=1}^n (f_{k_1-1,i_1,t} \otimes b_{k_2,i_2,t})^{\otimes 2} \right)^{\frac{1}{2}} 
} \\ \scalemath{0.7}{
\text{ for } k_1 < k_2 \in [1:l+1] \text{ and } T^{(2)}_{t,k_1,k_2} = T^{(2)}_{t,k_2,k_1} \text{ for } k_1 > k_2 \in [1:l+1],
}
\end{multline*}
\begin{multline*}
\scalemath{0.7}{
T^{(3)}_{t,k,k,k}
= \frac{ \tr(\frac{1}{n} F_{k-1,t}^{\circ 3})^{\frac{1}{2}} \tr(\frac{1}{n} Z_{k,k,k,t})^{\frac{1}{2}} }{ R(F_{k-1,t}^{\circ 3})^{\frac{1}{4}} R(Z_{k,k,k,t})^{\frac{1}{4}} } 
\cos\left( \binom{Z_{k,k,k,t}}{B_{k,t},B_{k,t},B_{k,t}}, \binom{F_{k-1,t},F_{k-1,t},F_{k-1,t}}{F_{k-1,t},F_{k-1,t},F_{k-1,t}} \right) 
} \\ \scalemath{0.7}{
\cos\left( \frac{1}{n} \sum_{i=1}^n z_{k,k,k,i,t}^{\otimes 2}, \frac{1}{n^3} \sum_{i_1=1}^n \sum_{i_2=1}^n \sum_{i_3=1}^n (b_{k,i_1,t} \otimes b_{k,i_2,t} \otimes b_{k,i_3,t})^{\otimes 2} \right)^{\frac{1}{2}} 
} \\ \scalemath{0.7}{
\cos\left( \frac{1}{n} \sum_{i=1}^n f_{k-1,i,t}^{\otimes 6}, \frac{1}{n^3} \sum_{i_1=1}^n \sum_{i_2=1}^n \sum_{i_3=1}^n (f_{k-1,i_1,t} \otimes f_{k-1,i_2,t} \otimes f_{k-1,i_3,t})^{\otimes 2} \right)^{\frac{1}{2}} \text{ for } k \in [1:l+1],
}
\end{multline*}
\begin{multline*}
\scalemath{0.7}{
T^{(3)}_{t,k_1,k_1,k_2}
= T^{(3)}_{t,k_1,k_2,k_1}
= T^{(3)}_{t,k_2,k_1,k_1}
= \frac{ \tr(\frac{1}{n} F_{k_1-1,t}^{\circ 3})^{\frac{1}{3}} \tr(\frac{1}{n} F_{k_2-1,t}^{\circ 3})^{\frac{1}{6}} \tr(\frac{1}{n} Z_{k_1,k_1,k_2,t})^{\frac{1}{2}} }{ R(F_{k_1-1,t}^{\circ 3})^{\frac{1}{6}} R(F_{k_2-1,t}^{\circ 3})^{\frac{1}{12}} R(Z_{k_1,k_1,k_2,t})^{\frac{1}{4}} } 
} \\ \scalemath{0.7}{
\cos\left( \binom{Z_{k_1,k_1,k_2,t}}{B_{k_1,t},B_{k_1,t},B_{k_2,t}}, \binom{F_{k_1-1,t},F_{k_1-1,t},F_{k_2-1,t}}{F_{k_1-1,t},F_{k_1-1,t},F_{k_2-1,t}} \right) 
\cos(F_{k_1-1,t}^{\circ 2}, F_{k_1-1,t}^{\circ 2}, F_{k_2-1,t}^{\circ 2})^{\frac{1}{4}} 
} \\ \scalemath{0.7}{
\cos\left( \frac{1}{n} \sum_{i=1}^n z_{k_1,k_1,k_2,i,t}^{\otimes 2}, \frac{1}{n^3} \sum_{i_1=1}^n \sum_{i_2=1}^n \sum_{i_3=1}^n (b_{k_1,i_1,t} \otimes b_{k_1,i_2,t} \otimes b_{k_2,i_3,t})^{\otimes 2} \right)^{\frac{1}{2}} 
} \\ \scalemath{0.7}{
\cos\left( \frac{1}{n} \sum_{i=1}^n (f_{k_1-1,i,t}^{\otimes 2} \otimes f_{k_2-1,i,t})^{\otimes 2}, \frac{1}{n^3} \sum_{i_1=1}^n \sum_{i_2=1}^n \sum_{i_3=1}^n (f_{k_1-1,i_1,t} \otimes f_{k_1-1,i_2,t} \otimes f_{k_2-1,i_3,t})^{\otimes 2} \right)^{\frac{1}{2}}
}
\end{multline*}
\begin{multline*}
\scalemath{0.7}{
+ 2 \frac{ \tr(\frac{1}{n} F_{k_1-1,t}^{\circ 2})^{\frac{1}{2}} \tr(\frac{1}{n} H_{k_1,k_2,t}^{\circ 2})^{\frac{1}{4}} \tr(\frac{1}{n} J_{k_1,k_2-1,t}^{\circ 2})^{\frac{1}{4}} }{ R(F_{k_1-1,t}^{\circ 2})^{\frac{1}{4}} R(H_{k_1,k_2,t}^{\circ 2})^{\frac{1}{8}} R(J_{k_1,k_2-1,t}^{\circ 2})^{\frac{1}{8}} } 
} \\ \scalemath{0.7}{
\cos\left( \binom{F_{k_1-1,t},H_{k_1,k_2,t}}{F_{k_1-1,t},B_{k_1,t},B_{k_2,t}}, 
\binom{F_{k_1-1,t},J_{k_1,k_2-1,t}}{B_{k_1,t},F_{k_1-1,t},F_{k_2-1,t}} \right) 
\cos(F_{k_1-1,t}^{\circ 2}, H_{k_1,k_2,t}^{\circ 2})^{\frac{1}{4}} 
\cos(F_{k_1-1,t}^{\circ 2}, J_{k_1,k_2-1,t}^{\circ 2})^{\frac{1}{4}}
} \\ \scalemath{0.7}{
\cos\left( \frac{1}{n} \sum_{i=1}^n (f_{k_1-1,i,t} \otimes h_{k_1,k_2,i,t})^{\otimes 2}, \frac{1}{n^3} \sum_{i_1=1}^n \sum_{i_2=1}^n \sum_{i_3=1}^n (f_{k_1-1,i_1,t} \otimes b_{k_1,i_2,t} \otimes b_{k_2,i_3,t})^{\otimes 2} \right)^{\frac{1}{2}} 
} \\ \scalemath{0.7}{
\cos\left( \frac{1}{n} \sum_{i=1}^n (f_{k_1-1,i,t} \otimes j_{k_1,k_2-1,i,t})^{\otimes 2}, \frac{1}{n^3} \sum_{i_1=1}^n \sum_{i_2=1}^n \sum_{i_3=1}^n (f_{k_1-1,i_1,t} \otimes b_{k_1,i_2,t} \otimes f_{k_2-1,i_3,t})^{\otimes 2} \right)^{\frac{1}{2}}  \text{ for } k_1 < k_2 \in [1:l+1],
}
\end{multline*}
\begin{multline*}
\scalemath{0.7}{
T^{(3)}_{t,k_1,k_1,k_2}
= T^{(3)}_{t,k_1,k_2,k_1}
= T^{(3)}_{t,k_2,k_1,k_1}
= \frac{ \tr(\frac{1}{n} F_{k_1-1,t}^{\circ 3})^{\frac{1}{3}} \tr(\frac{1}{n} F_{k_2-1,t}^{\circ 3})^{\frac{1}{6}} \tr(\frac{1}{n} Z_{k_1,k_1,k_2,t})^{\frac{1}{2}} }{ R(F_{k_1-1,t}^{\circ 3})^{\frac{1}{6}} R(F_{k_2-1,t}^{\circ 3})^{\frac{1}{12}} R(Z_{k_1,k_1,k_2,t})^{\frac{1}{4}} } 
} \\ \scalemath{0.7}{
\cos\left( \binom{Z_{k_1,k_1,k_2,t}}{B_{k_1,t},B_{k_1,t},B_{k_2,t}}, 
\binom{F_{k_1-1,t},F_{k_1-1,t},F_{k_2-1,t}}{F_{k_1-1,t},F_{k_1-1,t},F_{k_2-1,t}} \right) 
\cos(F_{k_1-1,t}^{\circ 2}, F_{k_1-1,t}^{\circ 2}, F_{k_2-1,t}^{\circ 2})^{\frac{1}{4}}
} \\ \scalemath{0.7}{
\cos\left( \frac{1}{n} \sum_{i=1}^n z_{k_1,k_1,k_2,i,t}^{\otimes 2}, \frac{1}{n^3} \sum_{i_1=1}^n \sum_{i_2=1}^n \sum_{i_3=1}^n (b_{k_1,i_1,t} \otimes b_{k_1,i_2,t} \otimes b_{k_2,i_3,t})^{\otimes 2} \right)^{\frac{1}{2}} 
} \\ \scalemath{0.7}{
\cos\left( \frac{1}{n} \sum_{i=1}^n (f_{k_1-1,i,t}^{\otimes 2} \otimes f_{k_2-1,i,t})^{\otimes 2}, \frac{1}{n^3} \sum_{i_1=1}^n \sum_{i_2=1}^n \sum_{i_3=1}^n (f_{k_1-1,i_1,t} \otimes f_{k_1-1,i_2,t} \otimes f_{k_2-1,i_3,t})^{\otimes 2} \right)^{\frac{1}{2}} 
} 
\end{multline*}
\begin{multline*}
\scalemath{0.7}{
+ 2 \frac{ \tr(\frac{1}{n} F_{k_1-1,t}^{\circ 2})^{\frac{1}{4}} \tr(\frac{1}{n} F_{k_2-1,t}^{\circ 2})^{\frac{1}{4}} \tr(\frac{1}{n} H_{k_1,k_1,t}^{\circ 2})^{\frac{1}{4}} \tr(\frac{1}{n} J_{k_2,k_1-1,t}^{\circ 2})^{\frac{1}{4}} }{ R(F_{k_1-1,t}^{\circ 2})^{\frac{1}{8}} R(F_{k_2-1,t}^{\circ 2})^{\frac{1}{8}} R(H_{k_1,k_1,t}^{\circ 2})^{\frac{1}{8}} R(J_{k_2,k_1-1,t}^{\circ 2})^{\frac{1}{8}} } 
} \\ \scalemath{0.7}{
\cos\left( \binom{F_{k_2-1,t},H_{k_1,k_1,t}}{F_{k_2-1,t},B_{k_1,t},B_{k_1,t}}, 
\binom{F_{k_1-1,t},J_{k_2,k_1-1,t}}{B_{k_2,t},F_{k_1-1,t},F_{k_1-1,t}} \right) 
\cos(F_{k_2-1,t}^{\circ 2}, H_{k_1,k_1,t}^{\circ 2})^{\frac{1}{4}} 
\cos(F_{k_1-1,t}^{\circ 2}, J_{k_2,k_1-1,t}^{\circ 2})^{\frac{1}{4}}
} \\ \scalemath{0.7}{
\cos\left( \frac{1}{n} \sum_{i=1}^n (f_{k_2-1,i,t} \otimes h_{k_1,k_1,i,t})^{\otimes 2}, \frac{1}{n^3} \sum_{i_1=1}^n \sum_{i_2=1}^n \sum_{i_3=1}^n (f_{k_2-1,i_1,t} \otimes b_{k_1,i_2,t} \otimes b_{k_1,i_3,t})^{\otimes 2} \right)^{\frac{1}{2}} 
} \\ \scalemath{0.7}{
\cos\left( \frac{1}{n} \sum_{i=1}^n (f_{k_1-1,i,t} \otimes j_{k_2,k_1-1,i,t})^{\otimes 2}, \frac{1}{n^3} \sum_{i_1=1}^n \sum_{i_2=1}^n \sum_{i_3=1}^n (f_{k_1-1,i_1,t} \otimes b_{k_2,i_2,t} \otimes f_{k_1-1,i_3,t})^{\otimes 2} \right)^{\frac{1}{2}} \text{ for } k_1 > k_2 \in [1:l+1] \text{ and }
}
\end{multline*}
\begin{multline*}
\scalemath{0.7}{
T^{(3)}_{t,k_1,k_2,k_3}
= T^{(3)}_{t,k_1,k_3,k_2}
= T^{(3)}_{t,k_3,k_2,k_1}
= T^{(3)}_{t,k_2,k_1,k_3}
= \frac{ \tr(\frac{1}{n} F_{k_1-1,t}^{\circ 3})^{\frac{1}{6}} \tr(\frac{1}{n} F_{k_2-1,t}^{\circ 3})^{\frac{1}{6}} \tr(\frac{1}{n} F_{k_3-1,t}^{\circ 3})^{\frac{1}{6}} \tr(\frac{1}{n} Z_{k_1,k_2,k_3,t})^{\frac{1}{2}} }{ R(F_{k_1-1,t}^{\circ 3})^{\frac{1}{12}} R(F_{k_2-1,t}^{\circ 3})^{\frac{1}{12}} R(F_{k_3-1,t}^{\circ 3})^{\frac{1}{12}} R(Z_{k_1,k_2,k_3,t})^{\frac{1}{4}} } 
} \\ \scalemath{0.7}{
\cos\left( \binom{Z_{k_1,k_2,k_3,t}}{B_{k_1,t},B_{k_2,t},B_{k_3,t}}, 
\binom{F_{k_1-1,t},F_{k_2-1,t},F_{k_3-1,t}}{F_{k_1-1,t},F_{k_2-1,t},F_{k_3-1,t}} \right) 
\cos(F_{k_1-1,t}^{\circ 2}, F_{k_2-1,t}^{\circ 2}, F_{k_3-1,t}^{\circ 2})^{\frac{1}{4}}
} \\ \scalemath{0.7}{
\cos\left( \frac{1}{n} \sum_{i=1}^n z_{k_1,k_2,k_3,i,t}^{\otimes 2}, \frac{1}{n^3} \sum_{i_1=1}^n \sum_{i_2=1}^n \sum_{i_3=1}^n (b_{k_1,i_1,t} \otimes b_{k_2,i_2,t} \otimes b_{k_3,i_3,t})^{\otimes 2} \right)^{\frac{1}{2}} 
} \\ \scalemath{0.7}{
\cos\left( \frac{1}{n} \sum_{i=1}^n (f_{k_1-1,i,t} \otimes f_{k_2-1,i,t} \otimes f_{k_3-1,i,t})^{\otimes 2}, \frac{1}{n^3} \sum_{i_1=1}^n \sum_{i_2=1}^n \sum_{i_3=1}^n (f_{k_1-1,i_1,t} \otimes f_{k_2-1,i_2,t} \otimes f_{k_3-1,i_3,t})^{\otimes 2} \right)^{\frac{1}{2}} 
}
\end{multline*}
\begin{multline*}
\scalemath{0.7}{
+ \frac{ \tr(\frac{1}{n} F_{k_1-1,t}^{\circ 2})^{\frac{1}{4}} \tr(\frac{1}{n} F_{k_2-1,t}^{\circ 2})^{\frac{1}{4}} \tr(\frac{1}{n} H_{k_2,k_3,t}^{\circ 2})^{\frac{1}{4}} \tr(\frac{1}{n} J_{k_1,k_3-1,t}^{\circ 2})^{\frac{1}{4}} }{ R(F_{k_1-1,t}^{\circ 2})^{\frac{1}{8}} R(F_{k_2-1,t}^{\circ 2})^{\frac{1}{8}} R(H_{k_2,k_3,t}^{\circ 2})^{\frac{1}{8}} R(J_{k_1,k_3-1,t}^{\circ 2})^{\frac{1}{8}} } 
} \\ \scalemath{0.7}{
\cos\left( \binom{F_{k_1-1,t},H_{k_2,k_3,t}}{F_{k_1-1,t},B_{k_2,t},B_{k_3,t}}, 
\binom{F_{k_2-1,t},J_{k_1,k_3-1,t}}{B_{k_1,t},F_{k_2-1,t},F_{k_3-1,t}} \right) 
\cos(F_{k_1-1,t}^{\circ 2}, H_{k_2,k_3,t}^{\circ 2})^{\frac{1}{4}} 
\cos(F_{k_2-1,t}^{\circ 2}, J_{k_1,k_3-1,t}^{\circ 2})^{\frac{1}{4}}
} \\ \scalemath{0.7}{
\cos\left( \frac{1}{n} \sum_{i=1}^n (f_{k_1-1,i,t} \otimes h_{k_2,k_3,i,t})^{\otimes 2}, \frac{1}{n^3} \sum_{i_1=1}^n \sum_{i_2=1}^n \sum_{i_3=1}^n (f_{k_1-1,i_1,t} \otimes b_{k_2,i_2,t} \otimes b_{k_3,i_3,t})^{\otimes 2} \right)^{\frac{1}{2}} 
} \\ \scalemath{0.7}{
\cos\left( \frac{1}{n} \sum_{i=1}^n (f_{k_2-1,i,t} \otimes j_{k_1,k_3-1,i,t})^{\otimes 2}, \frac{1}{n^3} \sum_{i_1=1}^n \sum_{i_2=1}^n \sum_{i_3=1}^n (f_{k_2-1,i_1,t} \otimes b_{k_1,i_2,t} \otimes f_{k_3-1,i_3,t})^{\otimes 2} \right)^{\frac{1}{2}} 
}
\end{multline*}
\begin{multline*}
\scalemath{0.7}{
+ \frac{ \tr(\frac{1}{n} F_{k_1-1,t}^{\circ 2})^{\frac{1}{4}} \tr(\frac{1}{n} F_{k_3-1,t}^{\circ 2})^{\frac{1}{4}} \tr(\frac{1}{n} H_{k_2,k_3,t}^{\circ 2})^{\frac{1}{4}} \tr(\frac{1}{n} J_{k_1,k_2-1,t}^{\circ 2})^{\frac{1}{4}} }{ R(F_{k_1-1,t}^{\circ 2})^{\frac{1}{8}} R(F_{k_3-1,t}^{\circ 2})^{\frac{1}{8}} R(H_{k_2,k_3,t}^{\circ 2})^{\frac{1}{8}} R(J_{k_1,k_2-1,t}^{\circ 2})^{\frac{1}{8}} }
} \\ \scalemath{0.7}{
\cos\left( \binom{F_{k_1-1,t},H_{k_2,k_3,t}}{F_{k_1-1,t},B_{k_2,t},B_{k_3,t}}, 
\binom{F_{k_3-1,t},J_{k_1,k_2-1,t}}{B_{k_1,t},F_{k_2-1,t},F_{k_3-1,t}} \right) 
\cos(F_{k_1-1,t}^{\circ 2}, H_{k_2,k_3,t}^{\circ 2})^{\frac{1}{4}} 
\cos(F_{k_3-1,t}^{\circ 2}, J_{k_1,k_2-1,t}^{\circ 2})^{\frac{1}{4}} 
} \\ \scalemath{0.7}{
\cos\left( \frac{1}{n} \sum_{i=1}^n (f_{k_1-1,i,t} \otimes h_{k_2,k_3,i,t})^{\otimes 2}, \frac{1}{n^3} \sum_{i_1=1}^n \sum_{i_2=1}^n \sum_{i_3=1}^n (f_{k_1-1,i_1,t} \otimes b_{k_2,i_2,t} \otimes b_{k_3,i_3,t})^{\otimes 2} \right)^{\frac{1}{2}} 
} \\ \scalemath{0.7}{
\cos\left( \frac{1}{n} \sum_{i=1}^n (f_{k_3-1,i,t} \otimes j_{k_1,k_2-1,i,t})^{\otimes 2}, \frac{1}{n^3} \sum_{i_1=1}^n \sum_{i_2=1}^n \sum_{i_3=1}^n (f_{k_3-1,i_1,t} \otimes b_{k_1,i_2,t} \otimes f_{k_2-1,i_3,t})^{\otimes 2} \right)^{\frac{1}{2}}
}
\end{multline*}
\begin{multline*}
\scalemath{0.7}{
+ \frac{ \tr(\frac{1}{n} F_{k_1-1,t}^{\circ 2})^{\frac{1}{4}} \tr(\frac{1}{n} F_{k_2-1,t}^{\circ 2})^{\frac{1}{4}} \tr(\frac{1}{n} H_{k_1,k_3,t}^{\circ 2})^{\frac{1}{4}} \tr(\frac{1}{n} J_{k_2,k_3-1,t}^{\circ 2})^{\frac{1}{4}} }{ R(F_{k_1-1,t}^{\circ 2})^{\frac{1}{8}} R(F_{k_2-1,t}^{\circ 2})^{\frac{1}{8}} R(H_{k_1,k_3,t}^{\circ 2})^{\frac{1}{8}} R(J_{k_2,k_3-1,t}^{\circ 2})^{\frac{1}{8}} } 
} \\ \scalemath{0.7}{
\cos\left( \binom{F_{k_2-1,t},H_{k_1,k_3,t}}{B_{k_1,t},F_{k_2-1,t},B_{k_3,t}}, 
\binom{F_{k_1-1,t},J_{k_2,k_3-1,t}}{F_{k_1-1,t},B_{k_2,t},F_{k_3-1,t}} \right) 
\cos(F_{k_2-1,t}^{\circ 2}, H_{k_1,k_3,t}^{\circ 2})^{\frac{1}{4}} 
\cos(F_{k_1-1,t}^{\circ 2}, J_{k_2,k_3-1,t}^{\circ 2})^{\frac{1}{4}}
} \\ \scalemath{0.7}{
\cos\left( \frac{1}{n} \sum_{i=1}^n (f_{k_2-1,i,t} \otimes h_{k_1,k_3,i,t})^{\otimes 2}, \frac{1}{n^3} \sum_{i_1=1}^n \sum_{i_2=1}^n \sum_{i_3=1}^n (f_{k_2-1,i_1,t} \otimes b_{k_1,i_2,t} \otimes b_{k_3,i_3,t})^{\otimes 2} \right)^{\frac{1}{2}} 
} \\ \scalemath{0.7}{
\cos\left( \frac{1}{n} \sum_{i=1}^n (f_{k_1-1,i,t} \otimes j_{k_2,k_3-1,i,t})^{\otimes 2}, \frac{1}{n^3} \sum_{i_1=1}^n \sum_{i_2=1}^n \sum_{i_3=1}^n (f_{k_1-1,i_1,t} \otimes b_{k_2,i_2,t} \otimes f_{k_3-1,i_3,t})^{\otimes 2} \right)^{\frac{1}{2}} 
}
\end{multline*}
\begin{multline*}
\scalemath{0.7}{
+ \frac{ \tr(\frac{1}{n} F_{k_1-1,t}^{\circ 2})^{\frac{1}{4}} \tr(\frac{1}{n} B_{k_3,t}^{\circ 2})^{\frac{1}{4}} \tr(\frac{1}{n} J_{k_1,k_2-1,t}^{\circ 2})^{\frac{1}{4}} \tr(\frac{1}{n} J_{k_2,k_3-1,t}^{\circ 2})^{\frac{1}{4}} }{ R(F_{k_1-1,t}^{\circ 2})^{\frac{1}{8}} R(B_{k_3,t}^{\circ 2})^{\frac{1}{8}} R(J_{k_1,k_2-1,t}^{\circ 2})^{\frac{1}{8}} R(J_{k_2,k_3-1,t}^{\circ 2})^{\frac{1}{8}} } 
} \\ \scalemath{0.7}{
\cos\left( \binom{B_{k_3,t},J_{k_1,k_2-1,t}}{B_{k_1,t},F_{k_2-1,t},B_{k_3,t}}, 
\binom{F_{k_1-1,t},J_{k_2,k_3-1,t}}{F_{k_1-1,t},B_{k_2,t},F_{k_3-1,t}} \right) 
\cos(B_{k_3,t}^{\circ 2}, J_{k_1,k_2-1,t}^{\circ 2})^{\frac{1}{4}} 
\cos(F_{k_1-1,t}^{\circ 2}, J_{k_2,k_3-1,t}^{\circ 2})^{\frac{1}{4}}
} \\ \scalemath{0.7}{
\cos\left( \frac{1}{n} \sum_{i=1}^n (b_{k_3,i,t} \otimes j_{k_1,k_2-1,i,t})^{\otimes 2}, \frac{1}{n^3} \sum_{i_1=1}^n \sum_{i_2=1}^n \sum_{i_3=1}^n (b_{k_3,i_1,t} \otimes b_{k_1,i_2,t} \otimes f_{k_3-1,i_3,t})^{\otimes 2} \right)^{\frac{1}{2}} 
} \\ \scalemath{0.7}{
\cos\left( \frac{1}{n} \sum_{i=1}^n (f_{k_1-1,i,t} \otimes j_{k_2,k_3-1,i,t})^{\otimes 2}, \frac{1}{n^3} \sum_{i_1=1}^n \sum_{i_2=1}^n \sum_{i_3=1}^n (f_{k_1-1,i_1,t} \otimes b_{k_2,i_2,t} \otimes f_{k_3-1,i_3,t})^{\otimes 2} \right)^{\frac{1}{2}} \text{ for } k_1 < k_2 < k_3 \in [1:l+1].
}
\end{multline*}
\end{theorem}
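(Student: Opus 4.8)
The plan is to collapse each high-dimensional contraction on the left into the minibatch Gram tensors by the chain rule, and then to rewrite the resulting raw inner products into the stated trace/soft-rank/cosine form through a single normalization identity. Fix $t$ and suppress it from the notation throughout. I would first record the identity that makes the soft rank appear: for any Gram matrix $X$ one has $\tr(XX)=\Vert X\Vert^2$ and $\tr(X)=\Vert X\Vert R(X)^{1/2}$, whence for every exponent $c$
\[
\frac{\tr(\tfrac{1}{n} X)^c}{R(X)^{c/2}}=\left(\frac{\Vert X\Vert}{n}\right)^c .
\]
This is the only role of $R$: it merely repackages a Frobenius norm, and the fractional exponents $\tfrac12,\tfrac14,\tfrac13,\tfrac16,\dots$ in the statement are instances of this identity with the $c$ dictated by how many legs of a big tensor carry the given Gram matrix. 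Using $\nabla_{\theta_k}\ell_i=b_{k,i}f_{k-1,i}^{*}$ and bilinearity, the first coefficient becomes
\[
\langle\Xi\nabla,\nabla\rangle=\sum_{k=1}^{l+1}\xi_{k}\,\frac{1}{n^2}\langle F_{k-1},B_{k}\rangle ,
\]
and writing $\langle F_{k-1},B_{k}\rangle=\Vert F_{k-1}\Vert\,\Vert B_{k}\Vert\cos(F_{k-1},B_{k})$ and applying the identity with $c=\tfrac12$ to each norm reproduces $\tau_k$ and $T^{(1)}_k$ exactly.

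For the second and third coefficients I would differentiate the first-order factors $b_k f_{k-1}^{*}$ through the forward and backward recursions. Since $\phi''=0$ almost everywhere the activations carry no curvature, so when the factor $b_{k_2}f_{k_2-1}^{*}$ is differentiated in an upstream weight $\theta_{k_1}$ (with $k_1<k_2$) only two mechanisms survive: the loss curvature inside $b_{k_2}$, which yields the cross-layer Hessian $h_{k_1,k_2}$, and the forward sensitivity of $f_{k_2-1}$ to $\theta_{k_1}$, which yields the Jacobian $j_{k_1,k_2-1}$. These are exactly the $H$- and $J$-terms of $T^{(2)}$; for $k_1=k_2$ the factor $f_{k-1}$ does not depend on $\theta_k$, the Jacobian path is absent, and only $h_{k,k}$ remains. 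A further differentiation produces, by the same reasoning, the triple loss curvature $z_{k_1,k_2,k_3}$ together with the several ways of distributing one Hessian and one Jacobian, or two Jacobians, across the three layers, giving the $Z$-term and the cross terms of $T^{(3)}$. A coefficient of order $q\in\{1,2,3\}$ involves $q+1$ minibatch averages, hence equals $n^{-(q+1)}$ times an inner product of the $n^{q+1}$-dimensional tensors $\binom{\cdots}{\cdots}$ defined before the theorem.

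The remaining step is the norm--cosine factorization of each raw inner product $n^{-(q+1)}\langle\mathcal A,\mathcal B\rangle=n^{-(q+1)}\Vert\mathcal A\Vert\,\Vert\mathcal B\Vert\cos(\mathcal A,\mathcal B)$, which supplies the outer cosine between the two big tensors. Each big-tensor norm factors further; for instance
\[
\left\Vert\binom{H_{k_1,k_2}}{B_{k_1},B_{k_2}}\right\Vert^2=n^3\left\langle\frac{1}{n}\sum_i h_{k_1,k_2,i}^{\otimes 2},\ \frac{1}{n^2}\sum_{i_1,i_2}(b_{k_1,i_1}\otimes b_{k_2,i_2})^{\otimes 2}\right\rangle,
\]
whose two arguments have norms $\tfrac{1}{n}\Vert H_{k_1,k_2}\Vert$ and $\tfrac{1}{n^2}\Vert B_{k_1}\Vert\,\Vert B_{k_2}\Vert$ and whose alignment is the half-power self-cosine in the statement. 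The $B$- and $F$-norms coming from such factorizations are precisely the ones absorbed by the $D_{\xi}\tau$ weights on the right, the surviving $H$-, $Z$- and $F^{\circ p}$-norms (with $p\in\{2,3\}$) convert through the identity above into the trace/soft-rank prefactors, and the cross-layer Hadamard cosines $\cos(F_{k_1-1}^{\circ 2},F_{k_2-1}^{\circ 2})^{1/4}$ and their analogues appear exactly when a big-tensor norm fails to factor diagonally. Assembling these pieces yields the stated $T^{(2)}$ and $T^{(3)}$.

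The main obstacle is the third-order bookkeeping: correctly enumerating every term produced by differentiating $b_{k_1}f_{k_1-1}^{*}$ twice, attributing each to the right combination of a cross-layer Tressian, Hessian and Jacobian, and symmetrizing over the permutations of $(k_1,k_2,k_3)$ so that the stated equalities among the $T^{(3)}$ entries hold. By contrast, once the raw inner products are in hand, the norm--cosine factorization is entirely mechanical.
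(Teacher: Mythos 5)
Your proposal is correct and follows essentially the same route as the paper's proof: factor the gradient as $\nabla_{k,t}=\frac{1}{n}\sum_i b_{k,i,t}\otimes f_{k-1,i,t}$, differentiate it once and twice using $\phi''=0$ a.e.\ to obtain exactly the cross-layer Hessian/Jacobian/Tressian decompositions (the paper's Factorization of Gradients, Hessians and Tressians propositions), contract to get inner products of the $n^{q+1}$-dimensional tensors, and then mechanically split each norm into a cosine times Frobenius norms converted to trace/soft-rank form via $\tr(X)=\Vert X\Vert R(X)^{1/2}$. Your normalization identity, the $n^{3}$ scaling of $\Vert\binom{H}{B,B}\Vert^2$, and the identification of which paths survive on and off the diagonal all match the paper; the only work you defer, the term-by-term enumeration at third order, is precisely the bookkeeping the paper carries out in its Third Order Term proposition.
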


In the result above, the gradient scales are collected in $\xi_t$, while $\tau_t$ contains the average squared norms of forward and backward vectors and the soft ranks of forward and backward Gram matrices. The first order Taylor tensor $T^{(1)}_t$ consists of the elements of $\tau_t$ multiplied by the cosine alignments between forward and backward Gram matrices. Second order quantities are absorbed by the second order Taylor tensor $T^{(2)}_t$. The simplest of these are the average fourth powers of norms of forward and backward vectors, the average Frobenius norms of cross layer Hessians and Jacobians and the soft ranks of second Hadamard powers of forward and backward Gram matrices, cross layer Hessian gram matrices and cross layer Jacobian Gram matrices. Cosine alignments of second Hadamard powers of forward and backward Gram matrices also appear, as well as cosines of the third order tensors consisting of inner products which were defined above the result. Unlike in the first order Taylor tensor, there are also cosine alignments of certain empirical covariance tensors of random matrices. In this case, randomness is considered with respect to minibatch sampling, so that larger minibatches lead to these empirical covariance tensors being better approximations of the actual covariance tensor. The trend continues in the third order Taylor tensor $T^{(3)}_t$, which scales with the average fourth or sixth powers of norms of forward and backward vectors, the average squared Frobenius norms of cross layer Hessians and Jacobians, the average Euclidean norms of cross layer Tressians, as well as the soft ranks of second or third Hadamard powers of forward and backward Gram matrices, the soft ranks of second Hadamard powers of Hessian and Jacobian Gram matrices and the soft ranks of Tressian Gram matrices. There are also cosines of fourth order tensors consisting of inner products and cosines of empirical covariance tensors of random third order tensors.

Without claiming to fully make sense of the above formulas, one takeaway is that the Taylor coefficients are determined by variables that quantify either the \emph{scale}, the \emph{structure} or the pairwise \emph{alignment} of minibatches of certain tensors. Scale variables are powers of Euclidean norms of a given type of tensor averaged over the minibatch. Structural variables are the soft ranks, which measure the quality of different representations of the same minibatch. Unlike scale, structure also depends on the relations between tensors of the same type in a minibatch. We argue that representation learning is not only about the activations representing the inputs, since hidden layer gradients can be seen as learned representations of output space gradients and they also heavily affect learning according to the above result. This holds analogously to hidden layer representations of output space Hessians and Tressians. In the third category on the other hand, cosine alignments quantify the relationship between tensors of different types, often coupled with each other. Among the cosines, distinct subcategories are alignments of tensors that consist of inner products (including Gram matrices) and alignments of covariance tensors. As inner products factor into norms and cosines, a tensor of inner products describes the geometry of a combination of different representations of the same minibatch. The cosine of a pair of such tensors measures the alignment between the geometries of two sets of representations (in different spaces) of the same minibatch. On the other hand, a covariance tensor describes the internal probabilistic structure of a random tensor. In the above cases, the empirical covariance tensors corresponding to a minibatch approximate the true covariances of random tensors which are pushforwards of some tensor power of the probability measure that is the dataset. Cosines of such tensors measure the alignment between pushforwards of the minibatch and its second or third tensor power (which in this case are in the same space).

Based on the above theorem, we propose the gradient scaling scheme
\begin{equation}\label{eq:grad_scale}
\scalemath{0.9}{
\xi_{k,t} = \tr\left( \frac{1}{n} F_{k-1,t} \right)^{-\frac{1}{2}} \tr\left( \frac{1}{n} B_{k,t} \right)^{-\frac{1}{2}} \text{ for } k \in [1:l+1] \text{ and } t \in \N,
}
\end{equation}
canceling the forward and backward norms in $\tau_t$. The formula is cheap to compute as the forward and backward vectors are always available (the latter can be accessed e.g. by backward hooks), the additional cost being that the minibatch average of their norms has to be computed. Another benefit is that \eqref{eq:grad_scale} also cancels the forward and backward norms in the cumulative updates by the following result, which shows that $\Vert \Delta_{k,t} \Vert$ scales with the average squared norms of past forward and backward vectors, the soft ranks and cosine alignments of the corresponding forward and backward Gram matrices and the cosine alignments of the empirical forward backward cross covariances across time.

\begin{proposition}[Squared Frobenius Norm of Cumulative Updates]
For all $t \in \N$, we have
\begin{multline*}
\scalemath{0.8}{
\Vert \Delta_{k,t} \Vert^2
= \sum_{t_1=0}^{t-1} \sum_{t_2=0}^{t-1} \xi_{k,t_1} \xi_{k,t_2} \frac{ \tr(\frac{1}{n} F_{k-1,t_1})^{\frac{1}{2}} \tr(\frac{1}{n} F_{k-1,t_2})^{\frac{1}{2}} \tr(\frac{1}{n} B_{k,t_1})^{\frac{1}{2}} \tr(\frac{1}{n} B_{k,t_2})^{\frac{1}{2}} }{ R(F_{k-1,t_1})^{\frac{1}{4}} R(F_{k-1,t_2})^{\frac{1}{4}} R(B_{k,t_1})^{\frac{1}{4}} R(B_{k,t_2})^{\frac{1}{4}} } 
} \\ \scalemath{0.8}{
\cos(F_{k-1,t_1}, B_{k,t_1})^{\frac{1}{2}} \cos(F_{k-1,t_2}, B_{k,t_2})^{\frac{1}{2}}
\cos\left( \frac{1}{n} \sum_{i=1}^n b_{k,i,t_1} \otimes f_{k-1,i,t_1}, \frac{1}{n} \sum_{i=1}^n b_{k,i,t_2} \otimes f_{k-1,i,t_2} \right)
\text{ for } k \in [1:l+1].
}
\end{multline*}
\end{proposition}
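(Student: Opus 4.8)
The plan is to reduce $\Delta_{k,t}$ to a time-indexed sum of rank-structured matrices, expand the squared norm bilinearly, and evaluate each resulting Gram-matrix inner product using the soft rank. Since $\theta_{k,t+1} = \theta_{k,t} - \eta\xi_{k,t}\nabla_{\theta_{k,t}}\Ell_t(\theta_t)$, telescoping gives $\Delta_{k,t} = \eta^{-1}(\theta_{k,t}-\theta_{k,0}) = -\sum_{t'=0}^{t-1}\xi_{k,t'}\nabla_{\theta_{k,t'}}\Ell_{t'}(\theta_{t'})$. The first step is to establish the chain-rule identity $\nabla_{\theta_{k,t}}\Ell_t(\theta_t) = \frac{1}{n}\sum_{i=1}^n b_{k,i,t}\otimes f_{k-1,i,t}$ for every $k\in[1:l+1]$. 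Differentiating $\ell_{i,t}$ through the preactivation $m^{\frac{1}{2}}\theta_{k,t}f_{k-1,i,t}$, the factor $m^{\frac{1}{2}}$ picked up from the linear map cancels the $m^{-\frac{1}{2}}$ relating $\nabla_{\theta_{k,t}f_{k-1,i,t}}\ell_{i,t}=b_{k,i,t}$ to the preactivation gradient, leaving the clean outer product (for $k=l+1$ the activation is absent and the identity is immediate). This is exactly the matrix whose rows are the learned rows $\mathbb{\Delta}_{k,t,j}$ already recorded in the excerpt, so tracking these $m^{\frac{1}{2}}$ normalizations correctly is the one place demanding care.

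Writing $M_{k,t'} = \frac{1}{n}\sum_{i=1}^n b_{k,i,t'}\otimes f_{k-1,i,t'}$, so that $\Delta_{k,t}=-\sum_{t'=0}^{t-1}\xi_{k,t'}M_{k,t'}$, the second step expands
\[
\Vert\Delta_{k,t}\Vert^2 = \sum_{t_1=0}^{t-1}\sum_{t_2=0}^{t-1}\xi_{k,t_1}\xi_{k,t_2}\langle M_{k,t_1}, M_{k,t_2}\rangle
\]
by bilinearity (the two sign factors cancel), and rewrites each cross term as $\langle M_{k,t_1},M_{k,t_2}\rangle = \Vert M_{k,t_1}\Vert\,\Vert M_{k,t_2}\Vert\cos(M_{k,t_1},M_{k,t_2})$, where the cosine is precisely the empirical forward--backward cross-covariance cosine appearing in the statement.

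The third and central step is to evaluate the single-time norm $\Vert M_{k,t'}\Vert$. The outer-product identity $\langle b\otimes f, b'\otimes f'\rangle = \langle b,b'\rangle\langle f,f'\rangle$ gives $\Vert M_{k,t'}\Vert^2 = \frac{1}{n^2}\sum_{i,i'}(B_{k,t'})_{i,i'}(F_{k-1,t'})_{i,i'} = \frac{1}{n^2}\langle B_{k,t'}, F_{k-1,t'}\rangle$, the Frobenius inner product of the two Gram matrices, which factors as $\frac{1}{n^2}\Vert B_{k,t'}\Vert\,\Vert F_{k-1,t'}\Vert\cos(F_{k-1,t'},B_{k,t'})$. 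The key algebraic fact is that for any Gram matrix $X$ one has $\tr(XX)=\Vert X\Vert^2$ by symmetry, whence $\Vert X\Vert = \tr(X)R(X)^{-\frac{1}{2}}$ directly from the definition of the soft rank. Applying this to $B_{k,t'}$ and $F_{k-1,t'}$ and absorbing the $\frac{1}{n}$ factors into the traces yields
\[
\Vert M_{k,t'}\Vert = \frac{\tr(\frac{1}{n} F_{k-1,t'})^{\frac{1}{2}}\tr(\frac{1}{n} B_{k,t'})^{\frac{1}{2}}}{R(F_{k-1,t'})^{\frac{1}{4}}R(B_{k,t'})^{\frac{1}{4}}}\cos(F_{k-1,t'}, B_{k,t'})^{\frac{1}{2}}.
\]
Substituting this for both $t_1$ and $t_2$ into the expanded double sum reproduces the claimed formula term by term. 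No step presents a genuine obstacle; the only real care is the normalization bookkeeping of the first step and recognizing the soft-rank identity $\Vert X\Vert = \tr(X)R(X)^{-\frac{1}{2}}$, after which everything is routine bilinear algebra.
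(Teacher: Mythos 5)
Your proposal is correct and takes essentially the same route as the paper: telescoping $\Delta_{k,t}$ into a weighted sum of the gradients, using the factorization $\nabla_{k,t} = \frac{1}{n}\sum_i b_{k,i,t}\otimes f_{k-1,i,t}$, expanding the squared norm bilinearly into norm--norm--cosine cross terms, and evaluating each single-time norm via $\langle B_{k,t'},F_{k-1,t'}\rangle$ together with the soft-rank identity $\Vert X\Vert = \tr(X)R(X)^{-\frac{1}{2}}$ for Gram matrices. The only cosmetic difference is that the paper obtains $\Vert\nabla_{k,t}\Vert^2$ from the cyclic property of the trace while you use the tensor-product inner-product identity, which is the same computation.
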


To demonstrate the usefulness of our results, we train $\nu$P ReLU MLPs with $l=8$ hidden layers on MNIST with and without the proposed gradient scaling \eqref{eq:grad_scale} for width exponents $r \in [0:2]$ along a range of learning rates and plot the results in terms of minibatch loss $\Ell_t(\theta_t)$ (on a logarithmic scale), effective sharpness $S_t$, stability ratio $\frac{1}{2} \eta S_t$ and last hidden layer soft rank $R(F_{l,t})$ in Figure~\ref{fig:plot_eos}. Each run took $\approx 8$ minutes on an NVIDIA A100 SXM4 40GB GPU. As expected, larger learning rates mostly lead to larger soft ranks, but to issues as well. Without \eqref{eq:grad_scale}, for $r=0$ a color disappearing before $t=2000$ indicates numerical errors. Width exponents $r=1$ and $r=2$ help avoiding this and lead to somewhat lower effective sharpness, but there are still occasional catapult events disrupting the flow of training even though the process recovers afterwards. On the other hand, with \eqref{eq:grad_scale} we can train with much larger learning rates without such difficulties, the only issue being that the loss does not descend if $\eta$ is too large. Setting $r=2$ with gradient scaling, we observe the best performance with the loss successfully descending in all cases and increasing learning rates leading consistently to smaller effective sharpness and higher soft rank of the last hidden layer representation, while the stability ratio shows that training takes place far beyond the effective EOS. We set the width scale $m$ such that all settings have parameter counts on the same order of magnitude, but the quadratic case has the smallest. Therefore it can be seen as an economic way of choosing hidden layer widths.

\begin{figure}
\begin{center}
\begin{tabular}{c}
\begin{subfigure}[b]{1\textwidth}
\includegraphics{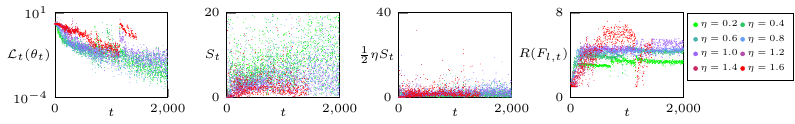}
\vspace*{-2mm}
\caption{Unscaled SGD with $r=0$, $m=8192$, $\dim(\Theta)=476266496$}
\end{subfigure}\\
\vspace*{-2mm}
\begin{subfigure}[b]{1\textwidth}
\includegraphics{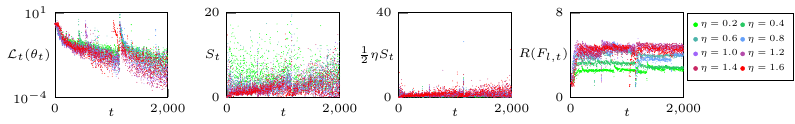}
\vspace*{-2mm}
\caption{Unscaled SGD with $r=1$, $m=2048$, $\dim(\Theta)=717508608$}
\end{subfigure}\\
\vspace*{-2mm}
\begin{subfigure}[b]{1\textwidth}
\includegraphics{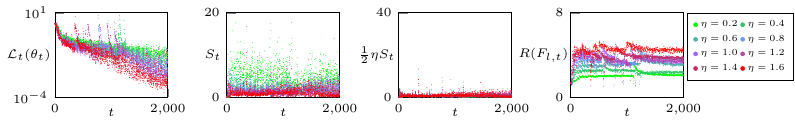}
\vspace*{-2mm}
\caption{Unscaled SGD with $r=2$, $m=256$, $\dim(\Theta)=431229440$}
\end{subfigure}\\
\vspace*{-2mm}
\begin{subfigure}[b]{1\textwidth}
\includegraphics{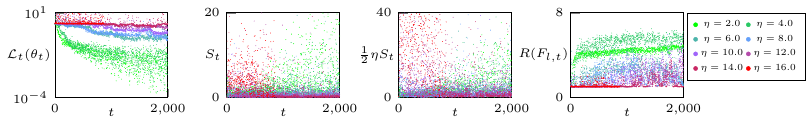}
\vspace*{-2mm}
\caption{Scaled SGD with \eqref{eq:grad_scale}, $r=0$, $m=8192$, $\dim(\Theta)=476266496$}
\end{subfigure}\\
\vspace*{-2mm}
\begin{subfigure}[b]{1\textwidth}
\includegraphics{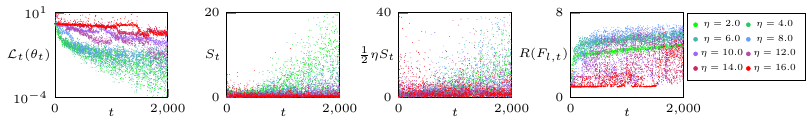}
\vspace*{-2mm}
\caption{Scaled SGD with \eqref{eq:grad_scale}, $r=1$, $m=2048$, $\dim(\Theta)=717508608$}
\end{subfigure}\\
\vspace*{-2mm}
\begin{subfigure}[b]{1\textwidth}
\includegraphics{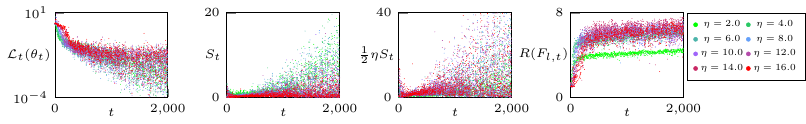}
\vspace*{-2mm}
\caption{Scaled SGD with \eqref{eq:grad_scale}, $r=2$, $m=256$, $\dim(\Theta)=431229440$}
\end{subfigure}
\end{tabular}
\vskip 0.05in
\caption{Training $\nu$P MLPs with $l=8$ and $(a,b)=(\frac{1}{2},\frac{1}{2})$ to minimize the classification loss $\ell(x,y) = \log(\langle e^z, \mathbbm{1}_n \rangle) - z_y$ on MNIST with minibatch size $n=2^8$.}
\label{fig:plot_eos}
\end{center}
\vskip -0.2in
\end{figure}

\section{Limitations and future directions}\label{conclusion}
The main limitation of our work is that it only treats MLPs. Extending $\nu$P and the gradient scaling scheme \eqref{eq:grad_scale} to other architectures could potentially enable practitioners to train better models with a more economic allocation of compute resources. While we attempted to interpret the formulas we obtained for the Taylor tensors, probably more insight can be gained from their study. In particular, an important future direction is to empirically observe the evolution of the scale, structure and alignment quantities that seem to govern the training dynamics. On the theoretical side, we intend to study the behavior of the soft ranks, which quantify representation learning and are much easier to analyze than most other spectral quantities. We believe that studying the theory of artificial neural networks by factoring objects of interest into norms, soft ranks, cosines and perhaps some other components promises to be a fruitful approach towards understanding the success of deep learning.

\begin{ack}
D\'avid Terj\'ek was supported by the Ministry of Innovation and Technology NRDI Office within the framework of the Artificial Intelligence National Laboratory (RRF-2.3.1-21-2022-00004).
\end{ack}

\newpage

\bibliographystyle{apalike}
\bibliography{neurips_2025}


\newpage

\appendix

\section{Proofs and additional results}\label{proofs}

Given $i, j \in \N$, we define the tuple $[i:j] = (i,i+1,\cdots,j-1,j)$ (which is the empty tuple $()$ if $i > j$). For any $m, n \in \N$, we denote by $m\N+n$ the set $\{mr+n : r \in \N\}$. The space of bounded linear operators from some vector space $G$ to another vector space $H$ is denoted $L(G,H)$. We denote the identity on a vector space $H$ by $\Id_H$. The adjoint of a linear operator $A \in L(G,H)$ is the unique linear operator $A^* \in L(H,G)$ such that $\langle A x, y \rangle = \langle x, A^* y \rangle$ for all $x \in G$ and $y \in H$. Given $x \in \R^n$, we define the corresponding diagonal matrix $D_x \in \R^{n \times n}$ as ${D_x}_{i_1,i_2} = x_i$ if $i_1 = i_2 = i$ and $0$ otherwise for all $i_1,i_2 \in [1:n]$. For $n \in \N+1$, we denote the $n$-dimensional constant $1$ vector by $\mathbbm{1}_n = [ 1 : i \in [1:n]] \in \R^n$. Given a function $F : G \to H$ and $x \in G$, we say that $F$ is differentiable at $x$ if it is Fr\'echet differentiable at $x$, i.e., if there exists a bounded linear operator $\partial F(x) \in L(G,H)$, which we refer to as the Jacobian of $F$ at $x$, satisfying $\lim_{\substack{y \to x \\ x \neq y}}\frac{\Vert F(y) - F(x) - \partial F(x) (y-x) \Vert}{\Vert y - x \Vert}=0$. Such a function is differentiable if it is differentiable at all $x \in G$.

We introduce some notation that helps working with higher order tensors. We permute the tensor dimensions by listing the dimension indices in the desired order between parenthesis in the upper index. Grouping of the indices means that the appearance of the tensor should be viewed as if the grouped dimensions are flattened. For example, for a tensor $X \in \R^{a \times b \times c \times d}$, writing $X^{(a,c),(b,d)}$ implies that $X^{(a,c),(b,d)}$ acts as a linear operator from $\R^{b \times d}$ to $\R^{a \times c}$, i.e., a kind of matrix that has matrices as its rows and columns. The norm $\Vert X \Vert$ always denotes the Euclidean norm of a tensor $X$ and $\Vert X \Vert_3 = (\sum_{i_1=1}^{a_1} \cdots \sum_{i_d=1}^{a_d} \vert X_{i_1,\cdots,i_d} \vert^3)^{\frac{1}{3}}$ for $X \in \R^{a_1 \times \cdots \times a_d}$. For $X \in \R^{a_1 \times \cdots \times a_d \times n}$, the soft rank $R(X)$ denotes the soft rank of the corresponding Gram matrix $(X^{(1,\cdots,d),d+1})^* X^{(1,\cdots,d),d+1} \in \R^{n \times n}$, i.e., $R(X) = R((X^{(1,\cdots,d),d+1})^* X^{(1,\cdots,d),d+1}) = \frac{\Vert X \Vert^4}{\Vert (X^{(1,\cdots,d),d+1})^* X^{(1,\cdots,d),d+1} \Vert^2}$.

Define for all $t \in \N$ the forward and backward matrices 
\[
f_{k,t} = \left[ n^{-\frac{1}{2}} f_{k,i,t} : i \in [1:n] \right] \in \R^{m_k \times n} \text{ for } k \in [0:l]
\]
and
\[
b_{k,t} = \left[ n^{-\frac{1}{2}} b_{k,i,t} : i \in [1:n] \right] \in \R^{m_k \times n} \text{ for } k \in [1:l+1]
\]
and denote the layerwise gradients $\nabla_{k,t} = \nabla_{\theta_{k,t}} \Ell_t(\theta_t) \in \Theta_k$ for all $k \in [1:l+1]$ and $t \in \N$. The result below tells us that $\nabla_{k,t}$ is the empirical cross covariance (uncentered, also called cross correlation) matrix of the $k-1$th forward vectors and the $k$th backward vectors considered as random vectors with respect to the dataset $\mu$.

\begin{proposition}[Factorization of Gradients]\label{prop:nabla_k_t_factorization}
For all $k \in [1:l+1]$ and $t \in \N$, we have
\[
\nabla_{k,t}
= b_{k,t} f_{k-1,t}^* = \frac{1}{n} \sum_{i=1}^n b_{k,i,t} \otimes f_{k-1,i,t}
\]
\end{proposition}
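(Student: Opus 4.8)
The plan is to reduce the entire statement to the chain rule applied to the single layer matrix $\theta_{k,t}$, exploiting the fact that $\theta_{k,t}$ enters each summand $\ell(N_{x_i,\theta_t},y_i)$ of $\Ell_t$ only through the preactivation $\theta_{k,t} f_{k-1,i,t}$. Since $\Ell_t$ is an average over the minibatch and the gradient is linear, it suffices to compute $\nabla_{\theta_{k,t}}\ell(N_{x_i,\theta_t},y_i)$ for a fixed $i$, and then sum over $i$ and divide by $n$.

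First I would fix $i$ and observe that the map $\theta_{k,t}\mapsto \theta_{k,t}f_{k-1,i,t}$ is linear, with Jacobian sending an increment $M \in \Theta_k = \R^{m_k\times m_{k-1}}$ to $M f_{k-1,i,t}\in\R^{m_k}$; crucially $f_{k-1,i,t}$ does not depend on $\theta_{k,t}$, since the forward pass computes it from layers $1,\dots,k-1$ only. Every other occurrence of $\theta_{k,t}$ in $\ell(N_{x_i,\theta_t},y_i)$ factors through this preactivation, because the forward pass uses $\theta_{k,t}$ solely to form $f_{k,i,t}=m_k^{-\frac{1}{2}}\phi(m^{\frac{1}{2}}\theta_{k,t}f_{k-1,i,t})$ and all downstream quantities depend on $\theta_{k,t}$ only through $f_{k,i,t}$. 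Hence the chain rule gives $\nabla_{\theta_{k,t}}\ell(N_{x_i,\theta_t},y_i) = (\nabla_{\theta_{k,t}f_{k-1,i,t}}\ell)\otimes f_{k-1,i,t}$, the adjoint of $M\mapsto Mf_{k-1,i,t}$ being $w\mapsto w f_{k-1,i,t}^* = w\otimes f_{k-1,i,t}$.

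Next I would identify the preactivation gradient with the backward vector. For $k\in[1:l]$ this is exactly the note following the backward pass definition, namely $b_{k,i,t}=\nabla_{\theta_{k,t}f_{k-1,i,t}}\ell(N_{x_i,\theta_t},y_i)$. The case $k=l+1$ is handled as a boundary case: here the preactivation $\theta_{l+1,t}f_{l,i,t}$ is the output $N_{x_i,\theta_t}$ itself, and $b_{l+1,i,t}=\nabla\ell(N_{x_i,\theta_t},y_i)$ by definition, so the same identity holds. Combining with the previous step yields $\nabla_{\theta_{k,t}}\ell(N_{x_i,\theta_t},y_i)=b_{k,i,t}\otimes f_{k-1,i,t}$ for every $k\in[1:l+1]$, and averaging over $i$ gives the middle expression $\frac{1}{n}\sum_{i=1}^n b_{k,i,t}\otimes f_{k-1,i,t}$.

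Finally, the first equality is the bookkeeping identity that a matrix product equals the sum of outer products of matching columns: since the $i$th columns of $b_{k,t}$ and $f_{k-1,t}$ are $n^{-\frac{1}{2}}b_{k,i,t}$ and $n^{-\frac{1}{2}}f_{k-1,i,t}$, we have $b_{k,t}f_{k-1,t}^*=\sum_{i=1}^n (n^{-\frac{1}{2}}b_{k,i,t})(n^{-\frac{1}{2}}f_{k-1,i,t})^*=\frac{1}{n}\sum_{i=1}^n b_{k,i,t}\otimes f_{k-1,i,t}$, closing the chain of equalities. The only genuinely delicate point is justifying that $\theta_{k,t}$ influences the loss exclusively through the preactivation, which is the factorization underlying the chain rule, together with treating the output layer $k=l+1$ separately; everything else is routine linear algebra, and the nondifferentiability of $\phi$ at $0$ is dispensed with by the almost-sure convention adopted earlier in the paper.
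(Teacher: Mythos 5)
Your argument is correct and follows essentially the same route as the paper's proof: the chain rule through the preactivation gives $\nabla_{\theta_{k,t}}\ell_{i,t} = b_{k,i,t}\otimes f_{k-1,i,t}$, and averaging over the minibatch together with the column-wise outer-product expansion of $b_{k,t}f_{k-1,t}^*$ yields the claim. You merely spell out in more detail what the paper compresses into one line (the adjoint of $M\mapsto Mf_{k-1,i,t}$, the boundary case $k=l+1$, and the $n^{-\frac{1}{2}}$ bookkeeping), all of which checks out.
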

\begin{proof}
As $\nabla_{\theta_{k,t}} \ell_{i,t} = (\nabla_{\theta_{k,t} f_{k-1,i,t}} \ell_{i,t}) \otimes f_{k-1,i,t} = (\nabla_{\theta_{k,t} f_{k-1,i,t}} \ell_{i,t}) \otimes f_{k-1,i,t} = b_{k,i,t} \otimes f_{k-1,i,t}$ by the chain rule, we have $\nabla_{\theta_{k,t}} \Ell_t(\theta_t) = \frac{1}{n} \sum_{i=1}^n \nabla_{\theta_{k,t}} \ell_{i,t} = \frac{1}{n} \sum_{i=1}^n b_{k,i,t} \otimes f_{k-1,i,t} = b_{k,t} f_{k-1,t}^*$.
\end{proof}

We show that the Frobenius norm of a gradient scales with the average norms of the forward and backward vectors, the soft ranks of the forward and backward matrices and the cosine alignment of the corresponding Gram matrices.

\begin{proposition}[Squared Frobenius Norm of Gradients]\label{prop:nabla_k_t_norm}
For all $k \in [1:l+1]$ and $t \in \N$, we have
\[
\Vert \nabla_{k,t} \Vert^2
= \frac{ \Vert f_{k-1,t} \Vert^2 \Vert b_{k,t} \Vert^2 }{ R(f_{k-1,t})^{\frac{1}{2}} R(b_{k,t})^{\frac{1}{2}} } \cos(f_{k-1,t}^* f_{k-1,t}, b_{k,t}^* b_{k,t}).
\]
\end{proposition}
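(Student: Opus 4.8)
The plan is to reduce everything to the factorization $\nabla_{k,t} = b_{k,t} f_{k-1,t}^*$ supplied by Proposition~\ref{prop:nabla_k_t_factorization} and then unwind the definitions of the soft rank and the cosine appearing on the right-hand side. First I would compute the Frobenius norm of the gradient directly. Writing $\Vert A \Vert^2 = \tr(A^* A)$ and using the cyclic invariance of the trace,
\[
\Vert \nabla_{k,t} \Vert^2 = \tr\!\left( f_{k-1,t} b_{k,t}^* b_{k,t} f_{k-1,t}^* \right) = \tr\!\left( (f_{k-1,t}^* f_{k-1,t})(b_{k,t}^* b_{k,t}) \right) = \langle f_{k-1,t}^* f_{k-1,t}, b_{k,t}^* b_{k,t} \rangle,
\]
where the last identity uses that the Gram matrices $f_{k-1,t}^* f_{k-1,t}$ and $b_{k,t}^* b_{k,t}$ are symmetric, so their Frobenius inner product $\langle X_1, X_2 \rangle = \tr(X_1^* X_2)$ equals $\tr(X_1 X_2)$. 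This already expresses the target quantity as an inner product of the two forward/backward Gram matrices.

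Next I would simplify the right-hand side to the same inner product. The key algebraic observation is that the stated soft rank exponent of $\tfrac12$ converts the scale-over-soft-rank ratio into a Frobenius norm of a Gram matrix: since $R(f_{k-1,t}) = \Vert f_{k-1,t} \Vert^4 / \Vert f_{k-1,t}^* f_{k-1,t} \Vert^2$ by the convention fixed above the proposition, one has $R(f_{k-1,t})^{1/2} = \Vert f_{k-1,t} \Vert^2 / \Vert f_{k-1,t}^* f_{k-1,t} \Vert$, hence $\Vert f_{k-1,t} \Vert^2 / R(f_{k-1,t})^{1/2} = \Vert f_{k-1,t}^* f_{k-1,t} \Vert$, and identically for $b_{k,t}$. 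Substituting these two identities, the prefactor on the right-hand side becomes exactly $\Vert f_{k-1,t}^* f_{k-1,t} \Vert \, \Vert b_{k,t}^* b_{k,t} \Vert$.

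Finally I would absorb this prefactor into the cosine. By definition $\cos(f_{k-1,t}^* f_{k-1,t}, b_{k,t}^* b_{k,t}) = \langle f_{k-1,t}^* f_{k-1,t}, b_{k,t}^* b_{k,t} \rangle / (\Vert f_{k-1,t}^* f_{k-1,t} \Vert \, \Vert b_{k,t}^* b_{k,t} \Vert)$, so multiplying by the prefactor cancels the two norms and leaves precisely $\langle f_{k-1,t}^* f_{k-1,t}, b_{k,t}^* b_{k,t} \rangle$, matching the left-hand computation. The argument is essentially definitional and there is no genuine obstacle; the only points requiring care are tracking the $\tfrac12$ exponents so that the soft ranks reconstitute the Gram-matrix Frobenius norms exactly, and verifying the symmetry of the Gram matrices that turns $\tr(X_1 X_2)$ into the Frobenius inner product $\langle X_1, X_2 \rangle$.
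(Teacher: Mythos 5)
Your proposal is correct and follows essentially the same route as the paper's proof: factor the gradient via Proposition~\ref{prop:nabla_k_t_factorization}, use cyclicity of the trace to obtain $\tr(f_{k-1,t}^* f_{k-1,t}\, b_{k,t}^* b_{k,t})$, and then unwind the definitions of the cosine and the soft rank (with $R(f)^{1/2} = \Vert f\Vert^2/\Vert f^* f\Vert$) to match the stated right-hand side. The only cosmetic difference is that you simplify both sides toward the common inner product rather than rewriting the left-hand side into the right-hand side in one chain, which changes nothing substantive.
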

\begin{proof}
By Proposition~\ref{prop:nabla_k_t_factorization} and the cyclic property of the trace, we have
\begin{multline*}
\Vert \nabla_{k,t} \Vert_F^2
= \tr(f_{k-1,t} b_{k,t}^* b_{k,t} f_{k-1,t}^*) 
= \tr(f_{k-1,t}^* f_{k-1,t} b_{k,t}^* b_{k,t}) \\
= \Vert f_{k-1,t}^* f_{k-1,t} \Vert_F \Vert b_{k,t}^* b_{k,t} \Vert_F \cos(f_{k-1,t}^* f_{k-1,t}, b_{k,t}^* b_{k,t}) \\
= \Vert f_{k-1,t} \Vert_F^2 \Vert b_{k,t} \Vert_F^2 R(f_{k-1,t})^{-\frac{1}{2}} R(b_{k,t})^{-\frac{1}{2}} \cos(f_{k-1,t}^* f_{k-1,t}, b_{k,t}^* b_{k,t}).
\end{multline*}
\end{proof}

\begin{proposition}[Squared Frobenius Norm of Cumulative Updates]
For all $k \in [1:l+1]$ and $t \in \N$, we have
\begin{multline*}
\Vert \Delta_{k,t} \Vert^2
= \sum_{t_1=0}^{t-1} \sum_{t_2=0}^{t-1} \xi_{k,t_1} \xi_{k,t_2} \frac{ \Vert f_{k-1,t_1} \Vert \Vert f_{k-1,t_2} \Vert \Vert b_{k,t_1} \Vert \Vert b_{k,t_2} \Vert }{ R(f_{k-1,t_1})^{\frac{1}{4}} R(f_{k-1,t_2})^{\frac{1}{4}} R(b_{k,t_1})^{\frac{1}{4}} R(b_{k,t_2})^{\frac{1}{4}} } \\
\cos(b_{k,t_1} f_{k-1,t_1}^*, b_{k,t_2} f_{k-1,t_2}^*) \cos(f_{k-1,t_1}^* f_{k-1,t_1}, b_{k,t_1}^* b_{k,t_1})^{\frac{1}{2}} \cos(f_{k-1,t_2}^* f_{k-1,t_2}, b_{k,t_2}^* b_{k,t_2})^{\frac{1}{2}}
\end{multline*}
\end{proposition}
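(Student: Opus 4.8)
The plan is to reduce the claim to the two preceding propositions by exploiting the linearity of the scaled SGD recursion. First I would unroll the layerwise update $\theta_{k,t+1} = \theta_{k,t} - \eta \xi_{k,t} \nabla_{k,t}$ telescopically to obtain $\theta_{k,t} - \theta_{k,0} = -\eta \sum_{t'=0}^{t-1} \xi_{k,t'} \nabla_{k,t'}$, so that by the definition of the cumulative update, $\Delta_{k,t} = \eta^{-1}(\theta_{k,t}-\theta_{k,0}) = -\sum_{t'=0}^{t-1} \xi_{k,t'} \nabla_{k,t'}$ is just a scaled sum of past layerwise gradients. Expanding the squared Frobenius norm bilinearly then yields the double sum $\Vert \Delta_{k,t} \Vert^2 = \sum_{t_1=0}^{t-1}\sum_{t_2=0}^{t-1} \xi_{k,t_1}\xi_{k,t_2} \langle \nabla_{k,t_1}, \nabla_{k,t_2} \rangle$, where the overall sign squares away harmlessly.

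The next step is to split each cross term into a magnitude part and a directional part by writing $\langle \nabla_{k,t_1}, \nabla_{k,t_2} \rangle = \Vert \nabla_{k,t_1} \Vert \, \Vert \nabla_{k,t_2} \Vert \cos(\nabla_{k,t_1}, \nabla_{k,t_2})$. For the directional factor, Proposition~\ref{prop:nabla_k_t_factorization} gives $\nabla_{k,t} = b_{k,t} f_{k-1,t}^*$, so the cosine is exactly $\cos(b_{k,t_1} f_{k-1,t_1}^*, b_{k,t_2} f_{k-1,t_2}^*)$, which is precisely the empirical cross-covariance alignment term appearing in the statement. For the two magnitude factors, I would take square roots in Proposition~\ref{prop:nabla_k_t_norm} to get $\Vert \nabla_{k,t} \Vert = \frac{\Vert f_{k-1,t} \Vert \, \Vert b_{k,t} \Vert}{R(f_{k-1,t})^{1/4} R(b_{k,t})^{1/4}} \cos(f_{k-1,t}^* f_{k-1,t}, b_{k,t}^* b_{k,t})^{1/2}$, and substitute this for both $t_1$ and $t_2$. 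Collecting the three factors reproduces the claimed summand verbatim, and summing over $t_1,t_2$ completes the proof.

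I expect essentially no genuine obstacle here, since all the nontrivial geometric content — the factorization of the gradient into a forward--backward outer product and the decomposition of its norm into scale, soft rank and Gram-cosine components — has already been established in the two earlier propositions. The present result is a short bilinear bookkeeping exercise built on top of them. The only points that warrant a moment of care are confirming that the sign of $\Delta_{k,t}$ is immaterial because it enters quadratically, and that $R(\cdot)$ and $\cos(\cdot,\cdot)$ are interpreted for the forward and backward matrices $f_{k-1,t}, b_{k,t} \in \R^{m_k \times n}$ in the sense fixed in the appendix, so that the factorization of $\Vert \nabla_{k,t} \Vert$ applies cleanly at each time index.
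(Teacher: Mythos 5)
Your proposal is correct and follows essentially the same route as the paper: unroll the update to write $\Delta_{k,t} = -\sum_{t'=0}^{t-1}\xi_{k,t'}\nabla_{k,t'}$, expand the squared Frobenius norm into the double sum of $\tr(\nabla_{k,t_1}^*\nabla_{k,t_2}) = \Vert\nabla_{k,t_1}\Vert\,\Vert\nabla_{k,t_2}\Vert\cos(\nabla_{k,t_1},\nabla_{k,t_2})$, and substitute from Proposition~\ref{prop:nabla_k_t_factorization} for the cosine and the square root of Proposition~\ref{prop:nabla_k_t_norm} for each norm. Your explicit remarks about the sign squaring away and the interpretation of $R(\cdot)$ for the matrices $f_{k-1,t}, b_{k,t}$ are correct and merely make explicit what the paper leaves implicit.
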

\begin{proof}
Noting that $\Vert \Delta_{k,t} \Vert^2 = \sum_{t_1=0}^{t-1} \sum_{t_2=0}^{t-1} \xi_{k,t_1} \xi_{k,t_2} \tr(\nabla_{k,t_1}^* \nabla_{k,t_2})$ and $\tr(\nabla_{k,t_1}^* \nabla_{k,t_2}) = \Vert \nabla_{k,t_1} \Vert \Vert \nabla_{k,t_2}) \Vert \cos(\nabla_{k,t_1}, \nabla_{k,t_2})$, we get the result from Proposition~\ref{prop:nabla_k_t_factorization} and Proposition~\ref{prop:nabla_k_t_norm}.
\end{proof}

For $k_1,k_2 \in [1:l+1]$, $i \in [1:n]$ and $t \in \N$, note that 
\[
h_{k_1,k_2,i,t}  = (\partial_{\theta_{k_1,t} f_{k_1-1,i,t}} N_{i,t})^* (\partial \nabla \ell_{i,t}) (\partial_{\theta_{k_2,t} f_{k_2-1,i,t}} N_{i,t}),
\]
so that we can write the Hessians recursively as
\[
h_{l+1,l+1,i,t} = \partial \nabla \ell_{i,t},
\]
\[
h_{k_1,k_2,i,t} 
= m_{k_1}^{-\frac{1}{2}} D_{\phi'\left( m^{\frac{1}{2}} \theta_{k_1} f_{k_1-1,i,t} \right)} m^{\frac{1}{2}} \theta_{k_1+1}^* h_{k_1+1,k_2,i,t} \text{ for } k_1 \in [1:l], k_2 \in [1:l+1]
\]
and
\[
h_{k_1,k_2,i,t} = \left( m_{k_2}^{-\frac{1}{2}} D_{\phi'\left( m^{\frac{1}{2}} \theta_{k_2} f_{k_2-1,i,t} \right)} m^{\frac{1}{2}} \theta_{k_2+1}^* h_{k_1,k_2+1,i,t}^* \right)^* \text{ for } k_1 \in [1:l+1], k_2 \in [1:l].
\]
Analogously for the Tressians as 
\[
h_{k_1,k_2,i,t}  = ((\partial_{\theta_{k_1,t} f_{k_1-1,i,t}} N_{i,t}) \otimes (\partial_{\theta_{k_2,t} f_{k_2-1,i,t}} N_{i,t}) \otimes (\partial_{\theta_{k_3,t} f_{k_3-1,i,t}} N_{i,t}))^{(2,4,6),(1,3,5)} (\partial\partial\nabla\ell_{i,t}).
\]
We can also write the cross layer adjoint Jacobians recursively as
\[
j_{k,k,i,t} = m^{\frac{1}{2}} m_k^{-\frac{1}{2}} D_{\phi'(m^{\frac{1}{2}} \theta_{k,t} f_{k-1,i,t})} \text{ for } k \in [1:l]
\]
and
\[
j_{k_1,k_2,i,t}
= \left( m_{k_2}^{-\frac{1}{2}} D_{\phi'(m^{\frac{1}{2}} \theta_{k_2,t} f_{k_2-1,i,t})} m^{\frac{1}{2}} \theta_{k_2,t} j_{k_1,k_2-1,i,t}^* \right)^* \text{ for } k_1 < k_2 \in [1:l].
\]
Define for all $t \in \N$ the tensors
\[
h_{k_1,k_2,t}
= \left[ n^{-\frac{1}{2}} h_{k_1,k_2,i,t} : i \in [1:n] \right] \in \R^{(m_{k_1} \times m_{k_2}) \times n} \text{ for } k_1,k_2 \in [1:l+1],
\]
\[
j_{k_1,k_2,t}
= \left[ n^{-\frac{1}{2}} j_{k_1,k_2,i,t} : i \in [1:n] \right] \in \R^{(m_{k_1} \times m_{k_2}) \times n} \text{ for } k_1 \in [1:l], k_2 \in [k_1:l],
\]
\[
f_{k_1,t} \asd f_{k_2,t}
= \left[ n^{-\frac{1}{2}} f_{k_1,i,t} \otimes f_{k_2,i,t} : i \in [1:n] \right] \in \R^{(m_{k_1} \times m_{k_2}) \times n} \text{ for } k_1,k_2 \in [0:l]
\]
and
\[
f_{k_1,t} \asd b_{k_2,t}
= \left[ n^{-\frac{1}{2}} f_{k_1,i,t} \otimes b_{k_2,i,t} : i \in [1:n] \right] \in \R^{(m_{k_1} \times m_{k_2}) \times n} \text{ for } k_1 \in [0:l], k_2 \in [1:l+1].
\]
Denote the layerwise Hessians $\partial\nabla_{k_1,k_2,t} = \partial_{\theta_{k_2,t}} \nabla_{\theta_{k_1,t}} \Ell_t(\theta_t) \in \R^{(m_{k_1} \times m_{k_1-1}) \times (m_{k_2} \times m_{k_2-1})}$ for all $k_1,k_2 \in [1:l+1]$ and $t \in \N$. We now show that such a Hessian consists of two empirical cross covariance tensors, one of the cross layer Hessians and the tensor products of the forward vectors of the given layers and another of the cross layer Jacobians and the tensor products of the forward and backward vectors of the given layers (which does not appear on the diagonal), with all four considered as random matrices with respect to the dataset.

\begin{proposition}[Factorization of Hessians]\label{prop:partial_nabla_k_1_k_2_t_factorization}
For all $t \in \N$, we have 
\begin{multline*}
\partial\nabla_{k,k,t}
= (h_{k,k,t} (f_{k-1,t}^{\asd 2})^*)^{((1,3),(2,4))} 
= \frac{1}{n} \sum_{i=1}^n (h_{k,k,i,t} \otimes f_{k-1,i,t}^{\otimes 2})^{((1,3),(2,4))} \text{ for } k \in [1:l+1],
\end{multline*}
\begin{multline*}
\partial\nabla_{k_1,k_2,t}
= (h_{k_1,k_2,t} (f_{k_1-1,t} \asd f_{k_2-1,t})^*)^{((1,3),(2,4))} 
+ (j_{k_1,k_2-1,t} (f_{k_1-1,t} \asd b_{k_2,t})^*)^{((1,3),(4,2))} \\
= \frac{1}{n} \sum_{i=1}^n (h_{k_1,k_2,i,t} \otimes f_{k_1-1,i,t} \otimes f_{k_2-1,i,t})^{((1,3),(2,4))} 
+ (j_{k_1,k_2-1,i,t} \otimes f_{k_1-1,i,t} \otimes b_{k_2,i,t})^{((1,3),(4,2))}
\end{multline*}
for $k_1 < k_2 \in [1:l+1]$ and $\partial\nabla_{k_1,k_2,t} = \partial\nabla_{k_2,k_1,t}^{((3,4),(1,2))}$ for $k_1 > k_2 \in [1:l+1]$.
\end{proposition}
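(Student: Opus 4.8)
The plan is to reduce everything to the per-example identity $\nabla_{\theta_{k_1,t}}\ell_{i,t} = b_{k_1,i,t}\otimes f_{k_1-1,i,t}$ established in the proof of Proposition~\ref{prop:nabla_k_t_factorization}, to differentiate it once more with respect to $\theta_{k_2,t}$ by the chain and product rules, and then to sum over $i$ using $\Ell_t = \frac{1}{n}\sum_{i=1}^n \ell_{i,t}$. The $n^{-\frac{1}{2}}$ factors in the definitions of $h_{k_1,k_2,t}$, $j_{k_1,k_2,t}$, $f_{k_1,t}\asd f_{k_2,t}$ and $f_{k_1,t}\asd b_{k_2,t}$ reassemble the minibatch averages automatically, so it suffices to match index patterns for a single example. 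Throughout I write $u_{k,i,t} = \theta_{k,t} f_{k-1,i,t}$ for the preactivation, so that $b_{k,i,t} = \nabla_{u_{k,i,t}}\ell_{i,t}$, $h_{k_1,k_2,i,t} = \partial_{u_{k_2,i,t}}\nabla_{u_{k_1,i,t}}\ell_{i,t}$ and $\partial_{u_{k_1,i,t}} f_{k_2,i,t} = j_{k_1,k_2,i,t}^*$, and I repeatedly use that $f_{k-1,i,t}$ depends only on $\theta_{1,t},\dots,\theta_{k-1,t}$ and is thus constant in $\theta_{k_2,t}$ whenever $k_2\geq k_1$.

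For the diagonal case $k_1=k_2=k$, only the first factor of $b_{k,i,t}\otimes f_{k-1,i,t}$ varies, and the chain rule through $u_{k,i,t}=\theta_{k,t}f_{k-1,i,t}$ together with $h_{k,k,i,t}=\partial_{u_{k,i,t}}b_{k,i,t}$ shows that $\partial_{\theta_{k,t}}b_{k,i,t}$ has entries $(h_{k,k,i,t})_{a,c}(f_{k-1,i,t})_d$ in the index $a$ of $b_{k,i,t}$ and the pair $(c,d)$ of the $\theta_{k,t}$-slot. Tensoring with the unchanged $f_{k-1,i,t}$ yields entries $(h_{k,k,i,t})_{a,c}(f_{k-1,i,t})_b(f_{k-1,i,t})_d$, which I would verify are exactly the entries of $(h_{k,k,i,t}\otimes f_{k-1,i,t}^{\otimes 2})^{((1,3),(2,4))}$ once the $\Theta_k$-pairs $(a,b)$ and $(c,d)$ are identified with the groupings $(1,3)$ and $(2,4)$; averaging over $i$ gives the diagonal formula, with no $j$-term since there is no intermediate weight to differentiate.

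The off-diagonal case $k_1<k_2$ is where the second term appears, and the key step is the factorization $b_{k_1,i,t} = j_{k_1,k_2-1,i,t}\,\theta_{k_2,t}^*\, b_{k_2,i,t}$, valid because the entire dependence of $\ell_{i,t}$ on $u_{k_1,i,t}$ factors through $u_{k_2,i,t}$, so that $b_{k_1,i,t}=(\partial_{u_{k_1}} u_{k_2})^* b_{k_2,i,t}$ with $\partial_{u_{k_1}} u_{k_2} = \theta_{k_2,t}\, j_{k_1,k_2-1,i,t}^*$. The same argument applied to $N_{i,t}$ in place of $\ell_{i,t}$ gives $h_{k_1,k_2,i,t} = j_{k_1,k_2-1,i,t}\,\theta_{k_2,t}^*\, h_{k_2,k_2,i,t}$. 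Since $j_{k_1,k_2-1,i,t}$ is constant in $\theta_{k_2,t}$, differentiating the factorization by the product rule produces exactly two contributions: differentiating the explicit $\theta_{k_2,t}^*$ gives the Jacobian term with entries $(j_{k_1,k_2-1,i,t})_{a,d}(b_{k_2,i,t})_c$, while differentiating $b_{k_2,i,t}$ reuses the diagonal computation and collapses via $j_{k_1,k_2-1,i,t}\theta_{k_2,t}^* h_{k_2,k_2,i,t}=h_{k_1,k_2,i,t}$ into the Hessian term with entries $(h_{k_1,k_2,i,t})_{a,c}(f_{k_2-1,i,t})_d$. Tensoring each with $f_{k_1-1,i,t}$ and reading off the $\Theta_{k_1}$- and $\Theta_{k_2}$-slots identifies the first with the grouping $((1,3),(2,4))$ of $h_{k_1,k_2,i,t}\otimes f_{k_1-1,i,t}\otimes f_{k_2-1,i,t}$ and the second with the grouping $((1,3),(4,2))$ of $j_{k_1,k_2-1,i,t}\otimes f_{k_1-1,i,t}\otimes b_{k_2,i,t}$, the reversed pair $(4,2)$ encoding that differentiating the transpose $\theta_{k_2,t}^*$ places the $b_{k_2,i,t}$ index ahead of the $j$ index in the $\Theta_{k_2}$-slot. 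Averaging over $i$ gives the stated formula, and the case $k_1>k_2$ follows from Clairaut symmetry $\partial\nabla_{k_1,k_2,t}=\partial\nabla_{k_2,k_1,t}^{((3,4),(1,2))}$, the transposition merely swapping the two blocks of the grouped tensor.

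The main obstacle I anticipate is bookkeeping rather than conceptual: keeping the four tensor slots straight through the product rule and confirming that the two terms land precisely in the groupings $((1,3),(2,4))$ and $((1,3),(4,2))$, in particular that differentiating the transpose is what forces the reversed pair $(4,2)$. The one genuinely structural ingredient, where the most care is needed, is establishing the identities $b_{k_1,i,t}=j_{k_1,k_2-1,i,t}\theta_{k_2,t}^* b_{k_2,i,t}$ and $h_{k_1,k_2,i,t}=j_{k_1,k_2-1,i,t}\theta_{k_2,t}^* h_{k_2,k_2,i,t}$ cleanly from the observation that all downstream dependence on $u_{k_1}$ passes through $u_{k_2}$.
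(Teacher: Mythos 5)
Your proposal is correct and follows essentially the same route as the paper: both isolate the single explicit occurrence of $\theta_{k_2,t}$ in the backward chain, apply the product rule, and obtain the Hessian term from the downstream dependence and the Jacobian term from that explicit occurrence (with the same groupings $((1,3),(2,4))$ and $((1,3),(4,2))$ and a vanishing $j$-term on the diagonal). The only difference is organizational --- you factor $b_{k_1,i,t}=j_{k_1,k_2-1,i,t}\theta_{k_2,t}^*b_{k_2,i,t}$ and reduce the off-diagonal case to the diagonal one, while the paper writes $\nabla_{\theta_{k_1,t}}\ell_{i,t}=((\partial_{\theta_{k_1,t}f_{k_1-1,i,t}}N_{i,t})^*\nabla\ell_{i,t})\otimes f_{k_1-1,i,t}$ and differentiates the network Jacobian and the output gradient separately; the two product-rule terms coincide.
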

\begin{proof}
Note that we can express the gradient corresponding to an individual datapoint as
\[
\nabla_{\theta_{k_1,t}} \ell_{i,t}
= ((\partial_{\theta_{k_1,t} f_{k_1-1,i,t}} N_{i,t})^* (\nabla \ell_{i,t})) \otimes f_{k_1-1,i,t}.
\]
In order to apply the product rule, we need to differentiate all three individual components above with respect to $\theta_{k_2,t}$. First, noting that
\begin{multline*}
\partial_{\theta_{k_1,t} f_{k_1-1,i,t}} N_{i,t}
= m^{\frac{1}{2}} \theta_{l+1,t} m_l^{-\frac{1}{2}} D_{\phi'(m^{\frac{1}{2}} \theta_{l,t} f_{l-1,i,t})} \cdots m^{\frac{1}{2}} \theta_{k_1+1,t} m_{k_1}^{-\frac{1}{2}} D_{\phi'(m^{\frac{1}{2}} \theta_{k_1,t} f_{k_1-1,i,t})} \\
= ( m^{\frac{1}{2}} \theta_{l+1,t} m_l^{-\frac{1}{2}} D_{\phi'(m^{\frac{1}{2}} \theta_{l,t} f_{l-1,i,t})} \cdots m^{\frac{1}{2}} \theta_{k_2+1,t} m_{k_2}^{-\frac{1}{2}} D_{\phi'(m^{\frac{1}{2}} \theta_{k_2,t} f_{k_2-1,i,t})} ) \theta_{k_2,t} \\
( m^{\frac{1}{2}} m_{k_2-1}^{-\frac{1}{2}} D_{\phi'(m^{\frac{1}{2}} \theta_{k_2-1,t} f_{k_2-2,i,t})} \cdots m^{\frac{1}{2}} \theta_{k_1+1,t} m_{k_1}^{-\frac{1}{2}} D_{\phi'(m^{\frac{1}{2}} \theta_{k_1,t} f_{k_1-1,i,t})} ) \\
= (\partial_{\theta_{k_2,t} f_{k_2-1,i,t}} N_{i,t}) \theta_{k_2,t} (\partial_{\theta_{k_1,t} f_{k_1-1,i,t}} f_{k_2-1,i,t})
\end{multline*}
if $k_1 < k_2$, as $\phi''(s)=0$ for almost every $s \in \R$ we have
\[
\partial_{\theta_{k_2,t}} (\partial_{\theta_{k_1,t} f_{k_1-1,i,t}} N_{i,t})^*
= ((\partial_{\theta_{k_2,t} f_{k_2-1,i,t}} N_{i,t}) \otimes (\partial_{\theta_{k_1,t} f_{k_1-1,i,t}} f_{k_2-1,i,t}))^{(4,1),(2,3)}.
\]
We also have
\begin{multline*}
\partial_{\theta_{k_2,t}} \nabla \ell_{i,t}
= (\partial_{\theta_{k_2,t} f_{k_2-1,i,t}} \nabla \ell_{i,t}) \otimes f_{k_2-1,i,t} \\
= ((\partial\nabla \ell_{i,t}) (\partial_{\theta_{k_2,t} f_{k_2-1,i,t}} N_{i,t})) \otimes f_{k_2-1,i,t}
= h_{l+1,k_2,i,t} \otimes f_{k_2-1,i,t}
\end{multline*}
and if $k_1 > k_2$ then
\[
\partial_{\theta_{k_2,t}} f_{k_1-1,i,t}
= (\partial_{\theta_{k_2,t} f_{k_2-1,i,t}} f_{k_1-1,i,t}) \otimes f_{k_2-1,i,t}.
\]
By the product rule, we then have
\begin{multline*}
\partial_{\theta_{k_2,t}} \nabla_{\theta_{k_1,t}} \ell_{i,t}
= (((\partial_{\theta_{k_1,t} f_{k_1-1,i,t}} N_{i,t})^* (\partial \nabla \ell_{i,t}) (\partial_{\theta_{k_2,t} f_{k_2-1,i,t}} N_{i,t})) \otimes f_{k_1-1,i,t} \otimes f_{k_2-1,i,t})^{(1,3),(2,4)} \\
+ \begin{cases}
((\partial_{\theta_{k_1,t} f_{k_1-1,i,t}} f_{k_2-1,i,t}) \otimes f_{k_1-1,i,t} \otimes ((\partial_{\theta_{k_2,t} f_{k_2-1,i,t}} N_{i,t})^* (\nabla \ell_{i,t})))^{(1,3),(4,2)} \text{ if } k_1 < k_2, \\
((\partial_{\theta_{k_2,t} f_{k_2-1,i,t}} f_{k_1-1,i,t}) \otimes f_{k_2-1,i,t} \otimes ((\partial_{\theta_{k_1,t} f_{k_1-1,i,t}} N_{i,t})^* (\nabla \ell_{i,t})))^{(4,1),(2,3)} \text{ if } k_1 > k_2 \text{ and } \\
0 \text{ if } k_1=k_2
\end{cases} \\
= (h_{k_1,k_2,i,t} \otimes f_{k_1-1,i,t} \otimes f_{k_2-1,i,t})^{(1,3),(2,4)}
+ \begin{cases}
(j_{k_1,k_2-1,i,t} \otimes f_{k_1-1,i,t} \otimes b_{k_2,i,t})^{(1,3),(4,2)} \text{ if } k_1 < k_2, \\
(j_{k_2,k_1-1,i,t} \otimes f_{k_2-1,i,t} \otimes b_{k_1,i,t})^{(4,1),(2,3)} \text{ if } k_1 > k_2 \text{ and } \\
0 \text{ if } k_1=k_2.
\end{cases}.
\end{multline*}
We get the claim since $\partial\nabla_{k_1,k_2,t} = \partial_{\theta_{k_2,t}} \nabla_{\theta_{k_1,t}} \Ell_t  = \frac{1}{n} \sum_{i=1}^n \partial_{\theta_{k_2,t}} \nabla_{\theta_{k_1,t}} \ell_{i,t}$.
\end{proof}

\begin{proposition}[Factorization of Tressians]\label{prop:partial_partial_nabla_k_1_k_2_k_3_t_factorization}
For all $t \in \N$, we have 
\[
\partial\partial\nabla_{k,k,k,t}
= \frac{1}{n} \sum_{i=1}^n (z_{k,k,k,i,t} \otimes f_{k-1,i,t}^{\otimes 3})^{((1,4),(2,5)),(3,6)} \text{ for } k \in [1:l+1],
\]
\begin{multline*}
\partial\partial\nabla_{k_1,k_1,k_2,t}
= \frac{1}{n} \sum_{i=1}^n (z_{k_1,k_1,k_2,i,t} \otimes f_{k_1-1,i,t}^{\otimes 2} \otimes f_{k_2-1,i,t})^{((1,4),(2,5)),(3,6)} \\
+ 2 (h_{k_1,k_2,i,t} \otimes j_{k_1,k_2-1,i,t} \otimes f_{k_1-1,i,t}^{\otimes 2})^{((1,5),(3,6)),(2,4)}, \\
\partial\partial\nabla_{k_1,k_2,k_1,t} = \partial\partial\nabla_{k_1,k_1,k_2,t}^{((1,2),(5,6)),(3,4)}
\text{ and } \\
\partial\partial\nabla_{k_2,k_1,k_1,t} = \partial\partial\nabla_{k_1,k_1,k_2,t}^{((5,6),(1,2)),(3,4)}
\text{ for } k_1 < k_2 \in [1:l+1],
\end{multline*}

\begin{multline*}
\partial\partial\nabla_{k_1,k_1,k_2,t}
= \frac{1}{n} \sum_{i=1}^n (z_{k_1,k_1,k_2,i,t} \otimes f_{k_1-1,i,t}^{\otimes 2} \otimes f_{k_2-1,i,t})^{((1,4),(2,5)),(3,6)} \\
+ 2 (h_{k_1,k_1,i,t} \otimes j_{k_2,k_1-1,i,t} \otimes f_{k_1-1,i,t} \otimes f_{k_2-1,i,t})^{((1,5),(3,6)),(2,4)}, \\
\partial\partial\nabla_{k_1,k_2,k_1,t} = \partial\partial\nabla_{k_1,k_1,k_2,t}^{((1,2),(5,6)),(3,4)}
\text{ and } \\
\partial\partial\nabla_{k_2,k_1,k_1,t} = \partial\partial\nabla_{k_1,k_1,k_2,t}^{((5,6),(1,2)),(3,4)}
\text{ for } k_1 > k_2 \in [1:l+1] \text{ and }
\end{multline*}
\begin{multline*}
\partial\nabla_{k_1,k_2,k_3,t}
= \frac{1}{n} \sum_{i=1}^n (z_{k_1,k_2,k_3,i,t} \otimes f_{k_1-1,i,t} \otimes f_{k_2-1,i,t} \otimes f_{k_3-1,i,t})^{((1,4),(2,5)),(3,6)} \\
+ (h_{k_1,k_3,i,t} \otimes j_{k_2,k_3-1,i,t} \otimes f_{k_1-1,i,t} \otimes f_{k_2-1,i,t})^{((1,5),(3,6)),(2,4)} \\
+ (h_{k_2,k_3,i,t} \otimes j_{k_1,k_3-1,i,t} \otimes f_{k_1-1,i,t} \otimes f_{k_2-1,i,t})^{((3,5),(1,6)),(2,4)} \\
+ (h_{k_2,k_3,i,t} \otimes j_{k_1,k_2-1,i,t} \otimes f_{k_1-1,i,t} \otimes f_{k_3-1,i,t})^{((3,5),(1,4)),(2,6)} \\
+ (j_{k_1,k_2-1,i,t} \otimes j_{k_2,k_3-1,i,t} \otimes f_{k_1-1,i,t} \otimes b_{k_3,i,t})^{((1,5),(3,2)),(6,4)}, \\
\partial\partial\nabla_{k_1,k_3,k_2,t} = \partial\partial\nabla_{k_1,k_2,k_3,t}^{((1,2),(5,6)),(3,4)}, \\
\partial\partial\nabla_{k_3,k_2,k_1,t} = \partial\partial\nabla_{k_1,k_2,k_3,t}^{((5,6),(3,4)),(1,2)} \text{ and } \\
\partial\partial\nabla_{k_2,k_1,k_3,t} = \partial\partial\nabla_{k_1,k_2,k_3,t}^{((3,4),(1,2)),(5,6)} \text{ for } k_1 < k_2 < k_3 \in [1:l+1].
\end{multline*}

\end{proposition}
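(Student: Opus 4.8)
The plan is to obtain the Tressian factorization by differentiating the individual-datapoint Hessian expression derived in the proof of Proposition~\ref{prop:partial_nabla_k_1_k_2_t_factorization} once more with respect to $\theta_{k_3,t}$, and then averaging over the minibatch via $\partial\partial\nabla_{k_1,k_2,k_3,t} = \frac{1}{n} \sum_{i=1}^n \partial_{\theta_{k_3,t}} \partial_{\theta_{k_2,t}} \nabla_{\theta_{k_1,t}} \ell_{i,t}$. Recall from that proof that for a single datapoint
\[
\partial_{\theta_{k_2,t}} \nabla_{\theta_{k_1,t}} \ell_{i,t}
= (h_{k_1,k_2,i,t} \otimes f_{k_1-1,i,t} \otimes f_{k_2-1,i,t})^{(1,3),(2,4)}
+ (\text{a Jacobian term, present only off the diagonal}),
\]
so it suffices to apply $\partial_{\theta_{k_3,t}}$ and repeatedly use the product rule together with three building-block derivatives: $\partial_{\theta_{k_3,t}} f_{k-1,i,t} = (\partial_{\theta_{k_3,t} f_{k_3-1,i,t}} f_{k-1,i,t}) \otimes f_{k_3-1,i,t}$ when $k_3 < k$ (a cross layer Jacobian tensored with a forward vector), the analogous rule for backward vectors (which produces a Hessian factor since $b_{k,i,t} = (\partial_{\theta_{k,t} f_{k-1,i,t}} N_{i,t})^* (\nabla \ell_{i,t})$), and the key fact that $\phi''(s) = 0$ for almost every $s$, so that differentiating any cross layer Jacobian $j$ with respect to an intermediate weight matrix again yields a cross layer Jacobian tensored with a forward vector rather than a curvature term.

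First I would treat the Hessian-times-forward term. Differentiating $h_{k_1,k_2,i,t} = (\partial_{\theta_{k_1,t} f_{k_1-1,i,t}} N_{i,t})^* (\partial\nabla \ell_{i,t}) (\partial_{\theta_{k_2,t} f_{k_2-1,i,t}} N_{i,t})$ by the product rule splits into three contributions: differentiating $\partial\nabla \ell_{i,t}$ produces $\partial\partial\nabla \ell_{i,t}$, which, contracted with the three adjoint output Jacobians, assembles precisely into $z_{k_1,k_2,k_3,i,t}$; differentiating either adjoint output Jacobian $\partial_{\theta f} N$ picks out the weight matrix $\theta_{k_3,t}$ and, using $\phi''=0$, yields a cross layer Jacobian, producing a Hessian-times-Jacobian cross term. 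Differentiating the two forward factors $f_{k_1-1,i,t}$ and $f_{k_2-1,i,t}$ produces further Hessian-times-Jacobian cross terms. I would then differentiate the Jacobian term of the Hessian in the same way: differentiating its Jacobian $j$ gives (via $\phi''=0$) a second Jacobian, differentiating its forward factor gives yet another Jacobian, and differentiating its backward factor produces a Hessian; these are what assemble the Jacobian-times-Jacobian cross term appearing in the fully distinct case.

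Next I would collect terms by case on the ordering of $k_1,k_2,k_3$. The stated permutation identities (e.g.\ $\partial\partial\nabla_{k_1,k_3,k_2,t} = \partial\partial\nabla_{k_1,k_2,k_3,t}^{((1,2),(5,6)),(3,4)}$) follow from the symmetry of mixed third partial derivatives, so it is enough to compute one representative in each of the cases $k_1 = k_2 = k_3$, $k_1 = k_2 \neq k_3$ (split into $k_1 < k_3$ and $k_1 > k_3$), and $k_1 < k_2 < k_3$, and then transport the result by relabeling. The multiplicities---the factor of $2$ in the two-index-coincidence cases and the single coefficients on each of the three Hessian-Jacobian terms and the one Jacobian-Jacobian term in the fully distinct case---arise from counting how many product-rule branches collapse to the same tensor once indices coincide.

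The main obstacle will be the index bookkeeping: after each application of the product rule one must track the upper-index grouping and permutation labels so that the contractions land on the correct legs, and one must correctly enumerate which branches survive in each ordering case and with what multiplicity. The curvature-versus-connection distinction---whether a given differentiation produces a Hessian or Tressian factor or merely a Jacobian, governed entirely by $\phi'' = 0$---is what keeps the list of surviving terms finite and is the conceptual crux, while the calculation itself is a disciplined but lengthy expansion rather than a source of genuine difficulty.
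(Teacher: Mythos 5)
Your proposal takes essentially the same route as the paper: differentiate the per-datapoint Hessian factorization from Proposition~\ref{prop:partial_nabla_k_1_k_2_t_factorization} once more with respect to $\theta_{k_3,t}$ via the product rule, use $\phi''=0$ almost everywhere so that no activation-curvature terms appear, identify the contraction of $\partial\partial\nabla\ell_{i,t}$ with the adjoint output Jacobians as the Tressian, reduce to one ordered representative per coincidence pattern by the symmetry of mixed partials, and average over the minibatch. The one point to tighten is your enumeration of surviving branches: in the ordered representative $k_1<k_2<k_3$ the forward factors $f_{k_1-1,i,t}$ and $f_{k_2-1,i,t}$ (and likewise $j_{k_1,k_2-1,i,t}$) do not depend on $\theta_{k_3,t}$ at all, so their derivatives vanish identically, and the five surviving terms come exclusively from differentiating the three occurrences of adjoint output Jacobians ($\partial_{\theta_{k_1,t}f_{k_1-1,i,t}}N_{i,t}$ once and $\partial_{\theta_{k_2,t}f_{k_2-1,i,t}}N_{i,t}$ twice), the output Hessian $\partial\nabla\ell_{i,t}$ and the output gradient $\nabla\ell_{i,t}$. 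Your claim that differentiating the forward factors produces further Hessian--Jacobian cross terms is only true when the third index is not the largest (e.g.\ the $k_1>k_2$ coincidence case), which under your own plan is recovered by relabeling rather than computed directly; keeping this straight is precisely what makes the multiplicities (the factor of $2$ and the three distinct Hessian--Jacobian terms) come out as stated.
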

\begin{proof}
We focus on the case $k_1 < k_2 < k_3 \in [1:l+1]$, and the rest will follow by reindexing and dropping terms. Note that by Proposition~\ref{prop:partial_nabla_k_1_k_2_t_factorization} we have
\begin{multline*}
\partial_{\theta_{k_2,t}} \nabla_{\theta_{k_1,t}} \ell_{i,t} \\
= (((\partial_{\theta_{k_1,t} f_{k_1-1,i,t}} N_{i,t})^* (\partial\nabla \ell_{i,t}) (\partial_{\theta_{k_2,t} f_{k_2-1,i,t}} N_{i,t})) \otimes f_{k_1-1,i,t} \otimes f_{k_2-1,i,t})^{((1,3),(2,4))} \\
+ (j_{k_1,k_2-1,i,t} \otimes f_{k_1-1,i,t} \otimes ((\partial_{\theta_{k_2,t} f_{k_2-1,i,t}} N_{i,t})^* (\nabla \ell_{i,t}))^{((1,3),(4,2))}.
\end{multline*}
As $k_3 > k_2 > k_1$, we only have to differentiate the terms $\partial_{\theta_{k_1,t} f_{k_1-1,i,t}} N_{i,t}$, $\partial_{\theta_{k_2,t} f_{k_2-1,i,t}} N_{i,t}$ (which appears in two terms), $\partial\nabla \ell_{i,t}$ and $\nabla \ell_{i,t}$. Then we apply the product rule to have $\partial_{\theta_{k_3,t}} \partial_{\theta_{k_2,t}} \nabla_{\theta_{k_1,t}} \ell_{i,t}$ as a sum of five terms. Summing these up over $i \in [1:n]$ gives us the claim. We differentiate the listed terms as already done in the proof of Proposition~\ref{prop:partial_nabla_k_1_k_2_t_factorization}, the only exception being $\partial\nabla \ell_{i,t}$. For this we have
\begin{multline*}
\partial_{\theta_{k_3,t}} \partial\nabla \ell_{i,t}
= (\partial_{\theta_{k_3,t} f_{k_3-1,i,t}} \partial\nabla \ell_{i,t}) \otimes f_{k_3-1,i,t} \\
= ((\partial_{\theta_{k_3,t} f_{k_3-1,i,t}} N_{i,t})^* (\partial\partial\nabla \ell_{i,t})) \otimes f_{k_3-1,i,t}
= z_{k_3,l+1,l+1,i,t} \otimes f_{k_3-1,i,t}.
\end{multline*}
The claim then follows from the product rule.
\end{proof}

\begin{proposition}[First Order Term]\label{prop:first_order_term}
For all $t \in \N$, we have
\[
\langle \Xi_t \nabla_t, \nabla_t \rangle
= \langle D_{\xi_t} \tau_t, T^{(1)}_t \rangle
\]
with $\xi_t, \tau_t, T^{(1)}_t \in \R^{l+1}$ defined as $\xi_t = [\xi_{k,t} : k \in [1:l+1]]$,
\[
\tau_t = \left[ \frac{ \Vert f_{k-1,t} \Vert \Vert b_{k,t} \Vert }{ R(f_{k-1,t})^{\frac{1}{4}} R(b_{k,t})^{\frac{1}{4}} } : k \in [1:l+1] \right]
\]
and
\[
T^{(1)}_t = \left[ \frac{ \Vert f_{k-1,t} \Vert \Vert b_{k,t} \Vert }{ R(f_{k-1,t})^{\frac{1}{4}} R(b_{k,t})^{\frac{1}{4}} } \cos(f_{k-1,t}^* f_{k-1,t}, b_{k,t}^* b_{k,t}) : k \in [1:l+1] \right].
\]
\end{proposition}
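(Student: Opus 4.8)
The plan is to reduce the claim to the per-layer norm identity of Proposition~\ref{prop:nabla_k_t_norm} and then read off the asserted factorization directly from the definitions of $\tau_t$ and $T^{(1)}_t$. The only genuine work is bookkeeping, since the analytic content has already been absorbed into Proposition~\ref{prop:nabla_k_t_norm}.

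First I would exploit the block-diagonal structure of the preconditioner. Because $\Xi_t = \bigoplus_{k=1}^{l+1} \xi_{k,t} \Id_{\Theta_k}$ acts blockwise as $\Xi_t \nabla_t = [\xi_{k,t} \nabla_{k,t} : k \in [1:l+1]]$, and the inner product on $\Theta = \prod_{k} \Theta_k$ is the sum of the Frobenius inner products on the blocks $\Theta_k$, the pairing splits along layers as
\[
\langle \Xi_t \nabla_t, \nabla_t \rangle = \sum_{k=1}^{l+1} \xi_{k,t} \langle \nabla_{k,t}, \nabla_{k,t} \rangle = \sum_{k=1}^{l+1} \xi_{k,t} \Vert \nabla_{k,t} \Vert^2 .
\]

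Next I would substitute Proposition~\ref{prop:nabla_k_t_norm}, which gives, for each $k$,
\[
\Vert \nabla_{k,t} \Vert^2 = \frac{ \Vert f_{k-1,t} \Vert^2 \Vert b_{k,t} \Vert^2 }{ R(f_{k-1,t})^{\frac{1}{2}} R(b_{k,t})^{\frac{1}{2}} } \cos(f_{k-1,t}^* f_{k-1,t}, b_{k,t}^* b_{k,t}).
\]
The remaining step is to recognize this scalar as the product $\tau_{t,k} T^{(1)}_{t,k}$. By definition $\tau_{t,k} = \Vert f_{k-1,t} \Vert \Vert b_{k,t} \Vert R(f_{k-1,t})^{-\frac{1}{4}} R(b_{k,t})^{-\frac{1}{4}}$, while $T^{(1)}_{t,k}$ is the same quantity carrying the extra cosine factor, so their product restores exactly the squared-norm prefactor together with a single copy of the cosine alignment. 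Hence $\xi_{k,t} \Vert \nabla_{k,t} \Vert^2 = \xi_{k,t} \tau_{t,k} T^{(1)}_{t,k}$, and summing over $k$ identifies the right-hand side with $\langle D_{\xi_t} \tau_t, T^{(1)}_t \rangle = \sum_{k} \xi_{k,t} \tau_{t,k} T^{(1)}_{t,k}$, completing the argument.

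I do not expect any analytic obstacle; the one point requiring care is purely symbolic. One must check that the scale factor $\Vert f_{k-1,t} \Vert^2 \Vert b_{k,t} \Vert^2 / (R(f_{k-1,t})^{\frac{1}{2}} R(b_{k,t})^{\frac{1}{2}})$ has been split into two equal square-root halves, one of which (the scale) is placed in $\tau_t$ and the other (carrying the alignment) in $T^{(1)}_t$; in particular that the soft-rank exponents $\frac{1}{4}$ in $\tau_{t,k}$ and $T^{(1)}_{t,k}$ combine to the $\frac{1}{2}$ appearing in Proposition~\ref{prop:nabla_k_t_norm}. This symmetric splitting is precisely what lets the first-order coefficient be written as a bilinear pairing of a scale vector against an alignment-weighted vector, and it is the only place where the exponent bookkeeping should be double-checked.
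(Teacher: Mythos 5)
Your proposal is correct and follows exactly the paper's route: decompose $\langle \Xi_t \nabla_t, \nabla_t \rangle$ blockwise into $\sum_{k} \xi_{k,t} \Vert \nabla_{k,t} \Vert^2$ and substitute Proposition~\ref{prop:nabla_k_t_norm}, with the exponent bookkeeping ($\frac{1}{4}+\frac{1}{4}=\frac{1}{2}$ on the soft ranks, $\Vert\cdot\Vert\cdot\Vert\cdot\Vert = \Vert\cdot\Vert^2$ on the norms) checking out. Your write-up simply makes explicit the symmetric splitting that the paper's one-line proof leaves implicit.
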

\begin{proof}
Noting that $\langle \Xi_t \nabla_t, \nabla_t \rangle = \sum_{k=1}^{l+1} \xi_{k,t} \Vert \nabla_{k,t} \Vert^2$, the claim follows from Proposition~\ref{prop:nabla_k_t_norm}.
\end{proof}

\begin{proposition}[Second Order Term]\label{prop:second_order_term}
For all $t \in \N$, we have
\[
\langle (\Xi_t \nabla_t)^\otimes 2, \partial\nabla_t \rangle
= \langle (D_{\xi_t} \tau_t)^{\otimes 2}, T^{(2)}_t \rangle
\]
with $T^{(2)}_t \in \R^{(l+1) \times (l+1)}$ defined as
\begin{multline*}
T^{(2)}_{t,k,k}
= \frac{ \Vert f_{k-1,t}^{\asd 2} \Vert \Vert h_{k,k,t} \Vert }{ R(f_{k-1,t}^{\asd 2})^{\frac{1}{4}} R(h_{k_1,k_2,t})^{\frac{1}{4}} } 
\cos\left( h_{k,k,t}^* (b_{k,t}^{\otimes 2})^{((1,3),(2,4))}, (f_{k-1,t}^{\asd 2})^* (f_{k-1,t}^{\otimes 2})^{((1,3),(2,4))} \right) \\
\cos\left( h_{k,k,t} h_{k,k,t}^*, ((b_{k,t} b_{k,t}^*)^{\otimes 2})^{((1,3),(2,4))} \right)^{\frac{1}{2}} \\
\cos\left( (f_{k-1,t}^{\asd 2}) (f_{k-1,t}^{\asd 2})^*, ((f_{k-1,t} f_{k-1,t}^*)^{\otimes 2})^{((1,3),(2,4))} \right)^{\frac{1}{2}} \text{ for } k \in [1:l+1],
\end{multline*}
\begin{multline*}
T^{(2)}_{t,k_1,k_2}
= \frac{ \Vert f_{k_1-1,t}^{\asd 2} \Vert^{\frac{1}{2}} \Vert f_{k_2-1,t}^{\asd 2} \Vert^{\frac{1}{2}} \Vert h_{k_1,k_2,t} \Vert }{ R(f_{k_1-1,t}^{\asd 2})^{\frac{1}{8}} R(f_{k_2-1,t}^{\asd 2})^{\frac{1}{8}} R(h_{k_1,k_2,t})^{\frac{1}{4}} } \\
\cos\left( h_{k_1,k_2,t}^* (b_{k_1,t} \otimes b_{k_2,t})^{((1,3),(2,4))}, (f_{k_1-1,t} \asd f_{k_2-1,t})^* (f_{k_1-1,t} \otimes f_{k_2-1,t})^{((1,3),(2,4))} \right) \\
\cos\left( h_{k_1,k_2,t} h_{k_1,k_2,t}^*, ((b_{k_1,t} b_{k_1,t}^*) \otimes (b_{k_2,t} b_{k_2,t}^*))^{((1,3),(2,4))} \right)^{\frac{1}{2}} \\
\cos\left( (f_{k_1-1,t} \asd f_{k_2-1,t}) (f_{k_1-1,t} \asd f_{k_2-1,t})^*, ((f_{k_1-1,t} f_{k_1-1,t}^*) \otimes (f_{k_2-1,t} f_{k_2-1,t}^*))^{((1,3),(2,4))} \right)^{\frac{1}{2}} \\
\cos((f_{k_1-1,t}^{\asd 2})^* (f_{k_1-1,t}^{\asd 2}), (f_{k_2-1,t}^{\asd 2})^* (f_{k_2-1,t}^{\asd 2}))^{\frac{1}{4}} \\
+ \frac{ \Vert f_{k_1-1,t}^{\asd 2} \Vert^{\frac{1}{2}} \Vert b_{k_2,t}^{\asd 2} \Vert^{\frac{1}{2}} \Vert j_{k_1,k_2-1,t} \Vert }{ R(f_{k_1-1,t}^{\asd 2})^{\frac{1}{8}} R(b_{k_2,t}^{\asd 2})^{\frac{1}{8}} R(j_{k_1,k_2-1,t})^{\frac{1}{4}} } \\
\cos\left( j_{k_1,k_2-1,t}^* (b_{k_1,t} \otimes f_{k_2-1,t})^{((1,3),(2,4))}, (f_{k_1-1,t} \asd b_{k_2,t})^* (f_{k_1-1,t} \otimes b_{k_2,t})^{((1,3),(2,4))} \right) \\
\cos\left( j_{k_1,k_2-1,t} j_{k_1,k_2-1,t}^*, ((b_{k_1,t} b_{k_1,t}^*) \otimes (f_{k_2-1,t} f_{k_2-1,t}^*))^{((1,3),(2,4))} \right)^{\frac{1}{2}} \\
\cos\left( (f_{k_1-1,t} \asd b_{k_2,t}) (f_{k_1-1,t} \asd b_{k_2,t})^*, ((f_{k_1-1,t} f_{k_1-1,t}^*) \otimes (b_{k_2,t} b_{k_2,t}^*))^{((1,3),(2,4))} \right)^{\frac{1}{2}} \\
\cos((f_{k_1-1,t}^{\asd 2})^* (f_{k_1-1,t}^{\asd 2}), (b_{k_2,t}^{\asd 2})^* (b_{k_2,t}^{\asd 2}))^{\frac{1}{4}} \text{ for } k_1 < k_2 \in [1:l+1]
\end{multline*}
and $T^{(2)}_{t,k_1,k_2} = T^{(2)}_{t,k_2,k_1}$ for $k_1 > k_2 \in [1:l+1]$.
\end{proposition}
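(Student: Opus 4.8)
The plan is to reduce the identity to a single per-layer-pair statement and then establish it by repeatedly factoring Frobenius inner products of reshaped operators into norms, soft ranks and cosines. Since $\Xi_t$ is block-diagonal, $\langle(\Xi_t\nabla_t)^{\otimes 2},\partial\nabla_t\rangle=\sum_{k_1,k_2}\xi_{k_1,t}\xi_{k_2,t}\langle\nabla_{k_1,t}\otimes\nabla_{k_2,t},\partial\nabla_{k_1,k_2,t}\rangle$, whereas $\langle(D_{\xi_t}\tau_t)^{\otimes 2},T^{(2)}_t\rangle=\sum_{k_1,k_2}\xi_{k_1,t}\tau_{k_1,t}\xi_{k_2,t}\tau_{k_2,t}T^{(2)}_{t,k_1,k_2}$. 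Hence it suffices to prove, for each $k_1,k_2$, that $\langle\nabla_{k_1,t}\otimes\nabla_{k_2,t},\partial\nabla_{k_1,k_2,t}\rangle=\tau_{k_1,t}\tau_{k_2,t}T^{(2)}_{t,k_1,k_2}$; the case $k_1>k_2$ then follows from the case $k_1<k_2$ by the symmetry $\partial\nabla_{k_1,k_2,t}=\partial\nabla_{k_2,k_1,t}^{((3,4),(1,2))}$ of Proposition~\ref{prop:partial_nabla_k_1_k_2_t_factorization} together with the definitional symmetry $T^{(2)}_{t,k_1,k_2}=T^{(2)}_{t,k_2,k_1}$.

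First I would substitute the factorization $\nabla_{k,t}=b_{k,t}f_{k-1,t}^*$ of Proposition~\ref{prop:nabla_k_t_factorization} and the Hessian factorization of Proposition~\ref{prop:partial_nabla_k_1_k_2_t_factorization} into the per-pair contraction. On the diagonal the Hessian has a single term, while off the diagonal it splits into an $h$-term and a $j$-term, so by linearity the contraction becomes a sum of one or two scalar contractions. The key reshaping observation, checked most transparently by writing every tensor as its datapoint-indexed sum, is that each such contraction equals an ordinary Frobenius inner product of two reshaped operators; e.g.\ on the diagonal $\langle\nabla_{k,t}^{\otimes 2},\partial\nabla_{k,k,t}\rangle=\langle h_{k,k,t}^*(b_{k,t}^{\otimes 2})^{((1,3),(2,4))},(f_{k-1,t}^{\asd 2})^*(f_{k-1,t}^{\otimes 2})^{((1,3),(2,4))}\rangle$, and analogously for the $h$- and $j$-terms off the diagonal. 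Writing this inner product as $\Vert\cdot\Vert\,\Vert\cdot\Vert\cos(\cdot,\cdot)$ produces exactly the first cosine appearing in each summand of $T^{(2)}_{t,k_1,k_2}$.

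It then remains to factor the two norms. For an operator of the form $X^*P$ I would use $\Vert X^*P\Vert^2=\tr(XX^*PP^*)$, observing that the relevant products collapse to Kronecker tensors, e.g.\ $PP^*=((b_{k,t}b_{k,t}^*)^{\otimes 2})^{((1,3),(2,4))}$ when $P=(b_{k,t}^{\otimes 2})^{((1,3),(2,4))}$; applying $\tr(AB)=\Vert A\Vert\,\Vert B\Vert\cos(A,B)$ once more yields the second and third cosines, each raised to the power $\frac{1}{2}$ after taking a square root. Throughout I would convert Gram norms into scale and soft-rank factors via the scale-invariant identity $\Vert X^*X\Vert=\Vert XX^*\Vert=\Vert X\Vert^2R(X)^{-\frac{1}{2}}$ (so that, for instance, $\Vert h_{k,k,t}h_{k,k,t}^*\Vert=\Vert h_{k,k,t}\Vert^2R(h_{k,k,t})^{-\frac{1}{2}}$ and $\Vert(b_{k,t}b_{k,t}^*)^{\otimes 2}\Vert=\Vert b_{k,t}\Vert^4R(b_{k,t})^{-1}$), which reproduces the norm and soft-rank prefactors and, after multiplying by $\tau_{k_1,t}\tau_{k_2,t}$, matches the diagonal entry $T^{(2)}_{t,k,k}$ exactly.

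The remaining and most delicate point is the off-diagonal symmetrization. Here the $f$- and $b$-factors enter through the ``cross'' Gram operator $(f_{k_1-1,t}\asd f_{k_2-1,t})^*(f_{k_1-1,t}\asd f_{k_2-1,t})=\frac{1}{n}(F_{k_1-1,t}\circ F_{k_2-1,t})$, whereas $T^{(2)}_{t,k_1,k_2}$ is written through the symmetric quantities $f_{k_1-1,t}^{\asd 2}$ and $f_{k_2-1,t}^{\asd 2}$. I would bridge the two with the elementary Hadamard Cauchy--Schwarz identity $\Vert A\circ B\Vert^2=\langle A^{\circ 2},B^{\circ 2}\rangle$, which yields $\Vert f_{k_1-1,t}\asd f_{k_2-1,t}\Vert^2R(f_{k_1-1,t}\asd f_{k_2-1,t})^{-\frac{1}{2}}=\Vert f_{k_1-1,t}^{\asd 2}\Vert\,\Vert f_{k_2-1,t}^{\asd 2}\Vert R(f_{k_1-1,t}^{\asd 2})^{-\frac{1}{4}}R(f_{k_2-1,t}^{\asd 2})^{-\frac{1}{4}}\cos(F_{k_1-1,t}^{\circ 2},F_{k_2-1,t}^{\circ 2})^{\frac{1}{2}}$, and the analogous identity with $b_{k_2,t}$ replacing the second factor; these produce precisely the extra $\frac{1}{4}$-power cosines $\cos(F^{\circ 2},F^{\circ 2})$ and $\cos(F^{\circ 2},B^{\circ 2})$ in the two off-diagonal summands. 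Collecting all prefactors and cosines and comparing with $\tau_{k_1,t}\tau_{k_2,t}T^{(2)}_{t,k_1,k_2}$ finishes the proof. I expect the main obstacle to be organizational rather than conceptual: keeping the index groupings of the reshapings consistent so that every ``$PP^*$'' genuinely collapses to the claimed Kronecker tensor, and tracking the half- and quarter-powers of the cosines and soft ranks correctly across the two factored norms.
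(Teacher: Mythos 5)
Your proposal is correct and follows essentially the same route as the paper's proof: expand over layer pairs using block-diagonality, substitute the factorizations of $\nabla_{k,t}$ and $\partial\nabla_{k_1,k_2,t}$, recognize each contraction as a Frobenius inner product of reshaped operators, factor twice into norms and cosines via $\Vert X^*P\Vert^2=\langle XX^*,PP^*\rangle$, convert Gram norms with $\Vert X^*X\Vert=\Vert X\Vert^2R(X)^{-\frac{1}{2}}$, and bridge the cross terms with $\Vert(f_{k_1-1,t}\asd f_{k_2-1,t})(f_{k_1-1,t}\asd f_{k_2-1,t})^*\Vert^2=\langle(f_{k_1-1,t}^{\asd 2})^*(f_{k_1-1,t}^{\asd 2}),(f_{k_2-1,t}^{\asd 2})^*(f_{k_2-1,t}^{\asd 2})\rangle$. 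All the key identities you invoke are exactly those used in the paper, so only the bookkeeping of index groupings remains to be written out.
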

\begin{proof}
We clearly have $\langle (\Xi_t \nabla_t)^\otimes 2, \partial\nabla_t \rangle = \sum_{k_1=1}^{l+1} \sum_{k_2=1}^{l+1} \xi_{k_1,t} \xi_{k_2,t} \langle \nabla_{k_1,t} \otimes \nabla_{k_2,t}, \partial\nabla_{k_1,k_2,t} \rangle$. Substituting from Proposition~\ref{prop:nabla_k_t_factorization} and Proposition~\ref{prop:partial_nabla_k_1_k_2_t_factorization}, we have
\begin{multline*}
\langle \nabla_{k_1,t} \otimes \nabla_{k_2,t}, \partial\nabla_{k_1,k_2,t} \rangle \\
= \frac{1}{n^3} \sum_{i_1=1}^n \sum_{i_2=1}^n \sum_{i_3=1}^n \langle b_{k_1,i_1,t} \otimes b_{k_2,i_2,t}, h_{k_1,k_2,i_3,t} \rangle \langle f_{k_1-1,i_1,t} \otimes f_{k_2-1,i_2,t}, f_{k_1-1,i_3,t} \otimes f_{k_2-1,i_3,t} \rangle \\
+ \begin{cases}
\frac{1}{n^3} \sum_{i_1=1}^n \sum_{i_2=1}^n \sum_{i_3=1}^n \langle b_{k_1,i_1,t} \otimes f_{k_2-1,i_2,t}, j_{k_1,k_2-1,i_3,t} \rangle \langle f_{k_1-1,i_1,t} \otimes b_{k_2,i_2,t}, f_{k_1-1,i_3,t} \otimes b_{k_2,i_3,t} \rangle \\
\text{ if } k_1 < k_2, \\
\frac{1}{n^3} \sum_{i_1=1}^n \sum_{i_2=1}^n \sum_{i_3=1}^n \langle b_{k_2,i_2,t} \otimes f_{k_1-1,i_1,t}, j_{k_2,k_1-1,i_3,t} \rangle \langle f_{k_2-1,i_2,t} \otimes b_{k_1,i_1,t}, f_{k_2-1,i_3,t} \otimes b_{k_1,i_3,t} \rangle \\
\text{ if } k_1 > k_2 \text{ and } \\
0 \text{ if } k_1=k_2
\end{cases} \\
= \langle h_{k_1,k_2,t}^* (b_{k_1,t} \otimes b_{k_2,t})^{((1,3),(2,4))}, (f_{k_1-1,t} \asd f_{k_2-1,t})^* (f_{k_1-1,t} \otimes f_{k_2-1,t})^{((1,3),(2,4))} \rangle \\
+ \begin{cases}
\langle j_{k_1,k_2-1,t}^* (b_{k_1,t} \otimes f_{k_2-1,t})^{((1,3),(2,4))}, (f_{k_1-1,t} \asd b_{k_2,t})^* (f_{k_1-1,t} \otimes b_{k_2,t})^{((1,3),(2,4))} \rangle \text{ if } k_1 < k_2, \\
\langle j_{k_2,k_1-1,t}^* (b_{k_2,t} \otimes f_{k_1-1,t})^{((1,3),(2,4))}, (f_{k_2-1,t} \asd b_{k_1,t})^* (f_{k_2-1,t} \otimes b_{k_1,t})^{((1,3),(2,4))} \rangle \text{ if } k_1 > k_2 \text{ and } \\
0 \text{ if } k_1=k_2.
\end{cases}
\end{multline*}
We rewrite the first term as
\begin{multline*}
\langle h_{k_1,k_2,t}^* (b_{k_1,t} \otimes b_{k_2,t})^{((1,3),(2,4))}, (f_{k_1-1,t} \asd f_{k_2-1,t})^* (f_{k_1-1,t} \otimes f_{k_2-1,t})^{((1,3),(2,4))} \rangle \\
= \Vert h_{k_1,k_2,t}^* (b_{k_1,t} \otimes b_{k_2,t})^{((1,3),(2,4))} \Vert \Vert (f_{k_1-1,t} \asd f_{k_2-1,t})^* (f_{k_1-1,t} \otimes f_{k_2-1,t})^{((1,3),(2,4))} \Vert \\
\cos(h_{k_1,k_2,t}^* (b_{k_1,t} \otimes b_{k_2,t})^{((1,3),(2,4))}, (f_{k_1-1,t} \asd f_{k_2-1,t})^* (f_{k_1-1,t} \otimes f_{k_2-1,t})^{((1,3),(2,4))})
\end{multline*}
and then the two norms as
\begin{multline*}
\Vert h_{k_1,k_2,t}^* (b_{k_1,t} \otimes b_{k_2,t})^{((1,3),(2,4))} \Vert^2 \\
= \langle h_{k_1,k_2,t} h_{k_1,k_2,t}^*, ((b_{k_1,t} b_{k_1,t}^*) \otimes (b_{k_2,t} b_{k_2,t}^*))^{((1,3),(2,4))} \rangle \\
= \Vert h_{k_1,k_2,t} h_{k_1,k_2,t}^* \Vert \Vert b_{k_1,t} b_{k_1,t}^* \Vert \Vert b_{k_2,t} b_{k_2,t}^* \Vert \\ \cos(h_{k_1,k_2,t} h_{k_1,k_2,t}^*, ((b_{k_1,t} b_{k_1,t}^*) \otimes (b_{k_2,t} b_{k_2,t}^*))^{((1,3),(2,4))}) \\
= \Vert h_{k_1,k_2,t} \Vert^2 \Vert b_{k_1,t} \Vert^2 \Vert b_{k_2,t} \Vert^2 R(h_{k_1,k_2,t})^{-\frac{1}{2}} R(b_{k_1,t})^{-\frac{1}{2}} R(b_{k_2,t})^{-\frac{1}{2}} \\ \cos(h_{k_1,k_2,t} h_{k_1,k_2,t}^*, ((b_{k_1,t} b_{k_1,t}^*) \otimes (b_{k_2,t} b_{k_2,t}^*))^{((1,3),(2,4))})
\end{multline*}
and
\begin{multline*}
\Vert (f_{k_1-1,t} \asd f_{k_2-1,t})^* (f_{k_1-1,t} \otimes f_{k_2-1,t})^{((1,3),(2,4))} \Vert^2 \\
= \langle (f_{k_1-1,t} \asd f_{k_2-1,t}) (f_{k_1-1,t} \asd f_{k_2-1,t})^*, ((f_{k_1-1,t} f_{k_1-1,t}^*) \otimes (f_{k_2-1,t} f_{k_2-1,t}^*))^{((1,3),(2,4))} \rangle \\
= \Vert (f_{k_1-1,t} \asd f_{k_2-1,t}) (f_{k_1-1,t} \asd f_{k_2-1,t})^* \Vert \Vert f_{k_1-1,t} f_{k_1-1,t}^* \Vert \Vert f_{k_2-1,t} f_{k_2-1,t}^* \Vert \\
\cos((f_{k_1-1,t} \asd f_{k_2-1,t}) (f_{k_1-1,t} \asd f_{k_2-1,t})^*, ((f_{k_1-1,t} f_{k_1-1,t}^*) \otimes (f_{k_2-1,t} f_{k_2-1,t}^*))^{((1,3),(2,4))}) \\
= \frac{ \Vert f_{k_1-1,t} \Vert^2 \Vert f_{k_2-1,t} \Vert^2 }{ R(f_{k_1-1,t})^{\frac{1}{2}} R(f_{k_2-1,t})^{\frac{1}{2}} } \frac{ \Vert f_{k_1-1,t}^{\asd 2} \Vert \Vert f_{k_2-1,t}^{\asd 2} \Vert }{ R(f_{k_1-1,t}^{\asd 2})^{\frac{1}{4}} R(f_{k_2-1,t}^{\asd 2})^{\frac{1}{4}} } \\
\cos((f_{k_1-1,t} \asd f_{k_2-1,t}) (f_{k_1-1,t} \asd f_{k_2-1,t})^*, ((f_{k_1-1,t} f_{k_1-1,t}^*) \otimes (f_{k_2-1,t} f_{k_2-1,t}^*))^{((1,3),(2,4))}) \\
\cos((f_{k_1-1,t}^{\asd 2})^* (f_{k_1-1,t}^{\asd 2}), (f_{k_2-1,t}^{\asd 2})^* (f_{k_2-1,t}^{\asd 2}))^{\frac{1}{2}},
\end{multline*}
where we used that
\begin{multline*}
\Vert (f_{k_1-1,t} \asd f_{k_2-1,t}) (f_{k_1-1,t} \asd f_{k_2-1,t})^* \Vert^2 
= \langle (f_{k_1-1,t}^{\asd 2})^* (f_{k_1-1,t}^{\asd 2}), (f_{k_2-1,t}^{\asd 2})^* (f_{k_2-1,t}^{\asd 2}) \rangle \\
= \Vert (f_{k_1-1,t}^{\asd 2})^* (f_{k_1-1,t}^{\asd 2}) \Vert \Vert (f_{k_2-1,t}^{\asd 2})^* (f_{k_2-1,t}^{\asd 2}) \Vert 
\cos((f_{k_1-1,t}^{\asd 2})^* (f_{k_1-1,t}^{\asd 2}), (f_{k_2-1,t}^{\asd 2})^* (f_{k_2-1,t}^{\asd 2})) \\
= \Vert f_{k_1-1,t}^{\asd 2} \Vert^2 \Vert f_{k_2-1,t}^{\asd 2} \Vert^2 R(f_{k_1-1,t}^{\asd 2})^{-\frac{1}{2}} R(f_{k_2-1,t}^{\asd 2})^{-\frac{1}{2}} 
\cos((f_{k_1-1,t}^{\asd 2})^* (f_{k_1-1,t}^{\asd 2}), (f_{k_2-1,t}^{\asd 2})^* (f_{k_2-1,t}^{\asd 2})).
\end{multline*}

We rewrite the second term for the case $k_1 < k_2$ as
\begin{multline*}
\langle j_{k_1,k_2-1,t}^* (b_{k_1,t} \otimes f_{k_2-1,t})^{((1,3),(2,4))}, (f_{k_1-1,t} \asd b_{k_2,t})^* (f_{k_1-1,t} \otimes b_{k_2,t})^{((1,3),(2,4))} \rangle \\
= \Vert j_{k_1,k_2-1,t}^* (b_{k_1,t} \otimes f_{k_2-1,t})^{((1,3),(2,4))} \Vert \Vert (f_{k_1-1,t} \asd b_{k_2,t})^* (f_{k_1-1,t} \otimes b_{k_2,t})^{((1,3),(2,4))} \Vert \\
\cos(j_{k_1,k_2-1,t}^* (b_{k_1,t} \otimes f_{k_2-1,t})^{((1,3),(2,4))}, (f_{k_1-1,t} \asd b_{k_2,t})^* (f_{k_1-1,t} \otimes b_{k_2,t})^{((1,3),(2,4))})
\end{multline*}
and again the two norms as
\begin{multline*}
\Vert j_{k_1,k_2-1,t}^* (b_{k_1,t} \otimes f_{k_2-1,t})^{((1,3),(2,4))} \Vert^2 \\
= \langle j_{k_1,k_2-1,t} j_{k_1,k_2-1,t}^*, ((b_{k_1,t} b_{k_1,t}^*) \otimes (f_{k_2-1,t} f_{k_2-1,t}^*))^{((1,3),(2,4))} \rangle \\
= \Vert j_{k_1,k_2-1,t} j_{k_1,k_2-1,t}^* \Vert \Vert b_{k_1,t} b_{k_1,t}^* \Vert \Vert f_{k_2-1,t} f_{k_2-1,t}^* \Vert \\
\cos(j_{k_1,k_2-1,t} j_{k_1,k_2-1,t}^*, ((b_{k_1,t} b_{k_1,t}^*) \otimes (f_{k_2-1,t} f_{k_2-1,t}^*))^{((1,3),(2,4))}) \\
= \Vert j_{k_1,k_2-1,t} \Vert^2 \Vert b_{k_1,t} \Vert^2 \Vert f_{k_2-1,t} \Vert^2 R(j_{k_1,k_2-1,t})^{-\frac{1}{2}} R(b_{k_1,t})^{-\frac{1}{2}} R(f_{k_2-1,t})^{-\frac{1}{2}} \\
\cos(j_{k_1,k_2-1,t} j_{k_1,k_2-1,t}^*, ((b_{k_1,t} b_{k_1,t}^*) \otimes (f_{k_2-1,t} f_{k_2-1,t}^*))^{((1,3),(2,4))})
\end{multline*}
and
\begin{multline*}
\Vert (f_{k_1-1,t} \asd b_{k_2,t})^* (f_{k_1-1,t} \otimes b_{k_2,t})^{((1,3),(2,4))} \Vert^2 \\
= \langle (f_{k_1-1,t} \asd b_{k_2,t}) (f_{k_1-1,t} \asd b_{k_2,t})^*, ((f_{k_1-1,t} f_{k_1-1,t}^*) \otimes (b_{k_2,t} b_{k_2,t}^*))^{((1,3),(2,4))} \rangle \\
= \Vert (f_{k_1-1,t} \asd b_{k_2,t}) (f_{k_1-1,t} \asd b_{k_2,t})^* \Vert \Vert f_{k_1-1,t} f_{k_1-1,t}^* \Vert \Vert b_{k_2,t} b_{k_2,t}^* \Vert \\
\cos((f_{k_1-1,t} \asd b_{k_2,t}) (f_{k_1-1,t} \asd b_{k_2,t})^*, ((f_{k_1-1,t} f_{k_1-1,t}^*) \otimes (b_{k_2,t} b_{k_2,t}^*))^{((1,3),(2,4))}) \\
= \frac{ \Vert f_{k_1-1,t} \Vert^2 \Vert b_{k_2,t} \Vert^2 }{ R(f_{k_1-1,t})^{\frac{1}{2}} R(b_{k_2,t})^{\frac{1}{2}} } \frac{ \Vert f_{k_1-1,t}^{\asd 2} \Vert \Vert b_{k_2,t}^{\asd 2} \Vert }{ R(f_{k_1-1,t}^{\asd 2})^{\frac{1}{4}} R(b_{k_2,t}^{\asd 2})^{\frac{1}{4}} } \\
\cos((f_{k_1-1,t} \asd b_{k_2,t}) (f_{k_1-1,t} \asd b_{k_2,t})^*, ((f_{k_1-1,t} f_{k_1-1,t}^*) \otimes (b_{k_2,t} b_{k_2,t}^*))^{((1,3),(2,4))}) \\
\cos((f_{k_1-1,t}^{\asd 2})^* (f_{k_1-1,t}^{\asd 2}), (b_{k_2,t}^{\asd 2})^* (b_{k_2,t}^{\asd 2}))^{\frac{1}{2}},
\end{multline*}
where we used that
\begin{multline*}
\Vert (f_{k_1-1,t} \asd b_{k_2,t}) (f_{k_1-1,t} \asd b_{k_2,t})^* \Vert^2
= \langle (f_{k_1-1,t}^{\asd 2})^* (f_{k_1-1,t}^{\asd 2}), (b_{k_2,t}^{\asd 2})^* (b_{k_2,t}^{\asd 2}) \rangle \\
= \Vert (f_{k_1-1,t}^{\asd 2})^* (f_{k_1-1,t}^{\asd 2}) \Vert \Vert (b_{k_2,t}^{\asd 2})^* (b_{k_2,t}^{\asd 2}) \Vert 
\cos((f_{k_1-1,t}^{\asd 2})^* (f_{k_1-1,t}^{\asd 2}), (b_{k_2,t}^{\asd 2})^* (b_{k_2,t}^{\asd 2})) \\
= \Vert f_{k_1-1,t}^{\asd 2} \Vert^2 \Vert b_{k_2,t}^{\asd 2} \Vert^2 R(f_{k_1-1,t}^{\asd 2})^{-\frac{1}{2}} R(b_{k_2,t}^{\asd 2})^{-\frac{1}{2}} 
\cos((f_{k_1-1,t}^{\asd 2})^* (f_{k_1-1,t}^{\asd 2}), (b_{k_2,t}^{\asd 2})^* (b_{k_2,t}^{\asd 2})).
\end{multline*}
We get the $k_1 > k_2$ case by symmetry.

Summarizing the above, we then have
\[
\langle (\Xi_t \nabla_t)^\otimes 2, \partial\nabla_t \rangle
= \sum_{k_1=1}^{l+1} \sum_{k_2=1}^{l+1} \xi_{k_1,t} \xi_{k_2,t} \langle \nabla_{k_1,t} \otimes \nabla_{k_2,t}, \partial\nabla_{k_1,k_2,t} \rangle
\]
with
\begin{multline*}
\langle \nabla_{k_1,t} \otimes \nabla_{k_2,t}, \partial\nabla_{k_1,k_2,t} \rangle \\
= \frac{ \Vert f_{k_1-1,t} \Vert \Vert f_{k_2-1,t} \Vert \Vert b_{k_1,t} \Vert \Vert b_{k_2,t} \Vert \Vert h_{k_1,k_2,t} \Vert \Vert f_{k_1-1,t}^{\asd 2} \Vert^{\frac{1}{2}}  \Vert f_{k_2-1,t}^{\asd 2} \Vert^{\frac{1}{2}} }{ R(f_{k_1-1,t})^{\frac{1}{4}} R(f_{k_2-1,t})^{\frac{1}{4}} R(b_{k_1,t})^{\frac{1}{4}} R(b_{k_2,t})^{\frac{1}{4}} R(h_{k_1,k_2,t})^{\frac{1}{4}} R(f_{k_1-1,t}^{\asd 2})^{\frac{1}{8}} R(f_{k_2-1,t}^{\asd 2})^{\frac{1}{8}} } \\
\cos\left( h_{k_1,k_2,t}^* (b_{k_1,t} \otimes b_{k_2,t})^{((1,3),(2,4))}, (f_{k_1-1,t} \asd f_{k_2-1,t})^* (f_{k_1-1,t} \otimes f_{k_2-1,t})^{((1,3),(2,4))} \right) \\
\cos\left( h_{k_1,k_2,t} h_{k_1,k_2,t}^*, ((b_{k_1,t} b_{k_1,t}^*) \otimes (b_{k_2,t} b_{k_2,t}^*))^{((1,3),(2,4))} \right)^{\frac{1}{2}} \\
\cos\left( (f_{k_1-1,t} \asd f_{k_2-1,t}) (f_{k_1-1,t} \asd f_{k_2-1,t})^*, ((f_{k_1-1,t} f_{k_1-1,t}^*) \otimes (f_{k_2-1,t} f_{k_2-1,t}^*))^{((1,3),(2,4))} \right)^{\frac{1}{2}} \\
\cos((f_{k_1-1,t}^{\asd 2})^* (f_{k_1-1,t}^{\asd 2}), (f_{k_2-1,t}^{\asd 2})^* (f_{k_2-1,t}^{\asd 2}))^{\frac{1}{4}} \\
+ \begin{cases}
\frac{ \Vert f_{k_1-1,t} \Vert \Vert f_{k_2-1,t} \Vert \Vert b_{k_1,t} \Vert \Vert b_{k_2,t} \Vert\Vert j_{k_1,k_2-1,t} \Vert \Vert f_{k_1-1,t}^{\asd 2} \Vert^{\frac{1}{2}} \Vert b_{k_2,t}^{\asd 2} \Vert^{\frac{1}{2}} }{ R(f_{k_1-1,t})^{\frac{1}{4}} R(f_{k_2-1,t})^{\frac{1}{4}} R(b_{k_1,t})^{\frac{1}{4}} R(b_{k_2,t})^{\frac{1}{4}} R(j_{k_1,k_2-1,t})^{\frac{1}{4}} R(f_{k_1-1,t}^{\asd 2})^{\frac{1}{8}} R(b_{k_2,t}^{\asd 2})^{\frac{1}{8}} } \\
\cos\left( j_{k_1,k_2-1,t}^* (b_{k_1,t} \otimes f_{k_2-1,t})^{((1,3),(2,4))}, (f_{k_1-1,t} \asd b_{k_2,t})^* (f_{k_1-1,t} \otimes b_{k_2,t})^{((1,3),(2,4))} \right) \\
\cos\left( j_{k_1,k_2-1,t} j_{k_1,k_2-1,t}^*, ((b_{k_1,t} b_{k_1,t}^*) \otimes (f_{k_2-1,t} f_{k_2-1,t}^*))^{((1,3),(2,4))} \right)^{\frac{1}{2}} \\
\cos\left( (f_{k_1-1,t} \asd b_{k_2,t}) (f_{k_1-1,t} \asd b_{k_2,t})^*, ((f_{k_1-1,t} f_{k_1-1,t}^*) \otimes (b_{k_2,t} b_{k_2,t}^*))^{((1,3),(2,4))} \right)^{\frac{1}{2}} \\
\cos((f_{k_1-1,t}^{\asd 2})^* (f_{k_1-1,t}^{\asd 2}), (b_{k_2,t}^{\asd 2})^* (b_{k_2,t}^{\asd 2}))^{\frac{1}{4}} \text{ if } k_1 < k_2, \\
\frac{ \Vert f_{k_1-1,t} \Vert \Vert f_{k_2-1,t} \Vert \Vert b_{k_1,t} \Vert \Vert b_{k_2,t} \Vert \Vert j_{k_2,k_1-1,t} \Vert \Vert f_{k_2-1,t}^{\asd 2} \Vert^{\frac{1}{2}} \Vert b_{k_1,t}^{\asd 2} \Vert^{\frac{1}{2}} }{ R(f_{k_1-1,t})^{\frac{1}{4}} R(f_{k_2-1,t})^{\frac{1}{4}} R(b_{k_1,t})^{\frac{1}{4}} R(b_{k_2,t})^{\frac{1}{4}} R(j_{k_2,k_1-1,t})^{\frac{1}{4}} R(f_{k_2-1,t}^{\asd 2})^{\frac{1}{8}} R(b_{k_1,t}^{\asd 2})^{\frac{1}{8}} } \\
\cos\left( j_{k_2,k_1-1,t}^* (b_{k_2,t} \otimes f_{k_1-1,t})^{((1,3),(2,4))}, (f_{k_2-1,t} \asd b_{k_1,t})^* (f_{k_2-1,t} \otimes b_{k_1,t})^{((1,3),(2,4))} \right) \\
\cos\left( j_{k_2,k_1-1,t} j_{k_2,k_1-1,t}^*, ((b_{k_2,t} b_{k_2,t}^*) \otimes (f_{k_1-1,t} f_{k_1-1,t}^*))^{((1,3),(2,4))} \right)^{\frac{1}{2}} \\
\cos\left( (f_{k_2-1,t} \asd b_{k_1,t}) (f_{k_2-1,t} \asd b_{k_1,t})^*, ((f_{k_2-1,t} f_{k_2-1,t}^*) \otimes (b_{k_1,t} b_{k_1,t}^*))^{((1,3),(2,4))} \right)^{\frac{1}{2}} \\
\cos((f_{k_2-1,t}^{\asd 2})^* (f_{k_2-1,t}^{\asd 2}), (b_{k_1,t}^{\asd 2})^* (b_{k_1,t}^{\asd 2}))^{\frac{1}{4}} \text{ if } k_1 > k_2 \text{ and } \\
0 \text{ if } k_1 = k_2.
\end{cases}
\end{multline*}
\end{proof}

\begin{proposition}[Third Order Term]\label{prop:third_order_term}
For all $t \in \N$, we have
\[
\langle (\Xi_t \nabla_t)^\otimes 3, \partial\partial\nabla_t \rangle
= \langle (D_{\xi_t} \tau_t)^{\otimes 3}, T^{(3)}_t \rangle
\]
with $T^{(3)}_t \in \R^{(l+1) \times (l+1) \times (l+1)}$ defined as
\begin{multline*}
T^{(3)}_{t,k,k,k}
= \frac{ \Vert f_{k-1,t}^{\asd 3} \Vert \Vert z_{k,k,k,t} \Vert }{ R(f_{k-1,t}^{\asd 3})^{\frac{1}{4}} R(z_{k,k,k,t})^{\frac{1}{4}} } 
\cos(z_{k,k,k,t}^* (b_{k,t}^{\otimes 3})^{(1,3,5),(2,4,6)}, (f_{k-1,t}^{\asd 3})^* (f_{k-1,t}^{\otimes 3})^{(1,3,5),(2,4,6)}) \\
\cos(z_{k,k,k,t} z_{k,k,k,t}^*, ((b_{k,t} b_{k,t}^*)^{\otimes 3})^{(1,3,5),(2,4,6)})^{\frac{1}{2}} \\
\cos((f_{k-1,t}^{\asd 3}) (f_{k-1,t}^{\asd 3})^*, ((f_{k-1,t} f_{k-1,t}^*)^{\otimes 3})^{(1,3,5),(2,4,6)})^{\frac{1}{2}} \text{ for } k \in [1:l+1],
\end{multline*}
\begin{multline*}
T^{(3)}_{t,k_1,k_1,k_2}
= \frac{ \Vert f_{k_1-1,t}^{\asd 3} \Vert^{\frac{2}{3}} \Vert f_{k_2-1,t}^{\asd 3} \Vert^{\frac{1}{3}} \Vert z_{k_1,k_1,k_2,t} \Vert }{ R(f_{k_1-1,t}^{\asd 3})^{\frac{1}{6}} R(f_{k_2-1,t}^{\asd 3})^{\frac{1}{12}} R(z_{k_1,k_1,k_2,t})^{\frac{1}{4}} } \\
\cos(z_{k_1,k_1,k_2,t}^* (b_{k_1,t}^{\otimes 2} \otimes b_{k_2,t})^{(1,3,5),(2,4,6)}, (f_{k_1-1,t}^{\asd 2} \asd f_{k_2-1,t})^* (f_{k_1-1,t}^{\otimes 2} \otimes f_{k_2-1,t})^{(1,3,5),(2,4,6)}) \\
\cos(z_{k_1,k_1,k_2,t} z_{k_1,k_1,k_2,t}^*, ((b_{k_1,t} b_{k_1,t}^*)^{\otimes 2} \otimes (b_{k_2,t} b_{k_2,t}^*))^{(1,3,5),(2,4,6)})^{\frac{1}{2}} \\
\cos((f_{k_1-1,t}^{\asd 2} \asd f_{k_2-1,t}) (f_{k_1-1,t}^{\asd 2} \asd f_{k_2-1,t})^*, ((f_{k_1-1,t} f_{k_1-1,t}^*)^{\otimes 2} \otimes (f_{k_2-1,t} f_{k_2-1,t}^*))^{(1,3,5),(2,4,6)})^{\frac{1}{2}} \\
\cos((f_{k_1-1,t}^{\asd 2})^* (f_{k_1-1,t}^{\asd 2}), (f_{k_1-1,t}^{\asd 2})^* (f_{k_1-1,t}^{\asd 2}), (f_{k_2-1,t}^{\asd 2})^* (f_{k_2-1,t}^{\asd 2}))^{\frac{1}{4}} 
\end{multline*}
\begin{multline*}
+ 2 \frac{ \Vert f_{k_1-1,t}^{\asd 2} \Vert \Vert h_{k_1,k_2,t}^{\asd 2} \Vert^{\frac{1}{2}} \Vert j_{k_1,k_2-1,t}^{\asd 2} \Vert^{\frac{1}{2}} }{ R(f_{k_1-1,t}^{\asd 2})^{\frac{1}{4}} R(h_{k_1,k_2,t}^{\asd 2})^{\frac{1}{8}} R(j_{k_1,k_2-1,t}^{\asd 2})^{\frac{1}{8}} } \\
\cos( (f_{k_1-1,t} \asd h_{k_1,k_2,t})^* (f_{k_1-1,t} \otimes b_{k_1,t} \otimes b_{k_2,t})^{(1,3,5),(2,4,6)}, \\
(f_{k_1-1,t} \asd j_{k_1,k_2-1,t})^* (b_{k_1,t} \otimes f_{k_1-1,t} \otimes f_{k_2-1,t})^{(3, 1, 5),(4, 2, 6)}) \\
\cos((f_{k_1-1,t} \asd h_{k_1,k_2,t}) (f_{k_1-1,t} \asd h_{k_1,k_2,t})^*, \\
((f_{k_1-1,t} f_{k_1-1,t}^*) \otimes (b_{k_1,t} b_{k_1,t}^*) \otimes (b_{k_2,t} b_{k_2,t}^*))^{(1,3,5),(2,4,6)})^{\frac{1}{2}} \\
\cos((f_{k_1-1,t} \asd j_{k_1,k_2-1,t}) (f_{k_1-1,t} \asd j_{k_1,k_2-1,t})^*, \\
((b_{k_1,t} b_{k_1,t}^*) \otimes (f_{k_1-1,t} f_{k_1-1,t}^*) \otimes (f_{k_2-1,t} f_{k_2-1,t}^*))^{(3, 1, 5),(4, 2, 6)})^{\frac{1}{2}} \\
\cos((f_{k_1-1,t}^{\asd 2})^* (f_{k_1-1,t}^{\asd 2}), (h_{k_1,k_2,t}^{\asd 2})^* (h_{k_1,k_2,t}^{\asd 2}))^{\frac{1}{4}} \\
\cos((f_{k_1-1,t}^{\asd 2})^* (f_{k_1-1,t}^{\asd 2}), (j_{k_1,k_2-1,t}^{\asd 2})^* (j_{k_1,k_2-1,t}^{\asd 2}))^{\frac{1}{4}},
\end{multline*}
$T^{(3)}_{t,k_1,k_2,k_1} = T^{(3)}_{t,k_1,k_1,k_2}$ and $T^{(3)}_{t,k_2,k_1,k_1} = T^{(3)}_{t,k_1,k_1,k_2}$ for $k_1 < k_2 \in [1:l+1]$,
\begin{multline*}
T^{(3)}_{k_1,k_1,k_2}
= \frac{ \Vert f_{k_1-1,t}^{\asd 3} \Vert^{\frac{2}{3}} \Vert f_{k_2-1,t}^{\asd 3} \Vert^{\frac{1}{3}} \Vert z_{k_1,k_1,k_2,t} \Vert }{ R(f_{k_1-1,t}^{\asd 3})^{\frac{1}{6}} R(f_{k_2-1,t}^{\asd 3})^{\frac{1}{12}} R(z_{k_1,k_1,k_2,t})^{\frac{1}{4}} } \\
\cos(z_{k_1,k_1,k_2,t}^* (b_{k_1,t}^{\otimes 2} \otimes b_{k_2,t})^{(1,3,5),(2,4,6)}, \\
(f_{k_1-1,t}^{\asd 2} \asd f_{k_2-1,t})^* (f_{k_1-1,t}^{\otimes 2} \otimes f_{k_2-1,t})^{(1,3,5),(2,4,6)}) \\
\cos(z_{k_1,k_1,k_2,t} z_{k_1,k_1,k_2,t}^*, ((b_{k_1,t} b_{k_1,t}^*)^{\otimes 2} \otimes (b_{k_2,t} b_{k_2,t}^*))^{(1,3,5),(2,4,6)})^{\frac{1}{2}} \\
\cos((f_{k_1-1,t}^{\asd 2} \asd f_{k_2-1,t}) (f_{k_1-1,t}^{\asd 2} \asd f_{k_2-1,t})^*, \\ 
((f_{k_1-1,t} f_{k_1-1,t}^*)^{\otimes 2} \otimes (f_{k_2-1,t} f_{k_2-1,t}^*))^{(1,3,5),(2,4,6)})^{\frac{1}{2}} \\
\cos((f_{k_1-1,t}^{\asd 2})^* (f_{k_1-1,t}^{\asd 2}), (f_{k_1-1,t}^{\asd 2})^* (f_{k_1-1,t}^{\asd 2}), (f_{k_2-1,t}^{\asd 2})^* (f_{k_2-1,t}^{\asd 2}))^{\frac{1}{4}} 
\end{multline*}
\begin{multline*}
+ 2 \frac{ \Vert f_{k_1-1,t}^{\asd 2} \Vert^{\frac{1}{2}} \Vert f_{k_2-1,t}^{\asd 2} \Vert^{\frac{1}{2}} \Vert h_{k_1,k_1,t}^{\asd 2} \Vert^{\frac{1}{2}} \Vert j_{k_2,k_1-1,t}^{\asd 2} \Vert^{\frac{1}{2}} }{ R(f_{k_1-1,t}^{\asd 2})^{\frac{1}{8}} R(f_{k_2-1,t}^{\asd 2})^{\frac{1}{8}} R(h_{k_1,k_1,t}^{\asd 2})^{\frac{1}{8}} R(j_{k_2,k_1-1,t}^{\asd 2})^{\frac{1}{8}} } \\
\cos( (f_{k_2-1,t} \asd h_{k_1,k_1,t})^* (f_{k_2-1,t} \otimes b_{k_1,t}^{\otimes })^{(1,3,5),(2,4,6)}, \\
(f_{k_1-1,t} \asd j_{k_2,k_1-1,t})^* (b_{k_2,t} \otimes f_{k_1-1,t}^{\otimes 2})^{(3, 1, 5),(4, 2, 6)}) \\
\cos((f_{k_2-1,t} \asd h_{k_1,k_1,t}) (f_{k_2-1,t} \asd h_{k_1,k_1,t})^*, \\
((f_{k_2-1,t} f_{k_2-1,t}^*) \otimes (b_{k_1,t} b_{k_1,t}^*)^{\otimes 2})^{(1,3,5),(2,4,6)})^{\frac{1}{2}} \\
\cos((f_{k_1-1,t} \asd j_{k_2,k_1-1,t}) (f_{k_1-1,t} \asd j_{k_2,k_1-1,t})^*, \\
((b_{k_2,t} b_{k_2,t}^*) \otimes (f_{k_1-1,t} f_{k_1-1,t}^*)^{\otimes 2})^{(3, 1, 5),(4, 2, 6)})^{\frac{1}{2}} \\
\cos((f_{k_2-1,t}^{\asd 2})^* (f_{k_2-1,t}^{\asd 2}), (h_{k_1,k_1,t}^{\asd 2})^* (h_{k_1,k_1,t}^{\asd 2}))^{\frac{1}{4}} \\
\cos((f_{k_1-1,t}^{\asd 2})^* (f_{k_1-1,t}^{\asd 2}), (j_{k_2,k_1-1,t}^{\asd 2})^* (j_{k_2,k_1-1,t}^{\asd 2}))^{\frac{1}{4}},
\end{multline*}
$T^{(3)}_{t,k_1,k_2,k_1} = T^{(3)}_{t,k_1,k_1,k_2}$ and $T^{(3)}_{t,k_2,k_1,k_1} = T^{(3)}_{t,k_1,k_1,k_2}$ for $k_1 > k_2 \in [1:l+1]$,
\begin{multline*}
T^{(3)}_{t,k_1,k_2,k_3}
= \frac{ \Vert f_{k_1-1,t}^{\asd 3} \Vert^{\frac{1}{3}} \Vert f_{k_2-1,t}^{\asd 3} \Vert^{\frac{1}{3}} \Vert f_{k_3-1,t}^{\asd 3} \Vert^{\frac{1}{3}} \Vert z_{k_1,k_2,k_3,t} \Vert }{ R(f_{k_1-1,t}^{\asd 3})^{\frac{1}{12}} R(f_{k_2-1,t}^{\asd 3})^{\frac{1}{12}} R(f_{k_3-1,t}^{\asd 3})^{\frac{1}{12}} R(z_{k_1,k_2,k_3,t})^{\frac{1}{4}} } \\
\cos(z_{k_1,k_2,k_3,t}^* (b_{k_1,t} \otimes b_{k_2,t} \otimes b_{k_3,t})^{(1,3,5),(2,4,6)}, \\
(f_{k_1-1,t} \asd f_{k_2-1,t} \asd f_{k_3-1,t})^* (f_{k_1-1,t} \otimes f_{k_2-1,t} \otimes f_{k_3-1,t})^{(1,3,5),(2,4,6)}) \\
\cos(z_{k_1,k_2,k_3,t} z_{k_1,k_2,k_3,t}^*, ((b_{k_1,t} b_{k_1,t}^*) \otimes (b_{k_2,t} b_{k_2,t}^*) \otimes (b_{k_3,t} b_{k_3,t}^*))^{(1,3,5),(2,4,6)})^{\frac{1}{2}} \\
\cos((f_{k_1-1,t} \asd f_{k_2-1,t} \asd f_{k_3-1,t}) (f_{k_1-1,t} \asd f_{k_2-1,t} \asd f_{k_3-1,t})^*, \\ 
((f_{k_1-1,t} f_{k_1-1,t}^*) \otimes (f_{k_2-1,t} f_{k_2-1,t}^*) \otimes (f_{k_3-1,t} f_{k_3-1,t}^*))^{(1,3,5),(2,4,6)})^{\frac{1}{2}} \\
\cos((f_{k_1-1,t}^{\asd 2})^* (f_{k_1-1,t}^{\asd 2}), (f_{k_2-1,t}^{\asd 2})^* (f_{k_2-1,t}^{\asd 2}), (f_{k_3-1,t}^{\asd 2})^* (f_{k_3-1,t}^{\asd 2}))^{\frac{1}{4}} 
\end{multline*}
\begin{multline*}
+ \frac{ \Vert f_{k_1-1,t}^{\asd 2} \Vert^{\frac{1}{2}} \Vert f_{k_2-1,t}^{\asd 2} \Vert^{\frac{1}{2}} \Vert h_{k_2,k_3,t}^{\asd 2} \Vert^{\frac{1}{2}} \Vert j_{k_1,k_3-1,t}^{\asd 2} \Vert^{\frac{1}{2}} }{ R(f_{k_1-1,t}^{\asd 2})^{\frac{1}{8}} R(f_{k_2-1,t}^{\asd 2})^{\frac{1}{8}} R(h_{k_2,k_3,t}^{\asd 2})^{\frac{1}{8}} R(j_{k_1,k_3-1,t}^{\asd 2})^{\frac{1}{8}} } \\
\cos( (f_{k_1-1,t} \asd h_{k_2,k_3,t})^* (f_{k_1-1,t} \otimes b_{k_2,t} \otimes b_{k_3,t})^{(1,3,5),(2,4,6)}, \\
(f_{k_2-1,t} \asd j_{k_1,k_3-1,t})^* (b_{k_1,t} \otimes f_{k_2-1,t} \otimes f_{k_3-1,t})^{(3, 1, 5),(4, 2, 6)}) \\
\cos((f_{k_1-1,t} \asd h_{k_2,k_3,t}) (f_{k_1-1,t} \asd h_{k_2,k_3,t})^*, \\
((f_{k_1-1,t} f_{k_1-1,t}^*) \otimes (b_{k_2,t} b_{k_2,t}^*) \otimes (b_{k_3,t} b_{k_3,t}^*))^{(1,3,5),(2,4,6)})^{\frac{1}{2}} \\
\cos((f_{k_2-1,t} \asd j_{k_1,k_3-1,t}) (f_{k_2-1,t} \asd j_{k_1,k_3-1,t})^*, \\
((b_{k_1,t} b_{k_1,t}^*) \otimes (f_{k_2-1,t} f_{k_2-1,t}^*) \otimes (f_{k_3-1,t} f_{k_3-1,t}^*))^{(3, 1, 5),(4, 2, 6)})^{\frac{1}{2}} \\
\cos((f_{k_1-1,t}^{\asd 2})^* (f_{k_1-1,t}^{\asd 2}), (h_{k_2,k_3,t}^{\asd 2})^* (h_{k_2,k_3,t}^{\asd 2}))^{\frac{1}{4}} \\
\cos((f_{k_2-1,t}^{\asd 2})^* (f_{k_2-1,t}^{\asd 2}), (j_{k_1,k_3-1,t}^{\asd 2})^* (j_{k_1,k_3-1,t}^{\asd 2}))^{\frac{1}{4}} 
\end{multline*}
\begin{multline*}
+ \frac{ \Vert f_{k_1-1,t}^{\asd 2} \Vert^{\frac{1}{2}} \Vert f_{k_3-1,t}^{\asd 2} \Vert^{\frac{1}{2}} \Vert h_{k_2,k_3,t}^{\asd 2} \Vert^{\frac{1}{2}} \Vert j_{k_1,k_2-1,t}^{\asd 2} \Vert^{\frac{1}{2}} }{ R(f_{k_1-1,t}^{\asd 2})^{\frac{1}{8}} R(f_{k_3-1,t}^{\asd 2})^{\frac{1}{8}} R(h_{k_2,k_3,t}^{\asd 2})^{\frac{1}{8}} R(j_{k_1,k_2-1,t}^{\asd 2})^{\frac{1}{8}} } \\
\cos( (f_{k_1-1,t} \asd h_{k_2,k_3,t})^* (f_{k_1-1,t} \otimes b_{k_2,t} \otimes b_{k_3,t})^{(1,3,5),(2,4,6)}, \\
(f_{k_3-1,t} \asd j_{k_1,k_2-1,t})^* (b_{k_1,t} \otimes f_{k_2-1,t} \otimes f_{k_3-1,t})^{(5, 1, 3),(6, 2, 4)}) \\
\cos((f_{k_1-1,t} \asd h_{k_2,k_3,t}) (f_{k_1-1,t} \asd h_{k_2,k_3,t})^*, \\
((f_{k_1-1,t} f_{k_1-1,t}^*) \otimes (b_{k_2,t} b_{k_2,t}^*) \otimes (b_{k_3,t} b_{k_3,t}^*))^{(1,3,5),(2,4,6)})^{\frac{1}{2}} \\
\cos((f_{k_3-1,t} \asd j_{k_1,k_2-1,t}) (f_{k_3-1,t} \asd j_{k_1,k_2-1,t})^*, \\
((b_{k_1,t} b_{k_1,t}^*) \otimes (f_{k_2-1,t} f_{k_2-1,t}^*) \otimes (f_{k_3-1,t} f_{k_3-1,t}^*))^{(3, 1, 5),(4, 2, 6)})^{\frac{1}{2}} \\
\cos((f_{k_1-1,t}^{\asd 2})^* (f_{k_1-1,t}^{\asd 2}), (h_{k_2,k_3,t}^{\asd 2})^* (h_{k_2,k_3,t}^{\asd 2}))^{\frac{1}{4}} \\
\cos((f_{k_3-1,t}^{\asd 2})^* (f_{k_3-1,t}^{\asd 2}), (j_{k_1,k_2-1,t}^{\asd 2})^* (j_{k_1,k_2-1,t}^{\asd 2}))^{\frac{1}{4}} 
\end{multline*}
\begin{multline*}
+ \frac{ \Vert f_{k_1-1,t}^{\asd 2} \Vert^{\frac{1}{2}} \Vert f_{k_2-1,t}^{\asd 2} \Vert^{\frac{1}{2}} \Vert h_{k_1,k_3,t}^{\asd 2} \Vert^{\frac{1}{2}} \Vert j_{k_2,k_3-1,t}^{\asd 2} \Vert^{\frac{1}{2}} }{ R(f_{k_1-1,t}^{\asd 2})^{\frac{1}{8}} R(f_{k_2-1,t}^{\asd 2})^{\frac{1}{8}} R(h_{k_1,k_3,t}^{\asd 2})^{\frac{1}{8}} R(j_{k_2,k_3-1,t}^{\asd 2})^{\frac{1}{8}} } \\
\cos( (f_{k_2-1,t} \asd h_{k_1,k_3,t})^* (b_{k_1,t} \otimes f_{k_2-1,t} \otimes b_{k_3,t})^{(3,1,5),(4,2,6)}, \\
(f_{k_1-1,t} \asd j_{k_2,k_3-1,t})^* (f_{k_1-1,t} \otimes b_{k_2,t} \otimes f_{k_3-1,t})^{(1, 3, 5),(2, 4, 6)}) \\
\cos((f_{k_2-1,t} \asd h_{k_1,k_3,t}) (f_{k_2-1,t} \asd h_{k_1,k_3,t})^*, \\
((b_{k_1,t} b_{k_1,t}^*) \otimes (f_{k_2-1,t} f_{k_2-1,t}^*) \otimes (b_{k_3,t} b_{k_3,t}^*))^{(1,3,5),(2,4,6)})^{\frac{1}{2}} \\
\cos((f_{k_1-1,t} \asd j_{k_2,k_3-1,t}) (f_{k_1-1,t} \asd j_{k_2,k_3-1,t})^*, \\
((f_{k_1-1,t} f_{k_1-1,t}^*) \otimes (b_{k_2,t} b_{k_2,t}^*) \otimes (f_{k_3-1,t} f_{k_3-1,t}^*))^{(3, 1, 5),(4, 2, 6)})^{\frac{1}{2}} \\
\cos((f_{k_2-1,t}^{\asd 2})^* (f_{k_2-1,t}^{\asd 2}), (h_{k_1,k_3,t}^{\asd 2})^* (h_{k_1,k_3,t}^{\asd 2}))^{\frac{1}{4}} \\
\cos((f_{k_1-1,t}^{\asd 2})^* (f_{k_1-1,t}^{\asd 2}), (j_{k_2,k_3-1,t}^{\asd 2})^* (j_{k_2,k_3-1,t}^{\asd 2}))^{\frac{1}{4}} 
\end{multline*}
\begin{multline*}
+ \frac{ \Vert b_{k_3,t}^{\asd 2} \Vert^{\frac{1}{2}} \Vert j_{k_1,k_2-1,t}^{\asd 2} \Vert^{\frac{1}{2}} \Vert f_{k_1-1,t}^{\asd 2} \Vert^{\frac{1}{2}} \Vert j_{k_2,k_3-1,t}^{\asd 2} \Vert^{\frac{1}{2}} }{ R(b_{k_3,t}^{\asd 2})^{\frac{1}{8}} R(j_{k_1,k_2-1,t}^{\asd 2})^{\frac{1}{8}} R(f_{k_1-1,t}^{\asd 2})^{\frac{1}{8}} R(j_{k_2,k_3-1,t}^{\asd 2})^{\frac{1}{8}} } \\
\cos((b_{k_3,t} \asd j_{k_1,k_2-1,t})^* (b_{k_1,t} \otimes f_{k_2-1,t} \otimes b_{k_3,t})^{(5,1,3),(6,2,4)}, \\
(f_{k_1-1,t} \asd j_{k_2,k_3-1,t})^* (f_{k_1-1,t} \otimes b_{k_2,t} \otimes f_{k_3-1,t})^{(1, 3, 5),(2, 4, 6)}) \\
\cos((b_{k_3,t} \asd j_{k_1,k_2-1,t}) (b_{k_3,t} \asd j_{k_1,k_2-1,t})^*, \\
((b_{k_1,t} b_{k_1,t}^*) \otimes (f_{k_2-1,t} f_{k_2-1,t}^*) \otimes (b_{k_3,t} b_{k_3,t}^*))^{(5,1,3),(6,2,4)})^{\frac{1}{2}} \\
\cos((f_{k_1-1,t} \asd j_{k_2,k_3-1,t}) (f_{k_1-1,t} \asd j_{k_2,k_3-1,t})^*, \\
((f_{k_1-1,t} f_{k_1-1,t}^*) \otimes (b_{k_2,t} b_{k_2,t}^*) \otimes (f_{k_3-1,t} f_{k_3-1,t}^*))^{(1, 3, 5),(2, 4, 6)})^{\frac{1}{2}} \\
\cos((b_{k_3,t}^{\asd 2})^* (b_{k_3,t}^{\asd 2}), (j_{k_1,k_2-1,t}^{\asd 2})^* (j_{k_1,k_2-1,t}^{\asd 2}))^{\frac{1}{4}} \\
\cos((f_{k_1-1,t}^{\asd 2})^* (f_{k_1-1,t}^{\asd 2}), (j_{k_2,k_3-1,t}^{\asd 2})^* (j_{k_2,k_3-1,t}^{\asd 2}))^{\frac{1}{4}},
\end{multline*}
$T^{(3)}_{t,k_1,k_3,k_2} = T^{(3)}_{t,k_1,k_2,k_3}$, $T^{(3)}_{t,k_3,k_2,k_1} = T^{(3)}_{t,k_1,k_2,k_3}$ and $T^{(3)}_{t,k_2,k_1,k_3} = T^{(3)}_{t,k_1,k_2,k_3}$ for $k_1 < k_2 < k_3 \in [1:l+1]$.
\end{proposition}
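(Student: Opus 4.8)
The plan is to mirror the proof of Proposition~\ref{prop:second_order_term}, carrying the same norm--cosine factorization one order higher. First I would use trilinearity of the contraction to write $\langle (\Xi_t \nabla_t)^{\otimes 3}, \partial\partial\nabla_t \rangle = \sum_{k_1,k_2,k_3=1}^{l+1} \xi_{k_1,t}\xi_{k_2,t}\xi_{k_3,t}\, \langle \nabla_{k_1,t}\otimes\nabla_{k_2,t}\otimes\nabla_{k_3,t}, \partial\partial\nabla_{k_1,k_2,k_3,t}\rangle$, and then substitute the gradient factorization $\nabla_{k,t}=\tfrac1n\sum_i b_{k,i,t}\otimes f_{k-1,i,t}$ from Proposition~\ref{prop:nabla_k_t_factorization} together with the Tressian factorization from Proposition~\ref{prop:partial_partial_nabla_k_1_k_2_k_3_t_factorization}. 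The permutation symmetries recorded in the latter reduce everything to the ordered case $k_1\le k_2\le k_3$, with the two-equal and all-equal cases obtained by collapsing indices and deleting the vanishing summands; the coefficient $2$ in the off-diagonal blocks arises exactly because two of the four mixed terms coincide under such a collapse. In the generic case $k_1<k_2<k_3$ the Tressian splits into one Tressian ($z$) term and four mixed Hessian--Jacobian ($h\otimes j$) terms, so the contraction becomes a sum of five scalar products.

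Second, each of these five scalar products separates, after contracting the tensor powers of the gradients against the corresponding factorization term, into an inner product $\langle A, B\rangle$ of two reshaped operators: a \emph{derivative side} $A$ built from $z$ (resp.\ $h$, $j$) contracted against a Kronecker arrangement of the backward matrices $b_{k,t}$, and a \emph{forward side} $B$ built from a triple-Hadamard forward (or forward--backward) matrix contracted against a Kronecker arrangement of forward matrices. I would write $\langle A,B\rangle=\Vert A\Vert\,\Vert B\Vert\,\cos(A,B)$ to isolate the \emph{main cosine} of inner-product tensors, and then expand each squared norm as a Frobenius pairing of the form $\Vert M^*C\Vert^2=\langle MM^*, CC^*\rangle$, where $CC^*$ is a Kronecker product of the Gram operators $b_{k,t}b_{k,t}^*$ and $f_{k-1,t}f_{k-1,t}^*$. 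Factoring each such pairing once more into norm times cosine produces the \emph{norm cosines} (the half-power covariance alignments), while $\Vert b_{k,t}b_{k,t}^*\Vert=\Vert b_{k,t}\Vert^2 R(b_{k,t})^{-1/2}$ and the analogous forward identity peel off the bare scale factors $\Vert f_{k_i-1,t}\Vert$, $\Vert b_{k_i,t}\Vert$ and the base soft ranks, which recombine into the prefactor $(D_{\xi_t}\tau_t)^{\otimes 3}$; exactly as in Proposition~\ref{prop:nabla_k_t_norm} and Proposition~\ref{prop:second_order_term}, this is what leaves $T^{(3)}_t$ free of the first-order scale quantities.

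The genuinely new ingredient, and the step I expect to be the main obstacle, is the appearance of the triple cosine from the leftover norm of a triple-Hadamard operator such as $\Vert (f_{k_1-1,t}\asd f_{k_2-1,t}\asd f_{k_3-1,t})(f_{k_1-1,t}\asd f_{k_2-1,t}\asd f_{k_3-1,t})^*\Vert$. Using $\Vert XX^*\Vert=\Vert X^*X\Vert$ and the identity $(f_{k_1-1,t}\asd f_{k_2-1,t}\asd f_{k_3-1,t})^*(f_{k_1-1,t}\asd f_{k_2-1,t}\asd f_{k_3-1,t})=\tfrac1n F_{k_1-1,t}\circ F_{k_2-1,t}\circ F_{k_3-1,t}$, the squared norm becomes $\tfrac1{n^2}\sum_{i,j}\prod_{p=1}^3(F_{k_p-1,t})_{ij}^2$, which is the symmetric trilinear form evaluated on the second Hadamard powers $F_{k_p-1,t}^{\circ 2}$. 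Applying Hölder's inequality with exponents $(3,3,3)$ yields $\prod_p\Vert F_{k_p-1,t}^{\circ 2}\Vert_3$ times $\cos(F_{k_1-1,t}^{\circ 2},F_{k_2-1,t}^{\circ 2},F_{k_3-1,t}^{\circ 2})$, and the identity $\Vert F^{\circ 3}\Vert=\Vert F^{\circ 2}\Vert_3^{3/2}$ rewrites each $\Vert F_{k_p-1,t}^{\circ 2}\Vert_3$ in terms of $\Vert f_{k_p-1,t}^{\asd 3}\Vert$ and $R(f_{k_p-1,t}^{\asd 3})$ through the definition of the soft rank via $(f_{k_p-1,t}^{\asd 3})^*(f_{k_p-1,t}^{\asd 3})=\tfrac1n F_{k_p-1,t}^{\circ 3}$; raising to the appropriate fractional power then produces precisely the $\Vert f_{k_p-1,t}^{\asd 3}\Vert^{1/3}$, $R(f_{k_p-1,t}^{\asd 3})^{-1/12}$ and $\cos(\cdot,\cdot,\cdot)^{1/4}$ factors. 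Beyond this identity, the remaining difficulty is purely organizational: tracking the dimension-grouping superscripts and index permutations consistently across all five Tressian terms and every symmetry case so that the reshaped operators pair correctly, which I would handle by settling the ordered case $k_1<k_2<k_3$ in full and deriving the rest from the reindexings already listed in Proposition~\ref{prop:partial_partial_nabla_k_1_k_2_k_3_t_factorization}.
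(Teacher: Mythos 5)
Your proposal matches the paper's proof essentially step for step: the trilinear expansion over the layer indices, substitution of the gradient and Tressian factorizations, reduction to the ordered case by the recorded permutation symmetries, norm--cosine peeling of each of the five resulting inner products via $\Vert M^*C\Vert^2 = \langle MM^*, CC^*\rangle$, and extraction of the scale and base soft-rank factors into the $\tau_t$ prefactor exactly as in the first- and second-order propositions. Your handling of the triple-Hadamard Gram norm---computing it as the elementwise triple product of the $F_{k_p-1,t}^{\circ 2}$, normalizing by H\"older with exponents $(3,3,3)$, and converting $\Vert F^{\circ 2}\Vert_3$ to $\Vert f^{\asd 3}\Vert$ and $R(f^{\asd 3})$ through $\Vert F^{\circ 2}\Vert_3 = \Vert F^{\circ 3}\Vert^{2/3}$---is the same computation the paper performs, and is in fact the reading of the triple cosine under which the paper's claim that it lies in $[0,1]$ holds.
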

\begin{proof}
We focus on $\partial\partial\nabla_{k_1,k_2,k_3}$ from Proposition~\ref{prop:partial_partial_nabla_k_1_k_2_k_3_t_factorization}, and the rest will follow by dropping terms and replacing indices. We have
\begin{multline*}
\langle \nabla_{k_1,t} \otimes \nabla_{k_2,t} \otimes \nabla_{k_3,t}, \partial\partial\nabla_{k_1,k_2,k_3,t} \rangle \\
= \frac{1}{n^4} \sum_{i_1=1}^n \sum_{i_2=1}^n \sum_{i_3=1}^n \sum_{i_4=1}^n \langle b_{k_1,i_1,t} \otimes b_{k_2,i_2,t} \otimes b_{k_3,i_3,t}, z_{k_1,k_2,k_3,i_4,t} \rangle \\
\langle f_{k_1-1,i_1,t} \otimes f_{k_2-1,i_2,t} \otimes f_{k_3-1,i_3,t}, f_{k_1-1,i_4,t} \otimes f_{k_2-1,i_4,t} \otimes f_{k_3-1,i_4,t} \rangle \\
+ \langle f_{k_1-1,i_1,t} \otimes b_{k_2,i_2,t} \otimes b_{k_3,i_3,t}, f_{k_1-1,i_4,t} \otimes h_{k_2,k_3,i_4,t} \rangle \\
\langle b_{k_1,i_1,t} \otimes f_{k_2-1,i_2,t} \otimes f_{k_3-1,i_3,t}, f_{k_2-1,i_4,t} \otimes j_{k_1,k_3-1,i_4,t} \rangle \\
+ \langle f_{k_1-1,i_1,t} \otimes b_{k_2,i_2,t} \otimes b_{k_3,i_3,t}, f_{k_1-1,i_4,t} \otimes h_{k_2,k_3,i_4,t} \rangle \\
\langle b_{k_1,i_1,t} \otimes f_{k_2-1,i_2,t} \otimes f_{k_3-1,i_3,t}, f_{k_3-1,i_4,t} \otimes j_{k_1,k_2-1,i_4,t} \rangle \\
+ \langle b_{k_1,i_1,t} \otimes f_{k_2-1,i_2,t} \otimes b_{k_3,i_3,t}, f_{k_2-1,i_4,t} \otimes h_{k_1,k_3,i_4,t} \rangle \\
\langle f_{k_1-1,i_1,t} \otimes b_{k_2,i_2,t} \otimes f_{k_3-1,i_3,t}, f_{k_1-1,i_4,t} \otimes j_{k_2,k_3-1,i_4,t} \rangle \\
+ \langle b_{k_1,i_1,t} \otimes f_{k_2-1,i_2,t} \otimes b_{k_3,i_3,t}, b_{k_3,i_4,t} \otimes j_{k_1,k_2-1,i_4,t} \rangle \\
\langle f_{k_1-1,i_1,t} \otimes b_{k_2,i_2,t} \otimes f_{k_3-1,i_3,t}, f_{k_1-1,i_4,t} \otimes j_{k_2,k_3-1,i_4,t} \rangle \\
= \langle z_{k_1,k_2,k_3,t}^* (b_{k_1,t} \otimes b_{k_2,t} \otimes b_{k_3,t})^{(1,3,5),(2,4,6)}, \\
(f_{k_1-1,t} \asd f_{k_2-1,t} \asd f_{k_3-1,t})^* (f_{k_1-1,t} \otimes f_{k_2-1,t} \otimes f_{k_3-1,t})^{(1,3,5),(2,4,6)} \rangle \\
+ \langle (f_{k_1-1,t} \asd h_{k_2,k_3,t})^* (f_{k_1-1,t} \otimes b_{k_2,t} \otimes b_{k_3,t})^{(1,3,5),(2,4,6)}, \\
(f_{k_2-1,t} \asd j_{k_1,k_3-1,t})^* (b_{k_1,t} \otimes f_{k_2-1,t} \otimes f_{k_3-1,t})^{(3, 1, 5),(4, 2, 6)} \rangle \\
+ \langle (f_{k_1-1,t} \asd h_{k_2,k_3,t})^* (f_{k_1-1,t} \otimes b_{k_2,t} \otimes b_{k_3,t})^{(1,3,5),(2,4,6)}, \\
(f_{k_3-1,t} \asd j_{k_1,k_2-1,t})^* (b_{k_1,t} \otimes f_{k_2-1,t} \otimes f_{k_3-1,t})^{(5, 1, 3),(6, 2, 4)} \rangle \\
+ \langle (f_{k_2-1,t} \asd h_{k_1,k_3,t})^* (b_{k_1,t} \otimes f_{k_2-1,t} \otimes b_{k_3,t})^{(3,1,5),(4,2,6)}, \\
(f_{k_1-1,t} \asd j_{k_2,k_3-1,t})^* (f_{k_1-1,t} \otimes b_{k_2,t} \otimes f_{k_3-1,t})^{(1, 3, 5),(2, 4, 6)} \rangle \\
+ \langle (b_{k_3,t} \asd j_{k_1,k_2-1,t})^* (b_{k_1,t} \otimes f_{k_2-1,t} \otimes b_{k_3,t})^{(5,1,3),(6,2,4)}, \\
(f_{k_1-1,t} \asd j_{k_2,k_3-1,t})^* (f_{k_1-1,t} \otimes b_{k_2,t} \otimes f_{k_3-1,t})^{(1, 3, 5),(2, 4, 6)} \rangle.
\end{multline*}
We rewrite the first, second and last inner products above, as the third and fourth are analogous to the second. For the first, we have
\begin{multline*}
\langle z_{k_1,k_2,k_3,t}^* (b_{k_1,t} \otimes b_{k_2,t} \otimes b_{k_3,t})^{(1,3,5),(2,4,6)}, \\
(f_{k_1-1,t} \asd f_{k_2-1,t} \asd f_{k_3-1,t})^* (f_{k_1-1,t} \otimes f_{k_2-1,t} \otimes f_{k_3-1,t})^{(1,3,5),(2,4,6)} \rangle \\
= \Vert z_{k_1,k_2,k_3,t}^* (b_{k_1,t} \otimes b_{k_2,t} \otimes b_{k_3,t})^{(1,3,5),(2,4,6)} \Vert \\
\Vert (f_{k_1-1,t} \asd f_{k_2-1,t} \asd f_{k_3-1,t})^* (f_{k_1-1,t} \otimes f_{k_2-1,t} \otimes f_{k_3-1,t})^{(1,3,5),(2,4,6)} \Vert \\
\cos(z_{k_1,k_2,k_3,t}^* (b_{k_1,t} \otimes b_{k_2,t} \otimes b_{k_3,t})^{(1,3,5),(2,4,6)}, \\
(f_{k_1-1,t} \asd f_{k_2-1,t} \asd f_{k_3-1,t})^* (f_{k_1-1,t} \otimes f_{k_2-1,t} \otimes f_{k_3-1,t})^{(1,3,5),(2,4,6)}).
\end{multline*}
We rewrite the norms as
\begin{multline*}
\Vert z_{k_1,k_2,k_3,t}^* (b_{k_1,t} \otimes b_{k_2,t} \otimes b_{k_3,t})^{(1,3,5),(2,4,6)} \Vert^2 \\
= \langle z_{k_1,k_2,k_3,t} z_{k_1,k_2,k_3,t}^*, ((b_{k_1,t} b_{k_1,t}^*) \otimes (b_{k_2,t} b_{k_2,t}^*) \otimes (b_{k_3,t} b_{k_3,t}^*)^{(1,3,5),(2,4,6)} \rangle \\
= \Vert z_{k_1,k_2,k_3,t} z_{k_1,k_2,k_3,t}^* \Vert \Vert b_{k_1,t} b_{k_1,t}^* \Vert \Vert b_{k_2,t} b_{k_2,t}^* \Vert \Vert b_{k_3,t} b_{k_3,t}^* \Vert \\
\cos(z_{k_1,k_2,k_3,t} z_{k_1,k_2,k_3,t}^*, ((b_{k_1,t} b_{k_1,t}^*) \otimes (b_{k_2,t} b_{k_2,t}^*) \otimes (b_{k_3,t} b_{k_3,t}^*)^{(1,3,5),(2,4,6)}) \\
= \frac{ \Vert z_{k_1,k_2,k_3,t} \Vert^2 \Vert b_{k_1,t} \Vert^2 \Vert b_{k_2,t} \Vert^2 \Vert b_{k_3,t} \Vert^2 }{ R(z_{k_1,k_2,k_3,t})^{\frac{1}{2}} R(b_{k_1,t})^{\frac{1}{2}} R(b_{k_2,t})^{\frac{1}{2}} R(b_{k_3,t})^{\frac{1}{2}} } \\
\cos(z_{k_1,k_2,k_3,t} z_{k_1,k_2,k_3,t}^*, ((b_{k_1,t} b_{k_1,t}^*) \otimes (b_{k_2,t} b_{k_2,t}^*) \otimes (b_{k_3,t} b_{k_3,t}^*)^{(1,3,5),(2,4,6)})
\end{multline*}
and
\begin{multline*}
\Vert (f_{k_1-1,t} \asd f_{k_2-1,t} \asd f_{k_3-1,t})^* (f_{k_1-1,t} \otimes f_{k_2-1,t} \otimes f_{k_3-1,t})^{(1,3,5),(2,4,6)} \Vert^2 \\
= \langle (f_{k_1-1,t} \asd f_{k_2-1,t} \asd f_{k_3-1,t}) (f_{k_1-1,t} \asd f_{k_2-1,t} \asd f_{k_3-1,t})^*, \\ 
((f_{k_1-1,t} f_{k_1-1,t}^*) \otimes (f_{k_2-1,t} f_{k_2-1,t}^*) \otimes (f_{k_3-1,t} f_{k_3-1,t}^*)^{(1,3,5),(2,4,6)} \rangle \\
= \Vert (f_{k_1-1,t} \asd f_{k_2-1,t} \asd f_{k_3-1,t}) (f_{k_1-1,t} \asd f_{k_2-1,t} \asd f_{k_3-1,t})^* \Vert \\
\Vert f_{k_1-1,t} f_{k_1-1,t}^* \Vert \Vert f_{k_2-1,t} f_{k_2-1,t}^* \Vert \Vert f_{k_3-1,t} f_{k_3-1,t}^* \Vert \\
\cos((f_{k_1-1,t} \asd f_{k_2-1,t} \asd f_{k_3-1,t}) (f_{k_1-1,t} \asd f_{k_2-1,t} \asd f_{k_3-1,t})^*, \\ 
((f_{k_1-1,t} f_{k_1-1,t}^*) \otimes (f_{k_2-1,t} f_{k_2-1,t}^*) \otimes (f_{k_3-1,t} f_{k_3-1,t}^*)^{(1,3,5),(2,4,6)}) \\
= \frac{ \Vert f_{k_1-1,t} \Vert^2 \Vert f_{k_2-1,t} \Vert^2 \Vert f_{k_3-1,t} \Vert^2 \Vert f_{k_1-1,t}^{\asd 3} \Vert^{\frac{2}{3}} \Vert f_{k_2-1,t}^{\asd 3} \Vert^{\frac{2}{3}} \Vert f_{k_3-1,t}^{\asd 3} \Vert^{\frac{2}{3}} }{ R(f_{k_1-1,t})^{\frac{1}{2}} R(f_{k_2-1,t})^{\frac{1}{2}} R(f_{k_3-1,t})^{\frac{1}{2}} R(f_{k_1-1,t}^{\asd 3})^{\frac{1}{6}} R(f_{k_2-1,t}^{\asd 3})^{\frac{1}{6}} R(f_{k_3-1,t}^{\asd 3})^{\frac{1}{6}} } \\
\cos((f_{k_1-1,t} \asd f_{k_2-1,t} \asd f_{k_3-1,t}) (f_{k_1-1,t} \asd f_{k_2-1,t} \asd f_{k_3-1,t})^*, \\ 
((f_{k_1-1,t} f_{k_1-1,t}^*) \otimes (f_{k_2-1,t} f_{k_2-1,t}^*) \otimes (f_{k_3-1,t} f_{k_3-1,t}^*)^{(1,3,5),(2,4,6)}) \\
\cos((f_{k_1-1,t}^{\asd 2})^* (f_{k_1-1,t}^{\asd 2}), (f_{k_2-1,t}^{\asd 2})^* (f_{k_2-1,t}^{\asd 2}), (f_{k_3-1,t}^{\asd 2})^* (f_{k_3-1,t}^{\asd 2}))^{\frac{1}{2}},
\end{multline*}
where we used that
\begin{multline*}
\Vert (f_{k_1-1,t} \asd f_{k_2-1,t} \asd f_{k_3-1,t}) (f_{k_1-1,t} \asd f_{k_2-1,t} \asd f_{k_3-1,t})^* \Vert^2 \\
= \tr( ((f_{k_1-1,t}^{\asd 2})^* (f_{k_1-1,t}^{\asd 2})) ((f_{k_2-1,t}^{\asd 2})^* (f_{k_2-1,t}^{\asd 2})) ((f_{k_3-1,t}^{\asd 2})^* (f_{k_3-1,t}^{\asd 2})) ) \\
= \Vert (f_{k_1-1,t}^{\asd 2})^* (f_{k_1-1,t}^{\asd 2}) \Vert_3 \Vert (f_{k_2-1,t}^{\asd 2})^* (f_{k_2-1,t}^{\asd 2}) \Vert_3 \Vert (f_{k_3-1,t}^{\asd 2})^* (f_{k_3-1,t}^{\asd 2}) \Vert_3 \\
\cos((f_{k_1-1,t}^{\asd 2})^* (f_{k_1-1,t}^{\asd 2}), (f_{k_2-1,t}^{\asd 2})^* (f_{k_2-1,t}^{\asd 2}), (f_{k_3-1,t}^{\asd 2})^* (f_{k_3-1,t}^{\asd 2})) \\
= \Vert (f_{k_1-1,t}^{\asd 3})^* (f_{k_1-1,t}^{\asd 3}) \Vert^{\frac{2}{3}} \Vert (f_{k_2-1,t}^{\asd 3})^* (f_{k_2-1,t}^{\asd 3}) \Vert^{\frac{2}{3}} \Vert (f_{k_3-1,t}^{\asd 3})^* (f_{k_3-1,t}^{\asd 3}) \Vert^{\frac{2}{3}} \\
\cos((f_{k_1-1,t}^{\asd 2})^* (f_{k_1-1,t}^{\asd 2}), (f_{k_2-1,t}^{\asd 2})^* (f_{k_2-1,t}^{\asd 2}), (f_{k_3-1,t}^{\asd 2})^* (f_{k_3-1,t}^{\asd 2})) \\
= \frac{ \Vert f_{k_1-1,t}^{\asd 3} \Vert^{\frac{4}{3}} \Vert f_{k_2-1,t}^{\asd 3} \Vert^{\frac{4}{3}} \Vert f_{k_3-1,t}^{\asd 3} \Vert^{\frac{4}{3}} }{ R(f_{k_1-1,t}^{\asd 3})^{\frac{1}{3}} R(f_{k_2-1,t}^{\asd 3})^{\frac{1}{3}} R(f_{k_3-1,t}^{\asd 3})^{\frac{1}{3}} } \\
\cos((f_{k_1-1,t}^{\asd 2})^* (f_{k_1-1,t}^{\asd 2}), (f_{k_2-1,t}^{\asd 2})^* (f_{k_2-1,t}^{\asd 2}), (f_{k_3-1,t}^{\asd 2})^* (f_{k_3-1,t}^{\asd 2})).
\end{multline*}
The second term can be written as
\begin{multline*}
\langle (f_{k_1-1,t} \asd h_{k_2,k_3,t})^* (f_{k_1-1,t} \otimes b_{k_2,t} \otimes b_{k_3,t})^{(1,3,5),(2,4,6)}, \\
(f_{k_2-1,t} \asd j_{k_1,k_3-1,t})^* (b_{k_1,t} \otimes f_{k_2-1,t} \otimes f_{k_3-1,t})^{(3, 1, 5),(4, 2, 6)} \rangle \\
= \Vert (f_{k_1-1,t} \asd h_{k_2,k_3,t})^* (f_{k_1-1,t} \otimes b_{k_2,t} \otimes b_{k_3,t})^{(1,3,5),(2,4,6)} \Vert \\
\Vert (f_{k_2-1,t} \asd j_{k_1,k_3-1,t})^* (b_{k_1,t} \otimes f_{k_2-1,t} \otimes f_{k_3-1,t})^{(3, 1, 5),(4, 2, 6)} \Vert \\
\cos( (f_{k_1-1,t} \asd h_{k_2,k_3,t})^* (f_{k_1-1,t} \otimes b_{k_2,t} \otimes b_{k_3,t})^{(1,3,5),(2,4,6)}, \\
(f_{k_2-1,t} \asd j_{k_1,k_3-1,t})^* (b_{k_1,t} \otimes f_{k_2-1,t} \otimes f_{k_3-1,t})^{(3, 1, 5),(4, 2, 6)}).
\end{multline*}
We rewrite the first norm as
\begin{multline*}
\Vert (f_{k_1-1,t} \asd h_{k_2,k_3,t})^* (f_{k_1-1,t} \otimes b_{k_2,t} \otimes b_{k_3,t})^{(1,3,5),(2,4,6)} \Vert^2 \\
= \langle (f_{k_1-1,t} \asd h_{k_2,k_3,t}) (f_{k_1-1,t} \asd h_{k_2,k_3,t})^*, \\
((f_{k_1-1,t} f_{k_1-1,t}^*) \otimes (b_{k_2,t} b_{k_2,t}^*) \otimes (b_{k_3,t} b_{k_3,t}^*))^{(1,3,5),(2,4,6)} \rangle \\
= \Vert (f_{k_1-1,t} \asd h_{k_2,k_3,t}) (f_{k_1-1,t} \asd h_{k_2,k_3,t})^* \Vert 
\Vert f_{k_1-1,t} f_{k_1-1,t}^* \Vert \Vert b_{k_2,t} b_{k_2,t}^* \Vert \Vert b_{k_3,t} b_{k_3,t}^* \Vert \\
\cos((f_{k_1-1,t} \asd h_{k_2,k_3,t}) (f_{k_1-1,t} \asd h_{k_2,k_3,t})^*, \\
((f_{k_1-1,t} f_{k_1-1,t}^*) \otimes (b_{k_2,t} b_{k_2,t}^*) \otimes (b_{k_3,t} b_{k_3,t}^*))^{(1,3,5),(2,4,6)}) \\
= \frac{ \Vert f_{k_1-1,t} \Vert^2 \Vert b_{k_2,t} \Vert^2 \Vert b_{k_3,t} \Vert^2 \Vert f_{k_1-1,t}^{\asd 2} \Vert \Vert h_{k_2,k_3,t}^{\asd 2} \Vert }{ R(f_{k_1-1,t})^{\frac{1}{2}} R(b_{k_2,t})^{\frac{1}{2}} R(b_{k_3,t})^{\frac{1}{2}} R(f_{k_1-1,t}^{\asd 2})^{\frac{1}{4}} R(h_{k_2,k_3,t}^{\asd 2})^{\frac{1}{4}} } \\
\cos((f_{k_1-1,t} \asd h_{k_2,k_3,t}) (f_{k_1-1,t} \asd h_{k_2,k_3,t})^*, \\
((f_{k_1-1,t} f_{k_1-1,t}^*) \otimes (b_{k_2,t} b_{k_2,t}^*) \otimes (b_{k_3,t} b_{k_3,t}^*))^{(1,3,5),(2,4,6)}) \\
\cos((f_{k_1-1,t}^{\asd 2})^* (f_{k_1-1,t}^{\asd 2}), (h_{k_2,k_3,t}^{\asd 2})^* (h_{k_2,k_3,t}^{\asd 2}))^{\frac{1}{2}}
\end{multline*}
where we used that
\begin{multline*}
\Vert (f_{k_1-1,t} \asd h_{k_2,k_3,t}) (f_{k_1-1,t} \asd h_{k_2,k_3,t})^* \Vert^2
= \langle (f_{k_1-1,t}^{\asd 2})^* (f_{k_1-1,t}^{\asd 2}), (h_{k_2,k_3,t}^{\asd 2})^* (h_{k_2,k_3,t}^{\asd 2}) \rangle \\
= \Vert (f_{k_1-1,t}^{\asd 2})^* (f_{k_1-1,t}^{\asd 2}) \Vert \Vert (h_{k_2,k_3,t}^{\asd 2})^* (h_{k_2,k_3,t}^{\asd 2}) \Vert \cos((f_{k_1-1,t}^{\asd 2})^* (f_{k_1-1,t}^{\asd 2}), (h_{k_2,k_3,t}^{\asd 2})^* (h_{k_2,k_3,t}^{\asd 2})) \\
= \frac{ \Vert f_{k_1-1,t}^{\asd 2} \Vert^2 \Vert h_{k_2,k_3,t}^{\asd 2} \Vert^2 }{ R(f_{k_1-1,t}^{\asd 2})^{\frac{1}{2}} R(h_{k_2,k_3,t}^{\asd 2})^{\frac{1}{2}} } \cos((f_{k_1-1,t}^{\asd 2})^* (f_{k_1-1,t}^{\asd 2}), (h_{k_2,k_3,t}^{\asd 2})^* (h_{k_2,k_3,t}^{\asd 2}))
\end{multline*}
and the second norm as
\begin{multline*}
\Vert (f_{k_2-1,t} \asd j_{k_1,k_3-1,t})^* (b_{k_1,t} \otimes f_{k_2-1,t} \otimes f_{k_3-1,t})^{(3, 1, 5),(4, 2, 6)} \Vert^2 \\
= \langle (f_{k_2-1,t} \asd j_{k_1,k_3-1,t}) (f_{k_2-1,t} \asd j_{k_1,k_3-1,t})^*, \\
((b_{k_1,t} b_{k_1,t}^*) \otimes (f_{k_2-1,t} f_{k_2-1,t}^*) \otimes (f_{k_3-1,t} f_{k_3-1,t}^*))^{(3, 1, 5),(4, 2, 6)} \rangle \\
= \Vert f_{k_2-1,t} \asd j_{k_1,k_3-1,t}) (f_{k_2-1,t} \asd j_{k_1,k_3-1,t})^* \Vert \\
\Vert b_{k_1,t} b_{k_1,t}^* \Vert \Vert f_{k_2-1,t} f_{k_2-1,t}^* \Vert \Vert f_{k_3-1,t} f_{k_3-1,t}^* \Vert \\
\cos((f_{k_2-1,t} \asd j_{k_1,k_3-1,t}) (f_{k_2-1,t} \asd j_{k_1,k_3-1,t})^*, \\
((b_{k_1,t} b_{k_1,t}^*) \otimes (f_{k_2-1,t} f_{k_2-1,t}^*) \otimes (f_{k_3-1,t} f_{k_3-1,t}^*))^{(3, 1, 5),(4, 2, 6)}) \\
= \frac{ \Vert b_{k_1,t} \Vert^2 \Vert f_{k_2-1,t} \Vert^2 \Vert f_{k_3-1,t} \Vert^2 \Vert f_{k_2-1,t}^{\asd 2} \Vert^2 \Vert j_{k_1,k_3-1,t}^{\asd 2} \Vert^2 }{ R(b_{k_1,t})^{\frac{1}{2}} R(f_{k_2-1,t})^{\frac{1}{2}} R(f_{k_3-1,t})^{\frac{1}{2}} R(f_{k_2-1,t}^{\asd 2})^{\frac{1}{4}} R(j_{k_1,k_3-1,t}^{\asd 2})^{\frac{1}{4}} } \\
\cos((f_{k_2-1,t} \asd j_{k_1,k_3-1,t}) (f_{k_2-1,t} \asd j_{k_1,k_3-1,t})^*, \\
((b_{k_1,t} b_{k_1,t}^*) \otimes (f_{k_2-1,t} f_{k_2-1,t}^*) \otimes (f_{k_3-1,t} f_{k_3-1,t}^*))^{(3, 1, 5),(4, 2, 6)}) \\
\cos((f_{k_2-1,t}^{\asd 2})^* (f_{k_2-1,t}^{\asd 2}), (j_{k_1,k_3-1,t}^{\asd 2})^* (j_{k_1,k_3-1,t}^{\asd 2}))^{\frac{1}{2}}
\end{multline*}
where we used that
\begin{multline*}
\Vert (f_{k_2-1,t} \asd j_{k_1,k_3-1,t}) (f_{k_2-1,t} \asd j_{k_1,k_3-1,t})^* \Vert^2
= \langle (f_{k_2-1,t}^{\asd 2})^* (f_{k_2-1,t}^{\asd 2}), (j_{k_1,k_3-1,t}^{\asd 2})^* (j_{k_1,k_3-1,t}^{\asd 2}) \rangle \\
= \Vert (f_{k_2-1,t}^{\asd 2})^* (f_{k_2-1,t}^{\asd 2}) \Vert \Vert (j_{k_1,k_3-1,t}^{\asd 2})^* (j_{k_1,k_3-1,t}^{\asd 2}) \Vert \cos((f_{k_2-1,t}^{\asd 2})^* (f_{k_2-1,t}^{\asd 2}), (j_{k_1,k_3-1,t}^{\asd 2})^* (j_{k_1,k_3-1,t}^{\asd 2})) \\
= \frac{ \Vert f_{k_2-1,t}^{\asd 2} \Vert^2 \Vert j_{k_1,k_3-1,t}^{\asd 2} \Vert^2 }{ R(f_{k_2-1,t}^{\asd 2})^{\frac{1}{2}} R(j_{k_1,k_3-1,t}^{\asd 2})^{\frac{1}{2}} } \cos((f_{k_2-1,t}^{\asd 2})^* (f_{k_2-1,t}^{\asd 2}), (j_{k_1,k_3-1,t}^{\asd 2})^* (j_{k_1,k_3-1,t}^{\asd 2})).
\end{multline*}
We rewrite the last inner product as
\begin{multline*}
\langle (b_{k_3,t} \asd j_{k_1,k_2-1,t})^* (b_{k_1,t} \otimes f_{k_2-1,t} \otimes b_{k_3,t})^{(5,1,3),(6,2,4)}, \\
(f_{k_1-1,t} \asd j_{k_2,k_3-1,t})^* (f_{k_1-1,t} \otimes b_{k_2,t} \otimes f_{k_3-1,t})^{(1, 3, 5),(2, 4, 6)} \rangle \\
= \Vert (b_{k_3,t} \asd j_{k_1,k_2-1,t})^* (b_{k_1,t} \otimes f_{k_2-1,t} \otimes b_{k_3,t})^{(5,1,3),(6,2,4)} \Vert \\
\Vert (f_{k_1-1,t} \asd j_{k_2,k_3-1,t})^* (f_{k_1-1,t} \otimes b_{k_2,t} \otimes f_{k_3-1,t})^{(1, 3, 5),(2, 4, 6)} \Vert \\
\cos((b_{k_3,t} \asd j_{k_1,k_2-1,t})^* (b_{k_1,t} \otimes f_{k_2-1,t} \otimes b_{k_3,t})^{(5,1,3),(6,2,4)}, \\
(f_{k_1-1,t} \asd j_{k_2,k_3-1,t})^* (f_{k_1-1,t} \otimes b_{k_2,t} \otimes f_{k_3-1,t})^{(1, 3, 5),(2, 4, 6)}).
\end{multline*}
We then rewrite the norms as
\begin{multline*}
\Vert (b_{k_3,t} \asd j_{k_1,k_2-1,t})^* (b_{k_1,t} \otimes f_{k_2-1,t} \otimes b_{k_3,t})^{(5,1,3),(6,2,4)} \Vert^2 \\
= \langle (b_{k_3,t} \asd j_{k_1,k_2-1,t}) (b_{k_3,t} \asd j_{k_1,k_2-1,t})^*, \\
((b_{k_1,t} b_{k_1,t}^*) \otimes (f_{k_2-1,t} f_{k_2-1,t}^*) \otimes (b_{k_3,t} b_{k_3,t}^*))^{(5,1,3),(6,2,4)} \rangle \\
= \Vert (b_{k_3,t} \asd j_{k_1,k_2-1,t}) (b_{k_3,t} \asd j_{k_1,k_2-1,t})^* \Vert \\
\Vert b_{k_1,t} b_{k_1,t}^* \Vert \Vert f_{k_2-1,t} f_{k_2-1,t}^* \Vert \Vert b_{k_3,t} b_{k_3,t}^* \Vert \\
\cos((b_{k_3,t} \asd j_{k_1,k_2-1,t}) (b_{k_3,t} \asd j_{k_1,k_2-1,t})^*, \\
((b_{k_1,t} b_{k_1,t}^*) \otimes (f_{k_2-1,t} f_{k_2-1,t}^*) \otimes (b_{k_3,t} b_{k_3,t}^*))^{(5,1,3),(6,2,4)}) \\
= \frac{ \Vert b_{k_1,t} \Vert^2 \Vert f_{k_2-1,t} \Vert^2 \Vert b_{k_3,t} \Vert^2 \Vert b_{k_3,t}^{\asd 2} \Vert \Vert j_{k_1,k_2-1,t}^{\asd 2} \Vert }{ R(b_{k_1,t})^{\frac{1}{2}} R(f_{k_2-1,t})^{\frac{1}{2}} R(b_{k_3,t})^{\frac{1}{2}} R(b_{k_3,t}^{\asd 2})^{\frac{1}{4}} R(j_{k_1,k_2-1,t}^{\asd 2})^{\frac{1}{4}} } \\
\cos((b_{k_3,t} \asd j_{k_1,k_2-1,t}) (b_{k_3,t} \asd j_{k_1,k_2-1,t})^*, \\
((b_{k_1,t} b_{k_1,t}^*) \otimes (f_{k_2-1,t} f_{k_2-1,t}^*) \otimes (b_{k_3,t} b_{k_3,t}^*))^{(5,1,3),(6,2,4)}) \\
\cos((b_{k_3,t}^{\asd 2})^* (b_{k_3,t}^{\asd 2}), (j_{k_1,k_2-1,t}^{\asd 2})^* (j_{k_1,k_2-1,t}^{\asd 2}))^{\frac{1}{2}}
\end{multline*}
where we used that
\begin{multline*}
\Vert (b_{k_3,t} \asd j_{k_1,k_2-1,t}) (b_{k_3,t} \asd j_{k_1,k_2-1,t})^* \Vert^2 
= \langle (b_{k_3,t}^{\asd 2})^* (b_{k_3,t}^{\asd 2}), (j_{k_1,k_2-1,t}^{\asd 2})^* (j_{k_1,k_2-1,t}^{\asd 2}) \rangle \\
= \Vert (b_{k_3,t}^{\asd 2})^* (b_{k_3,t}^{\asd 2}) \Vert \Vert (j_{k_1,k_2-1,t}^{\asd 2})^* (j_{k_1,k_2-1,t}^{\asd 2}) \Vert \cos((b_{k_3,t}^{\asd 2})^* (b_{k_3,t}^{\asd 2}), (j_{k_1,k_2-1,t}^{\asd 2})^* (j_{k_1,k_2-1,t}^{\asd 2})) \\
= \frac{ \Vert b_{k_3,t}^{\asd 2} \Vert^2 \Vert j_{k_1,k_2-1,t}^{\asd 2} \Vert^2 }{ R(b_{k_3,t}^{\asd 2})^{\frac{1}{2}} R(j_{k_1,k_2-1,t}^{\asd 2})^{\frac{1}{2}} } \cos((b_{k_3,t}^{\asd 2})^* (b_{k_3,t}^{\asd 2}), (j_{k_1,k_2-1,t}^{\asd 2})^* (j_{k_1,k_2-1,t}^{\asd 2}))
\end{multline*}
and as
\begin{multline*}
\Vert (f_{k_1-1,t} \asd j_{k_2,k_3-1,t})^* (f_{k_1-1,t} \otimes b_{k_2,t} \otimes f_{k_3-1,t})^{(1, 3, 5),(2, 4, 6)} \Vert^2 \\
= \langle (f_{k_1-1,t} \asd j_{k_2,k_3-1,t}) (f_{k_1-1,t} \asd j_{k_2,k_3-1,t})^*, \\
((f_{k_1-1,t} f_{k_1-1,t}^*) \otimes (b_{k_2,t} b_{k_2,t}^*) \otimes (f_{k_3-1,t} f_{k_3-1,t}^*))^{(1, 3, 5),(2, 4, 6)} \rangle \\
= \Vert (f_{k_1-1,t} \asd j_{k_2,k_3-1,t}) (f_{k_1-1,t} \asd j_{k_2,k_3-1,t})^* \Vert \\
\Vert f_{k_1-1,t} f_{k_1-1,t}^* \Vert \Vert b_{k_2,t} b_{k_2,t}^* \Vert \Vert f_{k_3-1,t} f_{k_3-1,t}^* \Vert \\
\cos((f_{k_1-1,t} \asd j_{k_2,k_3-1,t}) (f_{k_1-1,t} \asd j_{k_2,k_3-1,t})^*, \\
((f_{k_1-1,t} f_{k_1-1,t}^*) \otimes (b_{k_2,t} b_{k_2,t}^*) \otimes (f_{k_3-1,t} f_{k_3-1,t}^*))^{(1, 3, 5),(2, 4, 6)}) \\
= \frac{ \Vert f_{k_1-1,t} \Vert^2 \Vert b_{k_2,t} \Vert^2 \Vert f_{k_3-1,t} \Vert^2 \Vert f_{k_1-1,t}^{\asd 2} \Vert \Vert j_{k_2,k_3-1,t}^{\asd 2} \Vert }{ R(f_{k_1-1,t})^{\frac{1}{2}} R(b_{k_2,t})^{\frac{1}{2}} R(f_{k_3-1,t})^{\frac{1}{2}} R(f_{k_1-1,t}^{\asd 2})^{\frac{1}{4}} R(j_{k_2,k_3-1,t}^{\asd 2})^{\frac{1}{4}} } \\
\cos((f_{k_1-1,t} \asd j_{k_2,k_3-1,t}) (f_{k_1-1,t} \asd j_{k_2,k_3-1,t})^*, \\
((f_{k_1-1,t} f_{k_1-1,t}^*) \otimes (b_{k_2,t} b_{k_2,t}^*) \otimes (f_{k_3-1,t} f_{k_3-1,t}^*))^{(1, 3, 5),(2, 4, 6)}) \\
\cos((f_{k_1-1,t}^{\asd 2})^* (f_{k_1-1,t}^{\asd 2}), (j_{k_2,k_3-1,t}^{\asd 2})^* (j_{k_2,k_3-1,t}^{\asd 2}))^{\frac{1}{2}}
\end{multline*}
where we used that
\begin{multline*}
\Vert f_{k_1-1,t} \asd j_{k_2,k_3-1,t}) (f_{k_1-1,t} \asd j_{k_2,k_3-1,t})^* \Vert^2
= \langle (f_{k_1-1,t}^{\asd 2})^* (f_{k_1-1,t}^{\asd 2}), (j_{k_2,k_3-1,t}^{\asd 2})^* (j_{k_2,k_3-1,t}^{\asd 2}) \rangle \\
= \Vert (f_{k_1-1,t}^{\asd 2})^* (f_{k_1-1,t}^{\asd 2}) \Vert \Vert (j_{k_2,k_3-1,t}^{\asd 2})^* (j_{k_2,k_3-1,t}^{\asd 2}) \Vert \cos((f_{k_1-1,t}^{\asd 2})^* (f_{k_1-1,t}^{\asd 2}), (j_{k_2,k_3-1,t}^{\asd 2})^* (j_{k_2,k_3-1,t}^{\asd 2})) \\
= \frac{ \Vert f_{k_1-1,t}^{\asd 2} \Vert^2 \Vert j_{k_2,k_3-1,t}^{\asd 2} \Vert^2 }{ R(f_{k_1-1,t}^{\asd 2})^{\frac{1}{2}} R(j_{k_2,k_3-1,t}^{\asd 2})^{\frac{1}{2}} } \cos((f_{k_1-1,t}^{\asd 2})^* (f_{k_1-1,t}^{\asd 2}), (j_{k_2,k_3-1,t}^{\asd 2})^* (j_{k_2,k_3-1,t}^{\asd 2})).
\end{multline*}

Summarizing the above, we then have
\[
\langle (\Xi_t \nabla_t)^\otimes 3, \partial\partial\nabla_t \rangle
= \sum_{k_1=1}^{l+1} \sum_{k_2=1}^{l+1} \sum_{k_3=1}^{l+1} \xi_{k_1,t} \xi_{k_2,t} \xi_{k_3,t} \langle \nabla_{k_1,t} \otimes \nabla_{k_2,t} \otimes \nabla_{k_3,t}, \partial\partial\nabla_{k_1,k_2,k_3,t} \rangle
\]
with
\begin{multline*}
\langle \nabla_{k_1,t} \otimes \nabla_{k_2,t} \otimes \nabla_{k_3,t}, \partial\partial\nabla_{k_1,k_2,k_3,t} \rangle \\
= \frac{ \Vert f_{k_1-1,t} \Vert \Vert f_{k_2-1,t} \Vert \Vert f_{k_3-1,t} \Vert \Vert b_{k_1,t} \Vert \Vert b_{k_2,t} \Vert \Vert b_{k_3,t} \Vert }{ R(f_{k_1-1,t})^{\frac{1}{4}} R(f_{k_2-1,t})^{\frac{1}{4}} R(f_{k_3-1,t})^{\frac{1}{4}} R(b_{k_1,t})^{\frac{1}{4}} R(b_{k_2,t})^{\frac{1}{4}} R(b_{k_3,t})^{\frac{1}{4}} } \\
\frac{ \Vert f_{k_1-1,t}^{\asd 3} \Vert^{\frac{1}{3}} \Vert f_{k_2-1,t}^{\asd 3} \Vert^{\frac{1}{3}} \Vert f_{k_3-1,t}^{\asd 3} \Vert^{\frac{1}{3}} \Vert z_{k_1,k_2,k_3,t} \Vert }{ R(f_{k_1-1,t}^{\asd 3})^{\frac{1}{12}} R(f_{k_2-1,t}^{\asd 3})^{\frac{1}{12}} R(f_{k_3-1,t}^{\asd 3})^{\frac{1}{12}} R(z_{k_1,k_2,k_3,t})^{\frac{1}{4}} } \\
\cos(z_{k_1,k_2,k_3,t}^* (b_{k_1,t} \otimes b_{k_2,t} \otimes b_{k_3,t})^{(1,3,5),(2,4,6)}, \\
(f_{k_1-1,t} \asd f_{k_2-1,t} \asd f_{k_3-1,t})^* (f_{k_1-1,t} \otimes f_{k_2-1,t} \otimes f_{k_3-1,t})^{(1,3,5),(2,4,6)}) \\
\cos(z_{k_1,k_2,k_3,t} z_{k_1,k_2,k_3,t}^*, ((b_{k_1,t} b_{k_1,t}^*) \otimes (b_{k_2,t} b_{k_2,t}^*) \otimes (b_{k_3,t} b_{k_3,t}^*))^{(1,3,5),(2,4,6)})^{\frac{1}{2}} \\
\cos((f_{k_1-1,t} \asd f_{k_2-1,t} \asd f_{k_3-1,t}) (f_{k_1-1,t} \asd f_{k_2-1,t} \asd f_{k_3-1,t})^*, \\ 
((f_{k_1-1,t} f_{k_1-1,t}^*) \otimes (f_{k_2-1,t} f_{k_2-1,t}^*) \otimes (f_{k_3-1,t} f_{k_3-1,t}^*))^{(1,3,5),(2,4,6)})^{\frac{1}{2}} \\
\cos((f_{k_1-1,t}^{\asd 2})^* (f_{k_1-1,t}^{\asd 2}), (f_{k_2-1,t}^{\asd 2})^* (f_{k_2-1,t}^{\asd 2}), (f_{k_3-1,t}^{\asd 2})^* (f_{k_3-1,t}^{\asd 2}))^{\frac{1}{4}} 
\end{multline*}
\begin{multline*}
+ \frac{ \Vert f_{k_1-1,t} \Vert \Vert f_{k_2-1,t} \Vert \Vert f_{k_3-1,t} \Vert \Vert b_{k_1,t} \Vert \Vert b_{k_2,t} \Vert \Vert b_{k_3,t} \Vert }{ R(f_{k_1-1,t})^{\frac{1}{4}} R(f_{k_2-1,t})^{\frac{1}{4}} R(f_{k_3-1,t})^{\frac{1}{4}} R(b_{k_1,t})^{\frac{1}{4}} R(b_{k_2,t})^{\frac{1}{4}} R(b_{k_3,t})^{\frac{1}{4}} } \\
\frac{ \Vert f_{k_1-1,t}^{\asd 2} \Vert^{\frac{1}{2}} \Vert f_{k_2-1,t}^{\asd 2} \Vert^{\frac{1}{2}} \Vert h_{k_2,k_3,t}^{\asd 2} \Vert^{\frac{1}{2}} \Vert j_{k_1,k_3-1,t}^{\asd 2} \Vert^{\frac{1}{2}} }{ R(f_{k_1-1,t}^{\asd 2})^{\frac{1}{8}} R(f_{k_2-1,t}^{\asd 2})^{\frac{1}{8}} R(h_{k_2,k_3,t}^{\asd 2})^{\frac{1}{8}} R(j_{k_1,k_3-1,t}^{\asd 2})^{\frac{1}{8}} } \\
\cos( (f_{k_1-1,t} \asd h_{k_2,k_3,t})^* (f_{k_1-1,t} \otimes b_{k_2,t} \otimes b_{k_3,t})^{(1,3,5),(2,4,6)}, \\
(f_{k_2-1,t} \asd j_{k_1,k_3-1,t})^* (b_{k_1,t} \otimes f_{k_2-1,t} \otimes f_{k_3-1,t})^{(3, 1, 5),(4, 2, 6)}) \\
\cos((f_{k_1-1,t} \asd h_{k_2,k_3,t}) (f_{k_1-1,t} \asd h_{k_2,k_3,t})^*, \\
((f_{k_1-1,t} f_{k_1-1,t}^*) \otimes (b_{k_2,t} b_{k_2,t}^*) \otimes (b_{k_3,t} b_{k_3,t}^*))^{(1,3,5),(2,4,6)})^{\frac{1}{2}} \\
\cos((f_{k_2-1,t} \asd j_{k_1,k_3-1,t}) (f_{k_2-1,t} \asd j_{k_1,k_3-1,t})^*, \\
((b_{k_1,t} b_{k_1,t}^*) \otimes (f_{k_2-1,t} f_{k_2-1,t}^*) \otimes (f_{k_3-1,t} f_{k_3-1,t}^*))^{(3, 1, 5),(4, 2, 6)})^{\frac{1}{2}} \\
\cos((f_{k_1-1,t}^{\asd 2})^* (f_{k_1-1,t}^{\asd 2}), (h_{k_2,k_3,t}^{\asd 2})^* (h_{k_2,k_3,t}^{\asd 2}))^{\frac{1}{4}} \\
\cos((f_{k_2-1,t}^{\asd 2})^* (f_{k_2-1,t}^{\asd 2}), (j_{k_1,k_3-1,t}^{\asd 2})^* (j_{k_1,k_3-1,t}^{\asd 2}))^{\frac{1}{4}} 
\end{multline*}
\begin{multline*}
+ \frac{ \Vert f_{k_1-1,t} \Vert \Vert f_{k_2-1,t} \Vert \Vert f_{k_3-1,t} \Vert \Vert b_{k_1,t} \Vert \Vert b_{k_2,t} \Vert \Vert b_{k_3,t} \Vert }{ R(f_{k_1-1,t})^{\frac{1}{4}} R(f_{k_2-1,t})^{\frac{1}{4}} R(f_{k_3-1,t})^{\frac{1}{4}} R(b_{k_1,t})^{\frac{1}{4}} R(b_{k_2,t})^{\frac{1}{4}} R(b_{k_3,t})^{\frac{1}{4}} } \\
\frac{ \Vert f_{k_1-1,t}^{\asd 2} \Vert^{\frac{1}{2}} \Vert f_{k_3-1,t}^{\asd 2} \Vert^{\frac{1}{2}} \Vert h_{k_2,k_3,t}^{\asd 2} \Vert^{\frac{1}{2}} \Vert j_{k_1,k_2-1,t}^{\asd 2} \Vert^{\frac{1}{2}} }{ R(f_{k_1-1,t}^{\asd 2})^{\frac{1}{8}} R(f_{k_3-1,t}^{\asd 2})^{\frac{1}{8}} R(h_{k_2,k_3,t}^{\asd 2})^{\frac{1}{8}} R(j_{k_1,k_2-1,t}^{\asd 2})^{\frac{1}{8}} } \\
\cos( (f_{k_1-1,t} \asd h_{k_2,k_3,t})^* (f_{k_1-1,t} \otimes b_{k_2,t} \otimes b_{k_3,t})^{(1,3,5),(2,4,6)}, \\
(f_{k_3-1,t} \asd j_{k_1,k_2-1,t})^* (b_{k_1,t} \otimes f_{k_2-1,t} \otimes f_{k_3-1,t})^{(5, 1, 3),(6, 2, 4)}) \\
\cos((f_{k_1-1,t} \asd h_{k_2,k_3,t}) (f_{k_1-1,t} \asd h_{k_2,k_3,t})^*, \\
((f_{k_1-1,t} f_{k_1-1,t}^*) \otimes (b_{k_2,t} b_{k_2,t}^*) \otimes (b_{k_3,t} b_{k_3,t}^*))^{(1,3,5),(2,4,6)})^{\frac{1}{2}} \\
\cos((f_{k_3-1,t} \asd j_{k_1,k_2-1,t}) (f_{k_3-1,t} \asd j_{k_1,k_2-1,t})^*, \\
((b_{k_1,t} b_{k_1,t}^*) \otimes (f_{k_2-1,t} f_{k_2-1,t}^*) \otimes (f_{k_3-1,t} f_{k_3-1,t}^*))^{(3, 1, 5),(4, 2, 6)})^{\frac{1}{2}} \\
\cos((f_{k_1-1,t}^{\asd 2})^* (f_{k_1-1,t}^{\asd 2}), (h_{k_2,k_3,t}^{\asd 2})^* (h_{k_2,k_3,t}^{\asd 2}))^{\frac{1}{4}} \\
\cos((f_{k_3-1,t}^{\asd 2})^* (f_{k_3-1,t}^{\asd 2}), (j_{k_1,k_2-1,t}^{\asd 2})^* (j_{k_1,k_2-1,t}^{\asd 2}))^{\frac{1}{4}} 
\end{multline*}
\begin{multline*}
+ \frac{ \Vert f_{k_1-1,t} \Vert \Vert f_{k_2-1,t} \Vert \Vert f_{k_3-1,t} \Vert \Vert b_{k_1,t} \Vert \Vert b_{k_2,t} \Vert \Vert b_{k_3,t} \Vert }{ R(f_{k_1-1,t})^{\frac{1}{4}} R(f_{k_2-1,t})^{\frac{1}{4}} R(f_{k_3-1,t})^{\frac{1}{4}} R(b_{k_1,t})^{\frac{1}{4}} R(b_{k_2,t})^{\frac{1}{4}} R(b_{k_3,t})^{\frac{1}{4}} } \\
\frac{ \Vert f_{k_1-1,t}^{\asd 2} \Vert^{\frac{1}{2}} \Vert f_{k_2-1,t}^{\asd 2} \Vert^{\frac{1}{2}} \Vert h_{k_1,k_3,t}^{\asd 2} \Vert^{\frac{1}{2}} \Vert j_{k_2,k_3-1,t}^{\asd 2} \Vert^{\frac{1}{2}} }{ R(f_{k_1-1,t}^{\asd 2})^{\frac{1}{8}} R(f_{k_2-1,t}^{\asd 2})^{\frac{1}{8}} R(h_{k_1,k_3,t}^{\asd 2})^{\frac{1}{8}} R(j_{k_2,k_3-1,t}^{\asd 2})^{\frac{1}{8}} } \\
\cos( (f_{k_2-1,t} \asd h_{k_1,k_3,t})^* (b_{k_1,t} \otimes f_{k_2-1,t} \otimes b_{k_3,t})^{(3,1,5),(4,2,6)}, \\
(f_{k_1-1,t} \asd j_{k_2,k_3-1,t})^* (f_{k_1-1,t} \otimes b_{k_2,t} \otimes f_{k_3-1,t})^{(1, 3, 5),(2, 4, 6)}) \\
\cos((f_{k_2-1,t} \asd h_{k_1,k_3,t}) (f_{k_2-1,t} \asd h_{k_1,k_3,t})^*, \\
((b_{k_1,t} b_{k_1,t}^*) \otimes (f_{k_2-1,t} f_{k_2-1,t}^*) \otimes (b_{k_3,t} b_{k_3,t}^*))^{(1,3,5),(2,4,6)})^{\frac{1}{2}} \\
\cos((f_{k_1-1,t} \asd j_{k_2,k_3-1,t}) (f_{k_1-1,t} \asd j_{k_2,k_3-1,t})^*, \\
((f_{k_1-1,t} f_{k_1-1,t}^*) \otimes (b_{k_2,t} b_{k_2,t}^*) \otimes (f_{k_3-1,t} f_{k_3-1,t}^*))^{(3, 1, 5),(4, 2, 6)})^{\frac{1}{2}} \\
\cos((f_{k_2-1,t}^{\asd 2})^* (f_{k_2-1,t}^{\asd 2}), (h_{k_1,k_3,t}^{\asd 2})^* (h_{k_1,k_3,t}^{\asd 2}))^{\frac{1}{4}} \\
\cos((f_{k_1-1,t}^{\asd 2})^* (f_{k_1-1,t}^{\asd 2}), (j_{k_2,k_3-1,t}^{\asd 2})^* (j_{k_2,k_3-1,t}^{\asd 2}))^{\frac{1}{4}} 
\end{multline*}
\begin{multline*}
+ \frac{ \Vert f_{k_1-1,t} \Vert \Vert f_{k_2-1,t} \Vert \Vert f_{k_3-1,t} \Vert \Vert b_{k_1,t} \Vert \Vert b_{k_2,t} \Vert \Vert b_{k_3,t} \Vert }{ R(f_{k_1-1,t})^{\frac{1}{4}} R(f_{k_2-1,t})^{\frac{1}{4}} R(f_{k_3-1,t})^{\frac{1}{4}} R(b_{k_1,t})^{\frac{1}{4}} R(b_{k_2,t})^{\frac{1}{4}} R(b_{k_3,t})^{\frac{1}{4}} } \\
\frac{ \Vert b_{k_3,t}^{\asd 2} \Vert^{\frac{1}{2}} \Vert j_{k_1,k_2-1,t}^{\asd 2} \Vert^{\frac{1}{2}} \Vert f_{k_1-1,t}^{\asd 2} \Vert^{\frac{1}{2}} \Vert j_{k_2,k_3-1,t}^{\asd 2} \Vert^{\frac{1}{2}} }{ R(b_{k_3,t}^{\asd 2})^{\frac{1}{8}} R(j_{k_1,k_2-1,t}^{\asd 2})^{\frac{1}{8}} R(f_{k_1-1,t}^{\asd 2})^{\frac{1}{8}} R(j_{k_2,k_3-1,t}^{\asd 2})^{\frac{1}{8}} } \\
\cos((b_{k_3,t} \asd j_{k_1,k_2-1,t})^* (b_{k_1,t} \otimes f_{k_2-1,t} \otimes b_{k_3,t})^{(5,1,3),(6,2,4)}, \\
(f_{k_1-1,t} \asd j_{k_2,k_3-1,t})^* (f_{k_1-1,t} \otimes b_{k_2,t} \otimes f_{k_3-1,t})^{(1, 3, 5),(2, 4, 6)}) \\
\cos((b_{k_3,t} \asd j_{k_1,k_2-1,t}) (b_{k_3,t} \asd j_{k_1,k_2-1,t})^*, \\
((b_{k_1,t} b_{k_1,t}^*) \otimes (f_{k_2-1,t} f_{k_2-1,t}^*) \otimes (b_{k_3,t} b_{k_3,t}^*))^{(5,1,3),(6,2,4)})^{\frac{1}{2}} \\
\cos((f_{k_1-1,t} \asd j_{k_2,k_3-1,t}) (f_{k_1-1,t} \asd j_{k_2,k_3-1,t})^*, \\
((f_{k_1-1,t} f_{k_1-1,t}^*) \otimes (b_{k_2,t} b_{k_2,t}^*) \otimes (f_{k_3-1,t} f_{k_3-1,t}^*))^{(1, 3, 5),(2, 4, 6)})^{\frac{1}{2}} \\
\cos((b_{k_3,t}^{\asd 2})^* (b_{k_3,t}^{\asd 2}), (j_{k_1,k_2-1,t}^{\asd 2})^* (j_{k_1,k_2-1,t}^{\asd 2}))^{\frac{1}{4}} \\
\cos((f_{k_1-1,t}^{\asd 2})^* (f_{k_1-1,t}^{\asd 2}), (j_{k_2,k_3-1,t}^{\asd 2})^* (j_{k_2,k_3-1,t}^{\asd 2}))^{\frac{1}{4}} 
\end{multline*}
The rest follows analogously.
\end{proof}

\begin{proof}[Proof of Theorem~\ref{thm:taylor_coeffs}]
The result is immediate from Proposition~\ref{prop:first_order_term}, Proposition~\ref{prop:second_order_term} and Proposition~\ref{prop:third_order_term} after rewriting the formulas in terms of the Gram matrices and the other tensors consisting of inner products.
\end{proof}

\end{document}